\algrenewcommand\textproc{}
\newtheorem{theorem}{Theorem}[section]
\newtheorem{example}{Example}[section]
\newtheorem{definition}{Definition}[section]
\newtheorem{problem}{Problem}[section]
\def\ps@pprintTitle{%
    \let\@oddhead\@empty
    \let\@evenhead\@empty
    \def\@oddfoot{\footnotesize\itshape
         {} \hfill\today}%
    \let\@evenfoot\@oddfoot
    }
\begin{document}

\begin{frontmatter}



\title{An Extension-based Approach for Computing and Verifying Preferences in Abstract Argumentation}


\author[1]{Quratul-ain Mahesar\corref{cor1}}
\ead{q.mahesar@hud.ac.uk}
\cortext[cor1]{Corresponding author}
\author[2]{Nir Oren}
\ead{n.oren@abdn.ac.uk}
\author[2]{Wamberto W. Vasconcelos}
\ead{w.w.vasconcelos@abdn.ac.uk}

\affiliation[1]{organization={School of Computing and Engineering, University of Huddersfield},
            country={UK}}

\affiliation[2]{organization={Department of Computing Science, University of Aberdeen},
            country={UK}}

\begin{abstract}
We present an extension-based approach for computing and verifying preferences in an abstract argumentation system. Although numerous argumentation semantics have been developed previously for identifying acceptable sets of arguments from an argumentation framework, there is a lack of justification behind their acceptability based on implicit argument preferences. Preference-based argumentation frameworks allow one to determine what arguments are justified given a set of preferences. Our research considers the inverse of the standard reasoning problem, i.e., given an abstract argumentation framework and a set of justified arguments, we compute what the possible preferences over arguments are. Furthermore, there is a need to verify (i.e., assess) that the computed preferences would lead to the acceptable sets of arguments. This paper presents a novel approach and algorithm for exhaustively computing and enumerating all possible sets of preferences (restricted to three identified cases) for a conflict-free set of arguments in an abstract argumentation framework. We prove the soundness, completeness and termination of the algorithm. The research establishes that preferences are determined using an extension-based approach after the evaluation phase (acceptability of arguments) rather than stated beforehand. In this work, we focus our research study on grounded, preferred and stable semantics. We show that the complexity of computing sets of preferences is exponential in the number of arguments, and thus, describe an approximate approach and algorithm to compute the preferences. Furthermore, we present novel algorithms for verifying (i.e., assessing) the computed preferences. We provide details of the implementation of the algorithms (source code has been made available), various experiments performed to evaluate the algorithms and the analysis of the results.
\end{abstract}



\begin{keyword}
Abstract Argumentation \sep Preferences \sep Reasoning



\end{keyword}

\end{frontmatter}



\section{Introduction}
Preferences play a central part in decision making and have been extensively studied in various disciplines such as economy, operations research, psychology and philosophy~\cite{Pigozzi2016}. Preferences are used in many areas of artificial intelligence including planning, scheduling, multi-agent systems, combinatorial auctions and game playing~\cite{Walsh07}. Preference elicitation is a very difficult task and automating the process of preference extraction can be very difficult. The complexity of eliciting preferences and representational questions like dealing with uncertainty has remained a very active research area~\cite{Konczak05votingprocedures,PINI20111272,Walsh07}. Preference elicitation plays a vital role in decision support systems~\cite{Sprague:1993:DSS:167095,Mahesar17} and recommender systems~\cite{Ricci:2010:RSH:1941884}, where the most suitable decision(s) or recommendation(s) can be identified and justified with the help of preferences. Furthermore, elicited preferences can be utilized in dialogue strategies~\cite{ki-Thimm14,ijcai-RienstraTO13}, for instance in computational persuasion~\cite{argcom-Hunter18,HadouxHP23} or negotiation~\cite{RahwanRJMPS03} -- where an agent may have the capability of inferring preferences and reach her goal if (s)he enforces at least one of several desired sets of arguments with the application of preferences. The inferred preferences can be exploited in optimizing the choice of move in persuasion dialogues for behaviour change as well as in negotiation dialogues to reach agreement. 

Argumentation has gained an increasing popularity in Artificial Intelligence (AI). It has been widely used for handling inconsistent knowledge bases~\cite{BESNARD2001203,Garcia:2004,SIMARI1992125}, and dealing with uncertainty in decision making~\cite{AMGOUD2009413,Bonet:1996,Muller:2012}. Logic-based abstract argumentation~\cite{Dung95onthe} provides a formal representation of preferences. An abstract argumentation framework is a directed graph consisting of nodes that represent unique atomic arguments and directed edges that represent an attack between two arguments. This visual representation of an argumentation framework as a directed graph is also known as an argumentation graph. Acceptable sets of arguments called extensions for an argumentation framework can be computed based on various acceptability semantics~\cite{Dung95onthe}. 

Arguments can have different strengths, e.g., an argument relies on more certain or important information than another. This has led to the introduction of preference-based argumentation frameworks consisting of preference relations between arguments~\cite{Amgoud:1998,AMGOUD2009413,MODGIL2009901,Prakken1997ArgumentBasedEL}. Furthermore, preferences are taken into account in the evaluation of arguments at the semantic level, which is also known as preference-based acceptability~\cite{Amgoud2002}. The basic idea is to accept undefeated arguments and also arguments that are preferred to their attacking arguments, as these arguments can defend themselves against their attacking arguments. 

Preference-based argumentation framework (PAF)~\cite{Amgoud:1998,AMGOUD2014585} allows one to determine what arguments are justified given a set of preferences. In our research, we consider the inverse of the standard reasoning problem, i.e., given an abstract argumentation framework and a set of justified arguments, we compute what the possible preferences over arguments are.
Although a preference-based argumentation framework (PAF) has been previously studied to represent an abstract argumentation framework~\cite{KaciT08}, there seems to be no previous work on automatically computing implicit argument preferences in an abstract argumentation framework using an extension-based approach. Furthermore, there have been no attempts to perform an exhaustive search for all possible preferences, and their explicit enumeration. 

There are two aims of our research study. The first aim of our research is to exhaustively compute all possible sets of argument preferences (restricted to three identified cases) that hold for a given set of conflict-free arguments, i.e., extension, in an abstract argumentation framework. In this work, we focus our research study on grounded, preferred and stable semantics. We present a novel algorithm to perform this computation. We show that the complexity of computing sets of preferences is exponential in the number of arguments, and thus, describe an approximate approach and algorithm to compute the preferences that is scalable. The second aim of our research is to verify (i.e., assess) that all the computed sets of preferences are correct, i.e., each set of preferences when applied to a given abstract argumentation framework results in the original input extension under a given semantics. We present novel algorithms to perform this verification. All algorithms have been implemented. We have build a complete system for computing and verifying preferences, performed various experiments to evaluate the algorithms and analyze the results.

The current paper extends and improves our previous work~\cite{prima-Mahesar18}. The main contributions of our work are as follows:
\begin{enumerate}
	\item An extension-based approach is employed for computing and verifying argument preferences. Thus, preferences specifically justify the reasoning behind the acceptability of the arguments in an extension.
	\item Preferences are computed at the end of the argumentation process and need not be stated in advance.
	\item Exhaustive search is performed to compute all possible sets of preferences.
	\item The approach for computing preferences operates on a conflict-free extension as input which is the minimal acceptability semantic, therefore, it can take as input most of the extensions given in the literature and stated in this paper. 
	\item We present novel algorithms for computing preferences (and additional algorithms for filtering preferences).
    \item We present a novel approximate algorithm for computing preferences.
	\item We present novel algorithms for verifying preferences.
	\item Reference implementation of our algorithms is provided\footnote{The source code of the implementation of all algorithms is available at \url{https://github.com/Quratul-ain/AAF_Preferences}}.
	\item Experimental setup (including data sets in the Appendix\footnote{All data sets are included in the Appendix.}), various experiments that have been performed to evaluate the algorithms and analysis of the results obtained is presented.
\end{enumerate} 
In comparison to~\cite{prima-Mahesar18}, the contributions $6$, $7$, $8$ and $9$ stated above are new.

The remainder of this paper is structured as follows. In Section~\ref{sec:related-work}, we present related work. Section~\ref{sec:background} presents the preliminaries that include the background on abstract argumentation framework and acceptability semantics for acceptable set of arguments also known as extensions. This is followed by background on preference-based argumentation framework. In Section~\ref{sec:compute-preferences}, we present our approach and an algorithm for computing all possible sets of preferences for a given extension and abstract argumentation framework, and we prove the soundness, completeness and termination of the algorithm. Additionally, we present algorithms for filtering preferences. Furthermore, we present an approximate approach and algorithm for computing a set of preferences. In Section~\ref{sec:verify-prefs}, we present our approach and algorithms for verifying all computed sets of preferences for a given extension and abstract argumentation framework. In Section~\ref{sec:imp-eval}, we present the implementation details and evaluation. Finally, we conclude and suggest future work in Section~\ref{sec:conclusion}.

\section{Related Work}
\label{sec:related-work}

Several variations of argumentation frameworks with preferences have been studied previously. Preference-based argumentation framework (PAF)~\cite{Amgoud:1998,AMGOUD2014585} is an extension of a standard argumentation framework~\cite{Dung95onthe} consisting of preference relations between arguments. The idea is to accept undefeated arguments and also arguments that are preferred to their attacking arguments. Value-based argumentation framework (VAF)~\cite{bench-capon03-persuasion} extends a standard argumentation framework to take into account values promoted by arguments. Preferences over arguments are determined by the values the arguments promote or support. The idea is to accept undefeated arguments and also arguments which promote values that are more important or preferred to the values promoted by their attacking arguments. Furthermore, value-based argumentation frameworks (VAF) have been extended to take into account the possibility that arguments may support multiple values, and therefore, various types of preferences over values could be considered in order to deal with real world situations~\cite{KaciT08}. Another variation is an extended argumentation framework (EAF)~\cite{MODGIL2009901} that considers the case where arguments can express preferences between other arguments. 

Further studies on preference-based argumentation frameworks led to the observation that ignoring the attacks where the attacked argument is stronger than the attacking argument does not always give intuitive results~\cite{MODGIL2009901}, since the resulting extension violates the basic condition imposed on acceptability semantics, which is the conflict-freeness of extensions, thus violating the rationality postulates given in~\cite{CAMINADA2007286}. This problem was later resolved in a new preference-based argumentation framework that guarantees conflict-free extensions with a symmetric conflict relation~\cite{Amgoud2011,MODGIL2009901}. The preference relation is then used to determine the direction of the defeat relation between the two arguments. Furthermore,  preference relations have been used to refine the results of a framework by comparing its extensions~\cite{AMGOUD2014585}. 

Although our work is based on abstract argumentation framework, several variations of structured argumentation frameworks with preferences have been studied previously. ABA\textsuperscript{+}~\cite{CyrasT16} generalises preference-based argumentation framework (PAF) \cite{AMGOUD2014585} that introduced the concept of attack reversal from less preferred arguments. Another extension of the ABA framework with preferences is (p\_ABA)~\cite{Wakaki17} that employs preferences on the extension level to discriminate among extensions. ASPIC\textsuperscript{+}~\cite{ModgilP14} encompasses many key elements of structured argumentation such as strict and defeasible rules, general contrariness mapping and various forms of attacks as well as preferences. DeLP~\cite{Garcia:2004}, an early version of preference-based argumentation framework~\cite{AMGOUD2014585}, and Deductive Argumentation~\cite{BesnardH14}, use preferences to discard attacks from arguments less preferred than the attacked arguments.

While the above argumentation frameworks allow handling of preferences over arguments, the main limitation is that preferences need to be stated in advance. More recently, in ~\cite{prima-Mahesar20} we have extended our work~\cite{prima-Mahesar18} in abstract argumentation to structured argumentation, i.e., assumption-based argumentation frameworks ABA~\cite{aba_Dung2009,aba_Toni14} and ABA\textsuperscript{+}~\cite{CyrasT16}, where preferences are computed at the assumption level rather than abstract arguments. Another recent extension is presented in~\cite{ijar-HungH21}, where hidden argument preferences of a group of agents are revealed using answer sets. Furthermore, to the best of our knowledge, no previous work has been performed for the verification of the newly computed preferences along with the implementation and experimental analysis of both the computation and verification of preferences, that is presented in this extended version of our paper~\cite{prima-Mahesar18}.

\section{Preliminaries}
\label{sec:background}

An \textit{argumentation framework} is a set of arguments and a binary attack relation among them. Given an argumentation framework, argumentation theory allows to identify the sets of arguments that can survive the conflicts expressed in the framework. In this work, we consider finite abstract argumentation frameworks.

\begin{definition}{(Abstract Argumentation Framework~\cite{Dung95onthe}):}
	\label{def:arg-framework}
	An abstract argumentation framework (AAF) is a pair $\mathit{AAF} = (\mathcal{A}, \mathcal{R})$, where $\mathcal{A}$ is a set of arguments and $\mathcal{R}$ is an attack relation $(\mathcal{R} \subseteq \mathcal{A} \times \mathcal{A})$. The notation $(A,B) \in \mathcal{R}$ where $A,B \in \mathcal{A}$ denotes that \textit{A attacks B}.
\end{definition}

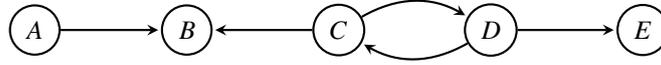
\begin{figure}[!htb]
	\centering
	\begin{tikzpicture}[
	> = stealth, 
	shorten > = 1pt, 
	auto,
	node distance = 2cm, 
	thick 
	]
	\tikzstyle{every state}=[
	draw = black,
	thick,
	fill = white,
	minimum size = 4mm
	]
	\node[state] (A) {$A$};
	\node[state] (B) [right of=A] {$B$};
	\node[state] (C) [right of=B] {$C$};
	\node[state] (D) [right of=C] {$D$};
	\node[state] (E) [right of=D] {$E$};
	
	\path[->] (A) edge node {} (B);
	\path[->] (C) edge node {} (B);
	\path[->] (C) edge[bend left] node {} (D);        
	\path[->] (D) edge[bend left] node {} (C);     
	\path[->] (D) edge node {} (E);
	\end{tikzpicture}
	\caption{Example abstract argumentation framework $\mathit{AAF}_1$}
	\label{fig:abst_arg_graph1}
\end{figure}

An abstract argumentation framework is a directed graph where the arguments are represented as nodes and the attack relations as directed edges. An example abstract argumentation framework $(\mathcal{A}, \mathcal{R})$ is shown in Figure~\ref{fig:abst_arg_graph1}, where $\mathcal{A} = \lbrace A, B, C , D, E \rbrace$ and $\mathcal{R} = \lbrace (A, B), (C, B), (C, D), (D, C), (D, E) \rbrace$, which means that $A$ attacks $B$, $C$ attacks both $B$ and $D$, and $D$ attacks both $C$ and $E$.

Dung~\cite{Dung95onthe} originally introduced an extension approach to define the acceptability of arguments in an argumentation framework. An extension is a subset of $\mathcal{A}$ that represents the set of arguments that can be accepted together. Dung's semantics are based on a \textit{conflict-free} set of arguments, i.e., a set should not be self-contradictory nor include arguments that attack each other. This is defined formally as follows.
 
\begin{definition}{(Conflict-freeness):}
	Let $(\mathcal{A}, \mathcal{R})$ be an argumentation framework. The set $\mathcal{E} \subseteq \mathcal{A}$ is \textit{conflict-free} if and only if there are no $A, B \in \mathcal{E}$ such that $(A,B) \in \mathcal{R}$
\end{definition}

The minimal requirement for an extension to be acceptable is \textit{conflict-freeness}. Many other acceptability semantics have been introduced in the literature, and from these the most common are given as follows.

\begin{definition}{(Extensions):}
	\label{def:extensions}
	Let $\mathit{AAF} = (\mathcal{A}, \mathcal{R})$ be an argumentation framework, and set $\mathcal{E} \subseteq \mathcal{A}$ and $A,B,C \in \mathcal{A}$
	\begin{itemize}
		\item $\mathcal{E}$ is admissible iff it is conflict free and defends all its arguments. $\mathcal{E}$ defends $A$ iff for every argument $B \in \mathcal{A}$, if we have $(B,A) \in \mathcal{R}$ then there exists $C \in \mathcal{E}$ such that $(C,B) \in \mathcal{R}$.
		\item $\mathcal{E}$ is a complete extension iff $\mathcal{E}$ is an admissible set which contains all the arguments it defends.
		\item $\mathcal{E}$ is a preferred extension iff it is a maximal (with respect to set inclusion) admissible set.
		\item $\mathcal{E}$ is a stable extension iff it is conflict-free and for all $A \in \mathcal{A} \setminus \mathcal{E}$, there exists an argument $B \in \mathcal{E}$ such that $(B,A) \in \mathcal{R}$.
		\item $\mathcal{E}$ is a grounded extension iff $\mathcal{E}$ is a minimal (for set inclusion) complete extension.
	\end{itemize}
\end{definition}

Every argumentation framework has at least one admissible set (the empty set), exactly one grounded extension, one or more complete extensions, one or more preferred extensions, and zero or more stable extensions. The following example shows the extensions for the abstract argumentation framework of Figure~\ref{fig:abst_arg_graph1}.

\begin{example}
	\label{ex:arg-example1}
	Given the abstract argumentation framework of Figure~\ref{fig:abst_arg_graph1}, then we compute its extensions as follows:
	\begin{itemize}
		\item Conflict free: $\{A,C,E\}, \{A, D\}, \{B, D\}, \{A,C\}, \{A,E\}, \{B,E\}, \{C,E\}, \{A\}, \{B\},$
		
		$\{C\},\{D\}, \{E\}, \emptyset$
		\item Admissible: $\{A,C,E\}, \{A,C\}, \{A, D\}, \{C,E\}, \{A\}, \{C\},\{D\}, \emptyset$
		\item Complete: $\{A,C,E\}, \{A,D\}, \{A\}$
		\item Preferred: $\{A,C,E\}, \{A, D\}$
		\item Stable: $\{A,C,E\}, \{A, D\}$
		\item Grounded: $\{A\}$
	\end{itemize}
\end{example}

While an abstract argumentation framework captures the basic interactions between arguments, it does not consider factors such as argument strength, i.e., arguments may not necessarily have the same strengths~\cite{BENFERHAT1993411,Cayrol-Royer-Saurel,SIMARI1992125}. Consequently, preferences over arguments can be added to the argumentation framework and taken into account in order to evaluate arguments~\cite{AMGOUD2009413,MODGIL2009901,Prakken1997ArgumentBasedEL}, which is demonstrated in the following example~\cite{Amgoud2002}.

\begin{example}
	Let $(\mathcal{A}, \mathcal{R})$ be an argumentation framework with $\mathcal{A} = \lbrace A, B, C \rbrace$ and $\mathcal{R} = \lbrace (A,B), (B,C) \rbrace$. The set of acceptable argument is $\lbrace A,C \rbrace$. However, suppose argument $B$ is preferred to $A$ and $C$. How can we combine the preference over arguments and the attack relation to decide which arguments are acceptable? We can say that, since $B$ is preferred to $A$, it can defend itself from the attack of $A$. This would lead us to accepting $B$ and rejecting $C$.
\end{example}
Dung's framework has been extended by introducing preference relations into argumentation systems, which is known as a preference-based argumentation framework (PAF)~\cite{Amgoud:1998}. A PAF extends an abstract argumentation framework to account for preferences over arguments. The attack relation in a preference-based argumentation framework is called defeat, and is denoted by $\mathit{Def}$.

\begin{definition}{(Preference-based Argumentation Framework (PAF)~\cite{Amgoud:1998}):}
	\label{def:paf}
A preference-based argumentation framework is a triple $(\mathcal{A}, \mathit{Def}, \geq)$ where $\mathcal{A}$ is a set of arguments, $\mathit{Def}$ is the defeat binary relation on $\mathcal{A}$, and $\geq$ is a (partial or total) pre-ordering defined on $\mathcal{A} \times \mathcal{A}$. The notation $(A, B) \in \mathit{Def}$ means that argument $A$ defeats argument $B$. 
\end{definition}

The notation $A \geq B$ means that argument $A$ is at least as preferred as $B$ and the relation $>$ is the strict counterpart of $\geq$.

\begin{definition}
\label{def:pref-application}
    	Let there be an abstract argumentation framework. Preferences could be applied in two ways~\cite{AMGOUD2014585}: 
	\begin{enumerate}
	    \item one way is to apply preferences at the time of argument acceptability (semantic level); and
	    \item second way is to compute all preferred extensions and filter them by the application of the preferences.
	\end{enumerate}  
\end{definition}

\begin{figure}[!htb]
	\centering
	\begin{tikzpicture}[
	> = stealth, 
	shorten > = 1pt, 
	auto,
	node distance = 2cm, 
	thick 
	]
	
	\tikzstyle{every state}=[
	draw = black,
	thick,
	fill = white,
	minimum size = 4mm
	]

	\node[state] (A) {$A$};
	\node[state] (B) [right of=A] {$B$};
	\node[state] (C) [right of=B] {$C$};
	\node[state] (D) [right of=C] {$D$};
	
	\path[->] (A) edge[bend left] node {} (B);
	\path[->] (B) edge[bend left] node {} (A);
	\path[->] (B) edge node {} (C);
	\path[->] (C) edge node {} (D);     
	
	\end{tikzpicture}
	\caption{Example abstract argumentation framework $\mathit{AAF}_2$}\label{fig:pref_arg_graph}
\end{figure}
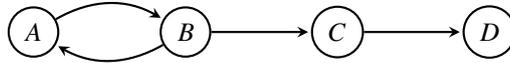

\begin{example}
	Let there be an abstract argumentation framework of Figure~\ref{fig:pref_arg_graph}. 	
	By using the first method given in Definition~\ref{def:pref-application}, if we assume $\{A>B, C>D\}$ is the set of preferences between arguments, then we get a single extension $\mathcal{E} = \{ A,C\}$. Now, by using the second method, we first compute all preferred extensions $\{ A,C\}, \{B, D\}$. These extensions could now be filtered by the application of the set of preferences $\{A>B, C>D\}$ which suggest $\{ A,C\}$ to be better than $\{ B,D\}$.
\end{example}

In this work, we use the first method of applying preferences at the time of argument acceptability (semantic level).

\section{Computing Preferences}
\label{sec:compute-preferences}

A preference-based argumentation framework (PAF) can represent an abstract argumentation framework~\cite{KaciT08}:
\begin{definition}{(PAF representing an AAF)}
	A preference-based argumentation framework $(\mathcal{A}, \mathit{Def}, \geq)$ represents an abstract argumentation framework $(\mathcal{A}, \mathcal{R})$ iff $\forall A,B \in \mathcal{A}$, it is the case that $(A,B) \in \mathcal{R}$ iff $(A,B)\in \mathit{Def}$ and it is not the case that $B>A$.
\end{definition}

It has been previously shown that each preference-based argumentation framework represents one abstract argumentation framework, however each abstract argumentation framework can be represented by various preference-based argumentation frameworks~\cite{KaciT08}. Following this, we introduce an extension-based approach for computing sets of preferences for a subset of conflict-free arguments in an abstract argumentation framework. For any two arguments $A$ and $B$ in an argumentation framework, we use the strict preference relation $A>B$ to denote that $A$ is strictly preferred to $B$, i.e., $A$ is of greater strength than $B$, and we use the preference relation $A=B$ to denote that $A$ and $B$ are of equal strength or preference. We list below the three cases we have identified for which the preferences are computed for a given conflict-free extension $\mathcal{E}$ in an abstract argumentation framework $\mathit{AAF} = \langle \mathcal{A}, \mathcal{R} \rangle$. The motivation behind the identified three cases is that we want to find out why a set of arguments are in an extension of the abstract argumentation framework based on the relationship of attack relations between the arguments and their strengths (i.e., preferences between the arguments). The three identified cases are:
\begin{itemize}
	\item \textbf{Case 1:} Suppose $\alpha, \beta \in \mathcal{A}$ and $\alpha \in \mathcal{E}$, $\beta \notin \mathcal{E}$ such that 
	$\alpha$ is attacked by argument $\beta$, and $\alpha$ is not defended by any other argument (not equal to $\alpha$) in the extension. We have the following preferences for all such $\alpha$ and $\beta$: $\alpha > \beta$. 
	\item \textbf{Case 2:} Suppose $\alpha,\beta \in \mathcal{A}$ and $\alpha \in \mathcal{E}$, $\beta \notin \mathcal{E}$, and suppose $\alpha$ attacks argument $\beta$ and $\beta$ does not attack $\alpha$. We have the following preferences for all such $\alpha$ and $\beta$: $\beta \ngtr \alpha$, i.e., $(\alpha > \beta) \vee (\alpha = \beta)$.
	\item \textbf{Case 3:} Suppose $\alpha, \beta, \gamma \in \mathcal{A}$ and $\alpha, \gamma \in \mathcal{E}$, $\beta \notin \mathcal{E}$ where $\alpha$, $\beta$ and $\gamma$ are different arguments, such that, $\alpha$ is attacked by argument $\beta$ but defended by argument $\gamma$ in the extension, i.e., $\gamma$ attacks $\beta$. We have the following preferences for all such $\alpha$ and $\beta$: $(\alpha > \beta) \vee (\alpha = \beta) \vee (\beta > \alpha)$.
\end{itemize}

In other words, we want to determine and compute the preferences between arguments that will ensure that a set of desired arguments are in an extension of the abstract argumentation framework by establishing that:
\begin{itemize}
\item the attacks from arguments that are not in the extension to the arguments that are in the extension that are not defended by any unattacked arguments in the extension do not succeed, as given in Case 1;
\item the attacks from arguments that are in the extension to the arguments that are not in the extension always succeed, as given in Case 2; and
\item the attacks from arguments that are not in the extension to the arguments that are in the extension that are defended by any unattacked arguments in the extension may or may not succeed, as given in Case 3.
\end{itemize}

A worked example of how preferences are computed using the above three cases is as follows:
\begin{example}
	Let there be the abstract argumentation framework $(\mathcal{A}, \mathcal{R})$ of Figure~\ref{fig:abst_arg_graph1}, where $\mathcal{A} = \lbrace A, B, C , D, E \rbrace$ and $\mathcal{R} = \lbrace (A, B), (C, B), (C, D),$ $(D, C), (D, E) \rbrace$. We consider the conflict-free extensions $\mathcal{E}_1 = \{A,C,E\}$, and $\mathcal{E}_2 =\{A, D\}$ for computing preferences. For the extension $\mathcal{E}_1 = \{A,C,E\}$, we have the following preferences for each case:
	\begin{itemize}
		\item \textbf{Case 1:} $(C>D)$
		\item \textbf{Case 2:} $((A>B) \vee (A=B))  \wedge ((C>B) \vee (C=B))$
		\item \textbf{Case 3:} $ (E>D) \vee (E=D) \vee (D>E)$
	\end{itemize}
	
	Combining the preferences from the three cases we get $(C>D) \wedge (((A>B) \vee (A=B))  \wedge ((C>B) \vee (C=B)) ) \wedge ((E>D) \vee (E=D) \vee (D>E))$, which gives us the following sets of preferences:
	  \begin{gather*}
	  	\{C>D, A>B, C>B, E>D\} \\
	  	\{C>D, A>B, C>B, E=D\} \\
	  	\{C>D, A>B, C>B, D>E\} \\
	  	\{C>D, A>B, C=B, E>D\} \\
	  	\{C>D, A>B, C=B, E=D\} \\
	  	\{C>D, A>B, C=B, D>E\} \\ 
	    \{C>D, A=B, C>B, E>D\} \\
	    \{C>D, A=B, C>B, E=D\} \\
	    \{C>D, A=B, C>B, D>E\} \\ 
	    \{C>D, A=B, C=B, E>D\} \\
	    \{C>D, A=B, C=B, E=D\} \\
	    \{C>D, A=B, C=B, D>E\} 	  
	  \end{gather*}

\noindent For the extension $\mathcal{E}_2 = \{A,D\}$, we have the following preferences for each case:
		\begin{itemize}
		\item \textbf{Case 1:} $(D>C)$
		\item \textbf{Case 2:} $((A>B) \vee (A=B)) \wedge ( (D>E) \vee (D=E) )$
		\item \textbf{Case 3:} $\emptyset$ 
	\end{itemize}

	Combining the preferences from the three cases we get $(D>C) \wedge ( ((A>B) \vee (A=B)) \wedge ( (D>E) \vee (D=E)) ) \wedge \emptyset$, which gives us the following sets of preferences:
	\begin{gather*}
	 \{ D>C, A>B, D>E \}\\
	 \{ D>C, A>B, D=E \}\\
	\{ D>C, A=B, D>E \}\\
	\{ D>C, A=B, D=E \} 
	\end{gather*}
\end{example}

\subsection{Algorithms for Computing Preferences}
\label{sec:alg-comp-prefs}

As stated previously, in our research study, we consider the inverse of the standard reasoning problem. We now state the problem of computing preferences precisely as follows. 

\begin{problem}
\label{prob:computing-preferences}
    Given an abstract argumentation framework $AAF$ and a single set of justified arguments, i.e., an extension $\mathcal{E}$ under a given semantics (grounded, preferred or stable), we compute what the possible preferences over arguments are, i.e., a set of sets of preferences $\mathit{PrefSet}$, such that each set of preferences $\mathit{Prefs} \in \mathit{PrefSet}$ when applied to the AAF results in the single input extension $\mathcal{E}$ under a given semantics (grounded, preferred or stable)\footnote{Please note, the application of preferences on an AAF would restrict the result to a single extension $\mathcal{E}$ under multi-extension semantics (i.e., preferred and stable) and single extension semantics (i.e., grounded).}.
\end{problem}

To solve Problem~\ref{prob:computing-preferences}, we present Algorithm~\ref{alg:computing-preferences} that performs the computation of preferences over arguments with the help of Algorithms~\ref{alg:computing-preferences-case1}, \ref{alg:computing-preferences-case2}, and \ref{alg:computing-preferences-case3}. We will now present and describe all Algorithms~\ref{alg:computing-preferences}, \ref{alg:computing-preferences-case1},  \ref{alg:computing-preferences-case2}, and \ref{alg:computing-preferences-case3}. Algorithm~\ref{alg:computing-preferences} exhaustively computes all possible sets of preferences for a given input extension (consisting of conflict-free arguments) in an abstract argumentation framework (AAF) using the above three cases. The input of Algorithm $1$ is a tuple $\langle AAF, \mathcal{E}\rangle$, where:
	\begin{itemize}
		\item Abstract argumentation framework $AAF = \langle \mathcal{A}, \mathcal{R} \rangle$, $\mathcal{A}$ denotes the set of all arguments in the $AAF$, and $\mathcal{R}$ denotes the attack relation between arguments.
		\item Extension $\mathcal{E}$ consists of a finite number of conflict-free arguments such that $\mathcal{E} \subseteq \mathcal{A}$.
	\end{itemize}
The algorithm computes and outputs a set consisting of finite sets of preferences, where each set of preferences is represented as $\mathit{Prefs} = \lbrace A > B, B = C, .... \rbrace$ such that $\lbrace A,B,C,... \rbrace \subseteq \mathcal{A}$. The following are the main steps in Algorithm~\ref{alg:computing-preferences}:
\begin{itemize}
	\item Line $2$: Invoke Algorithm~\ref{alg:computing-preferences-case1} with inputs $AAF$ and $\mathcal{E}$, to compute case $1$ set of preferences $\mathit{Prefs}$.
	\item Line $3$: Invoke Algorithm~\ref{alg:computing-preferences-case2} with inputs $AAF$, $\mathcal{E}$ and $\mathit{Prefs}$, to compute case $2$ preferences and combine them with case $1$ preferences. This results in $\mathit{PrefSet}$ which is a set of sets of preferences.
	\item Line $4$: Invoke Algorithm~\ref{alg:computing-preferences-case3} with inputs $AAF$, $\mathcal{E}$ and $\mathit{PrefSet}$, to compute case $3$ preferences and combine them with $\mathit{Prefs}$ and $\mathit{PrefSet}$. This results in an updated final $\mathit{PrefSet}$ which is a set of sets of preferences containing all three cases of preferences combined together.
	\item Line $5$: Return the final $\mathit{PrefSet}$.
\end{itemize}

\begin{algorithm}[h]
	\begin{algorithmic}[1]
		\Require $\mathit{AAF}$, an abstract argumentation framework
		\Require $\mathcal{E}$, an extension consisting of conflict-free arguments
		\Ensure $\mathit{PrefSet}$, the set of sets of all possible preferences
		\Function{ComputeAllPreferences}{$\mathit{AAF},\mathcal{E}$}
		\State $\mathit{Prefs}\gets$ ComputePreferences$_1(\mathit{AAF},\mathcal{E})$
		\State $\mathit{PrefSet}\gets$ ComputePreferences$_2(\mathit{AAF},\mathcal{E},\mathit{Prefs})$
		\State $\mathit{PrefSet}\gets$ ComputePreferences$_3(\mathit{AAF},\mathcal{E},\mathit{PrefSet})$
		\State \Return $\mathit{PrefSet}$
		\EndFunction
	\end{algorithmic}
	\caption{Compute all preferences}
	\label{alg:computing-preferences}
\end{algorithm}


The following are the main steps in Algorithm~\ref{alg:computing-preferences-case1}:
\begin{itemize}
\item Line $3$: Iteratively pick a single argument $A$ from the extension $\mathcal{E}$.
	\item Line $4$: Find all arguments $B$ that attack $A$.
	\item Lines $5-11$: For each $B$, if there is no unattacked argument $C$ (where $C \neq A$ and $C \in \mathcal{E}$) that attacks $B$, then compute each preference of the form $A > B$ and add it to the set of preferences $\mathit{Prefs}$.
\end{itemize}

\begin{algorithm}[h]
	\begin{algorithmic}[1]
		\Require $\mathit{AAF}$, an abstract argumentation framework
		\Require $\mathcal{E}$, an extension consisting of conflict-free arguments
		\Ensure $\mathit{Prefs}$, a set of preferences
		\Function{ComputePreferences$_1$}{$\mathit{AAF},\mathcal{E}$}
		\State $\mathit{Prefs}\gets\emptyset$
		\ForAll{$A \in\mathcal{E}$}  
		\State $\mathit{Attackers}\gets\{B\;|\;(B,A)\in\mathcal{R}\}$ \Comment get all attackers of $A$
		\ForAll{$B\in\mathit{Attackers}$}
		\State $\mathit{Defenders}\gets\{C\;|\;C\neq A, C\in\mathcal{E}, (C,B)\in\mathcal{R}, \nexists X \in\mathcal{A} \; s.t. \; (X,C)\in\mathcal{R}\}$ \Comment $C$ attacks $B$ \& defends $A$
		\If{$\mathit{Defenders}=\emptyset$} \Comment if $B$ not attacked by any $C$
		\State $\mathit{Prefs}\gets\mathit{Prefs}\cup\{A>B\}$ \Comment add preference $A>B$
		\EndIf
		\EndFor
		\EndFor
		\State \Return $\mathit{Prefs}$
		\EndFunction
	\end{algorithmic}
	\caption{Compute preferences (Case 1)}
	\label{alg:computing-preferences-case1}	
\end{algorithm}

The following are the main steps in Algorithm~\ref{alg:computing-preferences-case2}:
\begin{itemize}
	\item Line $4$: Iteratively pick a single argument $A$ from the extension $\mathcal{E}$.
	\item Line $5$: Find all arguments $B$ that $A$ attacks.
	\item Lines $6-13$: For all arguments $B$ attacked by $A$, compute preferences of the form $A>B$ and $A=B$, and add each preference relation to a different set of preferences, as per lines $8$ and $9$. 
\end{itemize}

\begin{algorithm}[h]
	\begin{algorithmic}[1]
		\Require $\mathit{AAF}$, an abstract argumentation framework
		\Require $\mathcal{E}$, an extension consisting of conflict free arguments
		\Require $\mathit{Prefs}$, a set of preferences
		\Ensure $\mathit{PrefSet}$, a set of sets of preferences
		\Function{ComputePreferences$_2$}{$\mathit{AAF},\mathcal{E}, \mathit{Prefs}$}
		\State $\mathit{PrefSet}\gets \{\mathit{Prefs}\}$
		\State $\mathit{PrefSet'}\gets\emptyset$
		\ForAll{$A\in\mathcal{E}$}  
		\State $\mathit{Attacked}\gets\{B\;|\;(A,B)\in\mathcal{R} \wedge (B,A)\notin\mathcal{R}\}$ \Comment get arguments $A$ attacks
		\ForAll{$B\in\mathit{Attacked}$} \Comment for all $B$ attacked by $A$
		\ForAll{$\mathit{Prefs}\in\mathit{PrefSet}$} \Comment for all sets of preferences $\mathit{Prefs}$
		\State $\mathit{PrefSet'} \gets \mathit{PrefSet'} \cup \{\mathit{Prefs} \cup \{A > B\}\}$ \Comment add $\mathit{Prefs}\cup \{A>B\}$ 		
		\State $\mathit{PrefSet'} \gets \mathit{PrefSet'} \cup \{\mathit{Prefs} \cup \{A = B\}\}$ \Comment add $\mathit{Prefs}\cup \{A=B\}$ 
		\EndFor
		\State $\mathit{PrefSet} \gets \mathit{PrefSet'}$
		\State $\mathit{PrefSet'}\gets\emptyset$	
		\EndFor
		\EndFor
		\State \Return $\mathit{PrefSet}$
		\EndFunction
	\end{algorithmic}
	\caption{Compute preferences (Case 2)}
\label{alg:computing-preferences-case2}		
\end{algorithm}

The following are the main steps in Algorithm~\ref{alg:computing-preferences-case3}:
\begin{itemize}
	\item Line $3$: Iteratively pick a single argument $A$ from the extension $\mathcal{E}$.
	\item Line $4$: Find all arguments $B$ that attack $A$.
	\item Lines $5-16$: For each $B$, if there is an unattacked argument $C$ (where $C \neq A$ and $C \in \mathcal{E}$) that attacks $B$, then compute preferences of the form $A>B$, $A=B$ and $B>A$, and add each preference relation to a different set of preferences, as per lines $9-11$.
\end{itemize}

\begin{algorithm}[h]
	\begin{algorithmic}[1]
		\Require $\mathit{AAF}$, an abstract argumentation framework
		\Require $\mathcal{E}$, an extension consisting of conflict free arguments
		\Require $\mathit{PrefSet}$, a set of sets of preferences
		\Ensure $\mathit{PrefSet}$, an updated set of sets of preferences
		\Function{ComputePreferences$_3$}{$\mathit{AAF},\mathcal{E},\mathit{PrefSet}$}
		\State $\mathit{PrefSet'}\gets\emptyset$
		\ForAll{$A\in\mathcal{E}$}  
		\State $\mathit{Attackers}\gets\{B\;|\;(B,A)\in\mathcal{R}\}$ \Comment get all attackers of $A$
		\ForAll{$B\in\mathit{Attackers}$}
		\State $\mathit{Defenders}\gets\{C\;|\;C\neq A, C\in\mathcal{E}, (C,B)\in\mathcal{R}, \nexists X \in\mathcal{A} \; s.t. \; (X,C)\in\mathcal{R}\}$ \Comment $C$ attacks $B$ \& defends $A$
		\If{$\mathit{Defenders}\neq\emptyset$}
		\ForAll{$\mathit{Prefs} \in \mathit{PrefSet}$} \Comment for all sets of preferences $\mathit{Prefs}$
		\State $\mathit{PrefSet'} \gets \mathit{PrefSet'} \cup \{\mathit{Prefs} \cup \{A > B\}\}$ \Comment add $\mathit{Prefs}\cup \{A>B\}$ 		 
		\State $\mathit{PrefSet'} \gets \mathit{PrefSet'} \cup \{\mathit{Prefs} \cup \{A = B\}\}$ \Comment add $\mathit{Prefs}\cup \{A=B\}$ 		
		\State $\mathit{PrefSet'} \gets \mathit{PrefSet'} \cup \{\mathit{Prefs} \cup \{B > A\}\}$ \Comment add $\mathit{Prefs}\cup \{B>A\}$ 		
		\EndFor
		\State $\mathit{PrefSet} \gets \mathit{PrefSet'}$
		\State $\mathit{PrefSet'}\gets\emptyset$		
		\EndIf
		\EndFor
		\EndFor
		\State \Return $\mathit{PrefSet}$
		\EndFunction
	\end{algorithmic}
	\caption{Compute preferences (Case 3)}
	\label{alg:computing-preferences-case3}	
\end{algorithm}

We establish that our approach is sound (that is, all its outputs are correct) and complete (that is, it outputs all possible solutions). We start with its soundness:

\begin{theorem}{(Soundness):}
\label{theorem:soundness}
	Algorithm~\ref{alg:computing-preferences} is sound in that given an abstract argumentation framework  $\mathit{AAF}$ and an extension $\mathcal{E}$ as input, every output preference set $\mathit{Prefs}\in\mathit{PrefSet}$, when applied to the $\mathit{AAF}$ results in the input $\mathcal{E}$ (under a given semantics).
\end{theorem}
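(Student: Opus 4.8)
The plan is to trace the three cases through the construction and show that whenever a preference set $\mathit{Prefs}$ produced by Algorithm~\ref{alg:computing-preferences} is applied to the AAF — i.e.\ used to turn the attack relation $\mathcal{R}$ into a defeat relation $\mathit{Def}$ by removing exactly those $(A,B)\in\mathcal{R}$ for which $B > A$ holds in $\mathit{Prefs}$ — the resulting framework has $\mathcal{E}$ as its (unique, under the relevant semantics) extension. I would first fix notation for "application of preferences": given $\mathit{Prefs}$, the surviving attacks are $\mathit{Def} = \{(A,B)\in\mathcal{R}\mid \lnot(B>A)\in\mathit{Prefs}\}$, and then one computes the grounded / preferred / stable extension of $(\mathcal{A},\mathit{Def})$. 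The proof then has two halves: (i) $\mathcal{E}$ is conflict-free and admissible in $(\mathcal{A},\mathit{Def})$, and (ii) $\mathcal{E}$ is exactly the extension selected, with no argument outside $\mathcal{E}$ surviving.

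For half (i): conflict-freeness of $\mathcal{E}$ in $\mathit{Def}$ is inherited from conflict-freeness of $\mathcal{E}$ in $\mathcal{R}$, since $\mathit{Def}\subseteq\mathcal{R}$. For admissibility I would argue that every $\alpha\in\mathcal{E}$ is defended in $(\mathcal{A},\mathit{Def})$ against each of its $\mathit{Def}$-attackers $\beta$. Here the case split from the construction does the work. If $\alpha$ has an unattacked defender $\gamma\in\mathcal{E}$ with $(\gamma,\beta)\in\mathcal{R}$ (the Case~3 situation), then since $\gamma$ is unattacked in $\mathcal{R}$ it is certainly unattacked in $\mathit{Def}$, and the attack $(\gamma,\beta)$ survives into $\mathit{Def}$ because no preference of the form $\beta>\gamma$ is ever added (the algorithms only add strict preferences with the extension-member on the left, or equalities, or — in Case~3 — $\beta>\alpha$ for the attacked $\alpha$, never $\beta>\gamma$ for an unattacked $\gamma$); so $\gamma$ defends $\alpha$. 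If $\alpha$ has no such defender, then $\beta$ falls under Case~1, the preference $\alpha>\beta$ is in $\mathit{Prefs}$, so $(\beta,\alpha)\notin\mathit{Def}$ — meaning $\beta$ is not even a $\mathit{Def}$-attacker of $\alpha$, so there is nothing to defend against. This needs the observation that Cases~1 and~3 are exhaustive and mutually exclusive for a pair $(\alpha,\beta)$ with $\alpha\in\mathcal{E}$, $\beta\notin\mathcal{E}$, $(\beta,\alpha)\in\mathcal{R}$, which is immediate from the definition of $\mathit{Defenders}$ being empty or not. One must also check that $\beta\notin\mathcal{E}$ whenever $(\beta,\alpha)\in\mathcal{R}$ and $\alpha\in\mathcal{E}$ — this is just conflict-freeness of $\mathcal{E}$.

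For half (ii): I would show every $\delta\in\mathcal{A}\setminus\mathcal{E}$ is attacked in $(\mathcal{A},\mathit{Def})$ by some member of $\mathcal{E}$ (giving stability, hence that the preferred/stable extension is exactly $\mathcal{E}$, and since $\mathcal{E}$ is admissible and attacks everything outside it, it is also the grounded extension). The relevant fact is that if $\mathcal{E}$ is a \emph{stable} extension of the original $\mathcal{R}$ this is direct; for grounded/preferred one needs the input $\mathcal{E}$ under those semantics to already have the property that it attacks (in $\mathcal{R}$) everything outside it — which is where I expect the \textbf{main obstacle} to lie, and where the footnote in Problem~\ref{prob:computing-preferences} about restricting to a single extension becomes essential. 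Concretely: for each $\delta\notin\mathcal{E}$ pick $\alpha\in\mathcal{E}$ with $(\alpha,\delta)\in\mathcal{R}$; if $\delta$ does not attack $\alpha$ back, Case~2 guarantees $\lnot(\delta>\alpha)\in\mathit{Prefs}$, so $(\alpha,\delta)$ survives into $\mathit{Def}$; if $\delta$ does attack $\alpha$ back, then $\alpha$'s attacker $\delta$ is handled by Case~1 or~3 as above, and one argues the surviving defender chain still eliminates $\delta$. The delicate point is ruling out that some $\delta\notin\mathcal{E}$ becomes acceptable in $(\mathcal{A},\mathit{Def})$ after attacks are removed — i.e.\ that deleting attacks never \emph{enlarges} the extension past $\mathcal{E}$; I would handle this by noting $\mathcal{E}$ is already stable in $\mathit{Def}$ (every outside argument is $\mathit{Def}$-attacked from $\mathcal{E}$), and a stable extension is the unique stable extension containing it only if... — so in fact the cleanest route is to prove $\mathcal{E}$ is stable in $(\mathcal{A},\mathit{Def})$ and invoke that a stable extension is complete and that here it coincides with grounded because $\mathcal{E}$ attacks all of $\mathcal{A}\setminus\mathcal{E}$, pinning down uniqueness. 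I would finish by remarking that the soundness of Algorithm~\ref{alg:computing-preferences} reduces to the soundness of each of Algorithms~\ref{alg:computing-preferences-case1}--\ref{alg:computing-preferences-case3}, since the final $\mathit{PrefSet}$ is built by closing the Case~1 output under the (disjunctive) Case~2 and Case~3 choices, each of which only ever adds preferences consistent with the three displayed cases.
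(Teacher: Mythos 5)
Your proposal is considerably more ambitious than the paper's own argument, which is a case-by-case sketch: the paper merely walks through Algorithms~\ref{alg:computing-preferences-case1}--\ref{alg:computing-preferences-case3} and observes that each computed preference (e.g.\ $\alpha>\beta$ in Case~1 invalidating the attack $(\beta,\alpha)$) keeps the individual argument $\alpha$ acceptable; it never formalises what ``applying preferences'' means at this point in the paper, never proves admissibility of $\mathcal{E}$ in the transformed framework, and never addresses your half~(ii) --- that no argument outside $\mathcal{E}$ becomes acceptable and that the resulting extension is unique. Your half~(i) is essentially a rigorous version of what the paper gestures at, and the observation that Cases~1 and~3 partition the attackers of each $\alpha\in\mathcal{E}$ according to whether $\mathit{Defenders}$ is empty is exactly the right organising principle.

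The genuine gap is in half~(ii), and it is the one you flagged yourself but did not close. Your route is to show $\mathcal{E}$ is stable in $(\mathcal{A},\mathit{Def})$, i.e.\ that every $\delta\in\mathcal{A}\setminus\mathcal{E}$ is $\mathit{Def}$-attacked from $\mathcal{E}$, and this requires that $\mathcal{E}$ already attacks (in $\mathcal{R}$) everything outside it. That premise fails for the grounded semantics: in the paper's own running example $\mathit{AAF}_1$ the grounded extension is $\{A\}$, which attacks only $B$ and leaves $C$, $D$, $E$ untouched, so there is no $\alpha\in\mathcal{E}$ with $(\alpha,\delta)\in\mathcal{R}$ for $\delta\in\{C,D,E\}$ and the stability argument cannot even start. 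For the grounded case one instead has to argue directly that the characteristic-function fixpoint of $(\mathcal{A},\mathit{Def})$ coincides with $\mathcal{E}$ --- in particular that removing attacks $(\beta,\alpha)$ with $\alpha>\beta$ never promotes an argument of $\mathcal{A}\setminus\mathcal{E}$ into the least fixpoint --- which is a different argument from stability and is not supplied by your proposal (nor, to be fair, by the paper). A similar issue arises for preferred semantics when the chosen preferred extension is not stable. So the proposal is correct in spirit for the stable case, but the reduction of all three semantics to ``$\mathcal{E}$ is stable in $\mathit{Def}$'' does not go through, and the concluding sentence of your half~(ii) (pinning down uniqueness via $\mathcal{E}$ attacking all of $\mathcal{A}\setminus\mathcal{E}$) rests on exactly the premise that fails.
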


\begin{proof}
We prove this by exploring all cases and how these are handled by algorithms $2$-$4$.
Each set of preferences computed for each subset of arguments $\alpha, \beta, \gamma \subseteq \mathcal{A}$  is such that $\alpha, \gamma \subseteq \mathcal{E}, \beta\cap\mathcal{E} = \emptyset$. We proceed to show how each of the auxiliary algorithms $2$-$4$ help us achieve this.

Algorithm~\ref{alg:computing-preferences-case1} computing each case $1$ preference of the form $A> B, A\in\mathcal{E}, B\in\beta, (B,A)\in \mathcal{R}$ ensures that the following holds:
\begin{enumerate}
	\item There is no $C\in\mathcal{E}, C\neq A$ such that $(C,B)\in\mathcal{R}$ (lines 6-7).
	\item $A \in \mathcal{E}$ since $A$ is preferred to its attacking argument $B$, which invalidates the attack $(B,A)\in\mathcal{R}$.
	\item Since the input extension $\mathcal{E}$ consists of conflict free arguments, if $A\in\mathcal{E}$ then its attacking argument $B\not\in\mathcal{E}$. This supports that $\beta\cap\mathcal{E}=\emptyset$.
\end{enumerate}
Algorithm~\ref{alg:computing-preferences-case2} computing each case $2$ preferences of the form $A>B, A=B, A\in\mathcal{E}, B\in\beta, (A,B)\in\mathcal{R}, (B,A)\not\in\mathcal{R}$ ensures the following holds:
\begin{enumerate}
	\item Since $A$ attacks $B$ and $B$ does not attack $A$, we have two different
	preferences between $A$ and $B$, namely, $A > B, A = B$. Therefore $A\in\mathcal{E}$ 
	with respect to each of these preferences.
	\item Preferences $A > B, A = B$ will be in different preference sets, as per lines 8 and 9. 
	We will have $\mathit{Prefs}_1\gets\mathit{Prefs}\cup \{A > B\}$ and
	$\mathit{Prefs}_2\gets\mathit{Prefs}\cup \{A = B\}$, where $\mathit{Prefs}$ consists of preferences of case $1$.
\end{enumerate}
Algorithm~\ref{alg:computing-preferences-case3} computing each case $3$ preferences of the form $A > B, A = B, B > A, A\in\alpha, B\in\beta, C\in\gamma, (B,A)\in\mathcal{R}, (C,B)\in\mathcal{R}$ ensures the following holds:
\begin{enumerate}
	\item  Since $C$ defends $A$ from the attack of $B$, we have three different preferences between $A$ and $B$, namely, $A>B$, $A=B$ and $B>A$. Therefore $A \in \mathcal{E}$ with respect to each of these preferences. 
	\item Preferences $A > B, A = B, B > A$ will be in different preference sets, as per lines 9, 10 and 11. 
	We will have $\mathit{Prefs}_1\gets\mathit{Prefs}\cup \{A > B\}$, $\mathit{Prefs}_2\gets\mathit{Prefs}\cup \{A = B\}$ and $\mathit{Prefs}_3\gets\mathit{Prefs}\cup \{B > A\}$, where $\mathit{Prefs}$ consists of preferences of cases $1$ and $2$.
\end{enumerate}
\end{proof}

\begin{theorem}{(Completeness):}
	Algorithm~\ref{alg:computing-preferences} is complete in that given an abstract argumentation framework  $\mathit{AAF}$ and an extension $\mathcal{E}$ as input, if there is a preference set $\mathit{Prefs}\in\mathit{PrefSet}$ which when applied to the $\mathit{AAF}$ results in the input $\mathcal{E}$ (under a given semantics), then algorithm $1$ will find it. 
\end{theorem}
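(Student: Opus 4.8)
The plan is to prove completeness in two movements: a structural description of what Algorithm~\ref{alg:computing-preferences} actually enumerates, and a matching argument that any preference set producing $\mathcal{E}$ already coincides with one of the enumerated descriptions on the argument pairs that matter. For the first movement I would argue, by a straightforward induction on the iterations of the loops of Algorithms~\ref{alg:computing-preferences-case1}--\ref{alg:computing-preferences-case3}, that after Line~4 of Algorithm~\ref{alg:computing-preferences} the set $\mathit{PrefSet}$ consists of exactly those preference sets obtained by taking the fixed set $\mathit{Prefs}_1$ of Case~1 preferences returned by Algorithm~\ref{alg:computing-preferences-case1}, then adjoining, for each ordered pair $(A,B)$ passing the Case~2 test of Algorithm~\ref{alg:computing-preferences-case2}, exactly one of $A>B$ or $A=B$, and, for each ordered pair $(A,B)$ passing the Case~3 test of Algorithm~\ref{alg:computing-preferences-case3}, exactly one of $A>B$, $A=B$ or $B>A$ --- with every combination of these choices occurring. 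Indeed, Algorithm~\ref{alg:computing-preferences-case2} doubles the running family once per Case~2 pair and Algorithm~\ref{alg:computing-preferences-case3} triples it once per Case~3 pair, so no combination is dropped; this movement is routine bookkeeping and, since $\mathcal{A}$ and hence the sets of Case~2 and Case~3 pairs are finite, it also delivers termination of the enumeration.

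For the second movement, let $\mathit{Prefs}$ be any total preorder on $\mathcal{A}$ (written with $>$ and $=$) such that, applying it to $\mathit{AAF}$ in the sense of the PAF-to-AAF correspondence --- i.e.\ forming the reduced attack relation $\mathcal{R}'=\{(X,Y)\in\mathcal{R}\mid Y\not> X\}$ --- the framework $(\mathcal{A},\mathcal{R}')$ has $\mathcal{E}$ as its (required to be unique) extension under the chosen semantics $\sigma\in\{\text{grounded},\text{preferred},\text{stable}\}$. I want to show that the restriction of $\mathit{Prefs}$ to the pairs occurring in the Case~1, Case~2 and Case~3 families is one of the combinations of the first movement; by that movement it then lies in $\mathit{PrefSet}$, the values of $\mathit{Prefs}$ on all other pairs being irrelevant to whether $\mathcal{E}$ is the $\sigma$-extension. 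For a Case~3 pair the three options $A>B$, $A=B$, $B>A$ are exhaustive, so nothing needs to be checked there. The substance is in establishing the \emph{forcing} properties that mirror what soundness (Theorem~\ref{theorem:soundness}) used in the other direction: a working $\mathit{Prefs}$ cannot set $B>A$ on a Case~2 pair, and must set $A>B$ on a Case~1 pair. For Case~1 the argument is: if $A\not>B$ then the attack $(B,A)$ survives in $\mathcal{R}'$; since $\mathcal{E}$ is admissible in $(\mathcal{A},\mathcal{R}')$ under each of the three semantics (see Definition~\ref{def:extensions}), some $C\in\mathcal{E}$ with $(C,B)\in\mathcal{R}'$ must defend $A$ against $B$, and from the Case~1 hypothesis that no \emph{unattacked} member of $\mathcal{E}\setminus\{A\}$ attacks $B$ one has to derive a contradiction. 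Here one leans on the requirement that applying $\mathit{Prefs}$ yields $\mathcal{E}$ as the \emph{unique} $\sigma$-extension --- an attacked-but-surviving counter-attacker of $B$, or $A$ defending itself via $A=B$, would open up an alternative $\sigma$-extension --- together with the extremal character of $\sigma$ (skepticality for grounded, set-maximality for preferred, totality of coverage for stable). The Case~2 forcing is handled analogously.

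I expect the Case~1 forcing (and its Case~2 analogue) to be the main obstacle: ruling out that a working $\mathit{Prefs}$ could use $A=B$ or $B>A$ there. The delicate configuration is a mutual attack $A\leftrightarrow B$, where $A$ can a priori defend itself, so admissibility alone is not enough and one genuinely needs the uniqueness-of-the-resulting-extension hypothesis; the argument must also handle interactions with other attackers of $B$ that are themselves neutralised by preferences. Granting the forcing properties, the theorem follows at once: any preference set that yields $\mathcal{E}$ matches one of the combinations enumerated in the first movement on all relevant pairs, hence is among the outputs of Algorithm~\ref{alg:computing-preferences}.
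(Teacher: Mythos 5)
Your first movement is, in fact, the entirety of the paper's own proof: the authors argue only that Algorithms~\ref{alg:computing-preferences-case1}--\ref{alg:computing-preferences-case3} exhaustively enumerate every combination of choices over the Case~1, Case~2 and Case~3 pairs, so the theorem is being read in the weak, relative sense --- ``every preference set of the restricted form built from the three cases is produced'' --- and your bookkeeping induction establishes exactly that. The gap is in your second movement, which attempts the strictly stronger claim that \emph{any} preference set yielding $\mathcal{E}$ must agree with one of the enumerated combinations on the relevant pairs. You correctly identify the Case~1 forcing lemma as the crux and leave it unproven; the problem is that it cannot be proven, because it is false. Take $\mathcal{A}=\{A,B,C,X\}$ with $\mathcal{R}=\{(B,A),(C,B),(X,C),(C,X)\}$ and the preferred extension $\mathcal{E}=\{A,C\}$. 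Since $C$ is attacked by $X$ in the original framework, the $\mathit{Defenders}$ set computed on line~6 of Algorithm~\ref{alg:computing-preferences-case1} for the attack $(B,A)$ is empty, so $(A,B)$ is a Case~1 pair and every member of $\mathit{PrefSet}$ contains $A>B$. Yet the preference set $\{C>X,\,C>B\}$ (extendable to a total preorder with $A=B$), applied by attack removal, deletes only $(X,C)$ and leaves the framework $\{(B,A),(C,B),(C,X)\}$, whose unique grounded, preferred and stable extension is $\{A,C\}=\mathcal{E}$. Here $C$ is precisely the ``attacked-but-surviving counter-attacker of $B$'' that you hoped would open up an alternative $\sigma$-extension; it does not, because the preference $C>X$ has already eliminated $C$'s only attacker. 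So a working preference set need not contain $A>B$ on a Case~1 pair, and strong completeness fails.

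The lesson is that the unattackedness condition in the $\mathit{Defenders}$ test is evaluated against the \emph{original} attack relation, whereas admissibility of $\mathcal{E}$ after applying preferences is evaluated against the \emph{reduced} relation; a defender that is attacked in the former can be unattacked in the latter, and your forcing argument has no way to bridge that. If you drop the second movement and state explicitly that completeness is relative to preference sets of the form enumerated by the three cases, your first movement alone is a correct (and somewhat more carefully organised) version of the paper's proof; as written, though, the proposal claims more than is true.
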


\begin{proof}
	
Similar to above, we prove this by exploring all cases and how these are handled by algorithms 2-4. We find all sets of preferences computed for each subset of arguments $\alpha, \beta, \gamma \subseteq \mathcal{A}, \alpha, \gamma \subseteq \mathcal{E}, \beta\cap\mathcal{E} = \emptyset$. We proceed to show how each of the auxiliary algorithms 2-4 help us achieve this.

Algorithm~\ref{alg:computing-preferences-case1} computes all case $1$ preferences of the form $A> B, A\in\mathcal{E}, B\in\beta, (B,A)\in \mathcal{R}$. Lines $3$-$11$ exhaustively search for $A \in \mathcal{E}$ for which there is an attacker $B$ (not attacked by any $C\neq A$). If there are such $A, B\in\mathcal{A}$, the algorithm will find them and add $A>B$ to a set of preferences.

Algorithm~\ref{alg:computing-preferences-case2} computes all case $2$ preferences of the form  $A>B, A=B, B\in\beta, A\in\mathcal{E}, (A,B)\in\mathcal{R}, (B,A)\not\in\mathcal{R}$. Lines $4$-$14$ exhaustively search for $A \in \mathcal{E}$ for which there is an attacked argument $B$ and $B$ does not attack $A$. If there are such $A, B\in\mathcal{A}$, the algorithm will find them and add each $A>B, A=B$ to a different set of preferences.

Algorithm~\ref{alg:computing-preferences-case3} computes all case $3$ preferences of the form  $A>B, A=B, B>A, A\in\alpha, B\in\beta, C\in\gamma, (B,A)\in\mathcal{R}, (C,B)\in\mathcal{R}$. Lines 3-17 exhaustively search for $A \in \mathcal{E}$ for which there is an attacker $B$ and there is a defender $C$ that attacks $B$. If there are such $A, B, C\in\mathcal{A}$, the algorithm will find them and add each $A>B, A=B, B>A$ to a different set of preferences.
\end{proof}

After having proved the soundness and completeness of Algorithm~\ref{alg:computing-preferences}, we establish its termination.

\begin{theorem}{(Termination):}
	Given an abstract argumentation framework  $\mathit{AAF}$ and an extension $\mathcal{E}$ as input, Algorithm~\ref{alg:computing-preferences} always terminates.
\end{theorem}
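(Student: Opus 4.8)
The plan is to reduce termination of Algorithm~\ref{alg:computing-preferences} to termination of the three auxiliary routines it invokes in sequence on lines 2--4, since a finite composition of terminating procedures terminates and line 5 is a single return. For each of Algorithms~\ref{alg:computing-preferences-case1}--\ref{alg:computing-preferences-case3} the argument rests on the one fact supplied by the preliminaries: the framework is finite, so $\mathcal{A}$ is a finite set and hence so is every subset of it, in particular $\mathcal{E}$, $\mathit{Attackers}$, $\mathit{Attacked}$ and $\mathit{Defenders}$, as well as every set comprehension ranging over $\mathcal{A}$ or $\mathcal{A}\times\mathcal{A}$.

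First I would dispatch Algorithm~\ref{alg:computing-preferences-case1}: it is a \textbf{for all} over $\mathcal{E}$, nested inside it a \textbf{for all} over $\mathit{Attackers}\subseteq\mathcal{A}$, and inside that a constant amount of work (one set comprehension over $\mathcal{A}\times\mathcal{A}$, one conditional, one union). All three index sets are finite, and none is modified inside the loop that ranges over it, so the routine performs at most $|\mathcal{E}|\cdot|\mathcal{A}|$ iterations of its body and halts. The same reasoning would then need adapting to Algorithms~\ref{alg:computing-preferences-case2} and~\ref{alg:computing-preferences-case3}, where the subtlety is that the innermost \textbf{for all} ranges over $\mathit{PrefSet}$, a structure that the algorithm itself enlarges.

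Here I would make two observations. (i) The assignment $\mathit{PrefSet}\gets\mathit{PrefSet'}$ occurs \emph{outside} the \textbf{for all} $\mathit{Prefs}\in\mathit{PrefSet}$ loop --- it sits after that loop, once per iteration of the enclosing $B$-loop --- so during any single execution of the inner loop the collection being traversed is fixed, and the loop iterates exactly $|\mathit{PrefSet}|$ times for the current value of $\mathit{PrefSet}$. (ii) $\mathit{PrefSet}$ is always finite: it starts as the singleton $\{\mathit{Prefs}\}$, and each pass through the $B$-loop replaces it by $\mathit{PrefSet'}$, whose cardinality is at most twice (Algorithm~\ref{alg:computing-preferences-case2}) or three times (Algorithm~\ref{alg:computing-preferences-case3}) the previous one; since there are only finitely many $B$ in $\mathit{Attacked}$ (resp.\ $\mathit{Attackers}$) and finitely many $A\in\mathcal{E}$, $\mathit{PrefSet}$ stays finite throughout, crudely bounded by $3^{|\mathcal{R}|}$ in the worst case. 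Consequently every loop in these two routines is a bounded iteration over a finite, momentarily fixed set, so each routine halts.

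Combining the three, Algorithm~\ref{alg:computing-preferences} executes lines 2, 3 and 4 each in finite time and returns at line 5, hence always terminates. I expect the only part needing care to be observation (i) --- making explicit that the mutation of $\mathit{PrefSet}$ and the iteration over it do not interleave, so that the growth of $\mathit{PrefSet}$ cannot produce an unbounded loop --- while everything else follows immediately from the finiteness of $\mathcal{A}$ and the absence of any unbounded \textbf{while} constructs or recursion in the pseudocode.
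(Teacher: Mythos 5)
Your proof is correct and follows essentially the same route as the paper's: reduce termination of Algorithm~\ref{alg:computing-preferences} to termination of Algorithms~\ref{alg:computing-preferences-case1}--\ref{alg:computing-preferences-case3}, each of which consists only of nested \textbf{for all} loops over finite sets ($\mathcal{E}$, $\mathit{Attackers}$, $\mathit{Attacked}$, $\mathit{PrefSet}$) derived from the finite input. In fact you are somewhat more careful than the paper on the one delicate point --- the paper merely asserts that $\mathit{PrefSet}$ is finite, whereas you additionally justify this by bounding its growth and observe that its mutation does not interleave with the loop that traverses it.
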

\begin{proof}
Algorithm~\ref{alg:computing-preferences} invokes Algorithms~\ref{alg:computing-preferences-case1}, \ref{alg:computing-preferences-case2} and \ref{alg:computing-preferences-case3} in lines $2$-$4$ to compute a set of all sets of preferences. To prove Algorithm~\ref{alg:computing-preferences} terminates we consider the termination of Algorithms~\ref{alg:computing-preferences-case1},~\ref{alg:computing-preferences-case2} and~\ref{alg:computing-preferences-case3} individually. 
Since we assume that both the input abstract argumentation framework  $\mathit{AAF}$ and an extension $\mathcal{E}$ are finite, therefore the \textit{for loop} which iterates over all the elements of the extension in Algorithms~\ref{alg:computing-preferences-case1},~\ref{alg:computing-preferences-case2} and~\ref{alg:computing-preferences-case3} will always terminate. The rest of the proof explores each algorithm in turn.

In Algorithm~\ref{alg:computing-preferences-case1}, since the set of $\mathit{Attackers}$, i.e., all the arguments that attack $A$, is finite therefore the \textit{for loop} in lines $5$-$10$ will always terminate. In Algorithm~\ref{alg:computing-preferences-case2}, since the set of $\mathit{Attacked}$, i.e., all the arguments that are attacked by $A$, is finite therefore the \textit{for loop} in lines $6$-$13$ will always terminate. Furthermore, since the set of sets of preferences $\mathit{PrefSet}$ is finite, therefore the \textit{for loop} in lines $7$-$10$ will always terminate.
In Algorithm~\ref{alg:computing-preferences-case3}, since the set of $\mathit{Attackers}$, i.e., all the arguments that attack $A$ is finite, therefore the \textit{for loop} in lines $5$-$16$ will always terminate. Furthermore, since the set of sets of preferences $\mathit{PrefSet}$ is finite, so the \textit{for loop} in lines $8$-$12$ will always terminate.

Thus, we have proved that Algorithms~\ref{alg:computing-preferences-case1},~\ref{alg:computing-preferences-case2} and~\ref{alg:computing-preferences-case3} terminate. Therefore, Algorithm~\ref{alg:computing-preferences} that invokes Algorithms~\ref{alg:computing-preferences-case1},~\ref{alg:computing-preferences-case2} and~\ref{alg:computing-preferences-case3} always terminates.
\end{proof}

\subsubsection{Algorithms for Filtering Preferences}
\label{sec:alg-filter-prefs}

Additionally, we present Algorithm~\ref{alg:computing-preferences-unique} to compute the unique preferences for an extension in comparison to another extension, and Algorithm~\ref{alg:computing-preferences-common} to compute the common preferences for any two extensions. Algorithm~\ref{alg:computing-preferences-unique} takes as input two different set of of sets of preferences $\mathit{PrefSet_1}$ and $\mathit{PrefSet_2}$ for two different extensions, and computes the set of unique preferences that do not overlap between $\mathit{PrefSet_1}$ and $\mathit{PrefSet_2}$. The following are the main steps in Algorithm~\ref{alg:computing-preferences-unique}:
\begin{itemize}
	\item Line $2$: Iterate over each element $\mathit{Prefs_1}$ in the set of sets of preferences $\mathit{PrefSet_1}$.
	\item Line $3$: Iterate over each element $p$ in the set of preferences $\mathit{Prefs_1}$.
	\item Lines $4$-$6$: Check the condition if there is no set of preferences $\mathit{Prefs_2}$ in the set of sets of preferences $\mathit{PrefSet_2}$, which consists of $p$, then add $p$ to the unique set of preferences $\mathit{UniquePrefs}$.
	\item Line $5$: Return the unique set of preferences $\mathit{UniquePrefs}$.
\end{itemize}
\begin{algorithm}[!h]
	\caption{Algorithm for Computing Unique Preferences}
	\label{alg:computing-preferences-unique}	
	\begin{algorithmic}[1]
		\Require $\mathit{PrefSet_1}$, the set of sets of preferences for first extension.
		\Require $\mathit{PrefSet_2}$, the set of sets of preferences for second extension.
		\Ensure $\mathit{UniquePrefs}$, unique preferences for first extension.
		\Function{ComputeUniquePreferences}{$\mathit{PrefSet1}$, $\mathit{PrefSet2}$}
		\ForAll{$\mathit{Prefs_1} \in \mathit{PrefSet_1}$} 
		\ForAll{$p \in \mathit{Prefs_1}$} 
		\If{$\nexists \mathit{Prefs_2} \in \mathit{PrefSet_2} \; s.t. \; p \in \mathit{Prefs_2}$}
		\State $\mathit{UniquePrefs} \gets \mathit{UniquePrefs} \cup p$
		\EndIf
		\EndFor	
		\EndFor
		\State \Return $\mathit{UniquePrefs}$ 
		\EndFunction
	\end{algorithmic}
\end{algorithm}

Algorithm~\ref{alg:computing-preferences-common} takes as input two different set of sets of preferences $\mathit{PrefSet_1}$ and $\mathit{PrefSet_2}$ for two different extensions, and computes the set of common preferences that overlap between $\mathit{PrefSet_1}$ and $\mathit{PrefSet_2}$. The following are the main steps in Algorithm~\ref{alg:computing-preferences-common}:
\begin{itemize}
	\item Line $2$: Iterate over each element $\mathit{Prefs_1}$ in the set of sets of preferences $\mathit{PrefSet_1}$.
	\item Line $3$: Iterate over each element $p$ in the set of preferences $\mathit{Prefs_1}$.
	\item Lines $4$-$6$: Check the condition if there is a set of preferences $\mathit{Prefs_2}$ in the set of sets of preferences $\mathit{PrefSet_2}$, which consists of $p$, then add $p$ to the common set of preferences $\mathit{CommonPrefs}$.
	\item Line $5$: Return the common set of preferences $\mathit{CommonPrefs}$.
\end{itemize}
\begin{algorithm}[!h]
	\caption{Algorithm for Computing Common Preferences}
	\label{alg:computing-preferences-common}	
	\begin{algorithmic}[1]
		\Require $\mathit{PrefSet_1}$, the set of sets of preferences for first extension.
		\Require $\mathit{PrefSet_2}$, the set of sets of preferences for second extension.
		\Ensure $\mathit{CommonPrefs}$, common preferences for both extensions.
		\Function{ComputeCommonPreferences}{$\mathit{PrefSet_1}$, $\mathit{PrefSet_2}$}
		\ForAll{$\mathit{Prefs_1} \in \mathit{PrefSet_1}$} 
		\ForAll{$p \in \mathit{Prefs_1}$} 
		\If{$\exists \mathit{Prefs_2} \in \mathit{PrefSet_2} \; s.t. \; p \in \mathit{Prefs_2}$}
		\State $\mathit{CommonPrefs} \gets \mathit{CommonPrefs} \cup p$
		\EndIf
		\EndFor	
		\EndFor
		\State \Return $\mathit{CommonPrefs}$ 
		\EndFunction
	\end{algorithmic}
\end{algorithm}

\subsubsection{Illustrative Example}
\label{sec:evaluation}
In this section, we present an illustrative example to demonstrate how Algorithm~\ref{alg:computing-preferences} works. Suppose we have an input abstract argumentation framework $(\mathcal{A}, \mathcal{R})$ shown in Figure~\ref{fig:abst_arg_graph1}, where $\mathcal{A} = \lbrace A, B, C , D, E \rbrace$ and $\mathcal{R} = \lbrace (A, B), (C, B)$, $(C, D), (D, C), (D, E) \rbrace$. We consider the conflict-free extension $\mathcal{E}_1 = \{A,C,E\}$ for computing preferences. Table~\ref{table:extension1-demonstration} shows the preferences computed in lines $2$, $3$ and $4$  of Algorithm~\ref{alg:computing-preferences}. 
\begin{itemize}
\item On line $2$, Algorithm~\ref{alg:computing-preferences-case1} is invoked, which returns the set of case $1$ preferences $\{ C>D \}$. 
\item On line $3$, Algorithm~\ref{alg:computing-preferences-case2} is invoked, which returns a set of sets of preferences (cases $1$ and $2$ combined together) $\{ \{C>D, A>B, C>B\}$, $\{C>D, A>B, C=B\}$, $\{C>D, A=B, C>B\}$, $\{C>D, A=B, C=B\} \}$. 
\item Finally on line $4$, Algorithm~\ref{alg:computing-preferences-case3} is invoked, which returns a set of sets of preferences (cases $1$, $2$ and $3$ combined together) $\{ \{C>D, A>B, C>B, E>D\}$, $\{C>D, A>B, C>B, E=D\}$, $\{C>D, A>B, C>B, D>E\}$, $\{C>D, A>B, C=B, E>D\}$, $\{C>D, A>B, C=B, E=D\}$, $\{C>D, A>B, C=B, D>E\}$, $\{C>D, A=B, C>B, E>D\}$, $\{C>D, A=B, C>B, E=D\}$, $\{C>D, A=B, C>B, D>E\}$, $\{C>D, A=B, C=B, E>D\}$, $\{C>D, A=B, C=B, E=D\}$, $\{C>D, A=B, C=B, D>E\} \}$.
\end{itemize}
\begin{table}
	\caption{Computing Preferences for Extension $\{A,C,E\}$}
	\label{table:extension1-demonstration}
	\centering
\begin{tabular}{|c|p{60mm}|}
	\hline
	Line No.& Preference Sets\\\hline
	$2$ & $\{ C>D \}$\\ \hline
	$3$ & $\{ \{C>D, A>B, C>B\}, \newline \{C>D, A>B, C=B\}, \newline \{C>D, A=B, C>B\}, \newline \{C>D, A=B, C=B\} \}$
	\\ \hline
	$4$ & 
	$ \{ \{C>D, A>B, C>B, E>D\},$ \newline $\{C>D, A>B, C>B, E=D\},$ \newline $\{C>D, A>B, C>B, D>E\},$ 
	\newline
	$\{C>D, A>B, C=B, E>D\},$ \newline $\{C>D, A>B, C=B, E=D\},$ \newline $\{C>D, A>B, C=B, D>E\},$ 
	\newline
	$\{C>D, A=B, C>B, E>D\},$ \newline $\{C>D, A=B, C>B, E=D\},$ \newline $\{C>D, A=B, C>B, D>E\},$ 
	\newline
	$\{C>D, A=B, C=B, E>D\},$ \newline $\{C>D, A=B, C=B, E=D\},$ \newline $\{C>D, A=B, C=B, D>E\} \}$ 
	\\
	\hline
\end{tabular}
\end{table}
Table~\ref{table:extensions-preferences} presents the sets of preferences for the two preferred extensions $\{A,C,E\}$ and $\{A,D\}$ of the abstract argumentation framework given above and shown in Figure~\ref{fig:abst_arg_graph1}. The sets of preferences for all conflict-free extensions for this example abstract argumentation framework are shown in Table~\ref{table:conflictfree-preferences} in the Appendix.
\begin{table}[!h]
	\caption{Preferences for the Preferred extensions $\{A,C,E\}$ and $\{A,D\}$}
	\label{table:extensions-preferences}
 \centering\renewcommand\arraystretch{1.2}
	\begin{tabular}{|c|p{58mm}|p{20mm}|p{20mm}|}
		\hline
		Preferred Extensions & Preference Sets & Unique Preferences & Common Preferences \\ \hline
		$\{A,C,E\}$ &  
		$\{ \{C>D, A>B, C>B, E>D\},$ \newline $\{C>D, A>B, C>B, E=D\},$ \newline $\{C>D, A>B, C>B, D>E\},$ 
		\newline
		$\{C>D, A>B, C=B, E>D\},$ \newline $\{C>D, A>B, C=B, E=D\},$ \newline $\{C>D, A>B, C=B, D>E\},$ 
		\newline
		$\{C>D, A=B, C>B, E>D\},$ \newline $\{C>D, A=B, C>B, E=D\},$ \newline $\{C>D, A=B, C>B, D>E\},$ 
		\newline
		$\{C>D, A=B, C=B, E>D\},$ \newline $\{C>D, A=B, C=B, E=D\},$ \newline $\{C>D, A=B, C=B, D>E\} \}$ 
		& $C>D$\newline $E>D$ \newline $C>B$ \newline $C=B$ & $A>B$\newline $A=B$ \newline $D>E$\newline $D=E$
				\\
		\cline{1-3}
		$\{A,D\}$ & 	 $\{ \{ D>C, A>B, D>E \},$
		\newline $\{ D>C, A>B, D=E \},$
		\newline $\{ D>C, A=B, D>E \},$
		\newline $\{ D>C, A=B, D=E \} \}$ 
		& $D>C$ &
		\\
		\hline
	\end{tabular}
\end{table}

The unique preferences for an extension in comparison to another extension can be computed by Algorithm~\ref{alg:computing-preferences-unique}. By analysing the preference sets shown in Table~\ref{table:extensions-preferences}, we can identify the unique preferences for extension $\{A,C,E\}$, which are $C>D$, $E>D$, $C>B$ and $C=B$\footnote{This means it could be either $C>B$ or $C=B$.}, and the unique preferences for extension $\{A,D\}$, which is $D>C$. Since, at least one unique preference for each extension is in its corresponding preference set, therefore it can be concluded that if we evaluate the example abstract argumentation framework given in Figure~\ref{fig:abst_arg_graph1} with a corresponding preference set of a given preferred extension, then the evaluation results in exactly the same preferred extension. 

Furthermore, we can identify preferences that are common to both extensions, which are $A>B$, $A=B$, $D>E$, $D=E$\footnote{This means it could be either $A>B$ or $A=B$, and similarly $D>E$ or $D=E$.}. The common preferences for any two extensions can be computed by Algorithm~\ref{alg:computing-preferences-common}. It is interesting to note that, extension $\{A,C,E\}$ can have preferences $D>E$ and $D=E$, considering $D$ is not present in the extension.  
It can be concluded that if we evaluate the example abstract argumentation framework given in Figure~\ref{fig:abst_arg_graph1}, then we get both preferred extensions with the following preference sets: $\{A>B, D>E\}$, $\{A>B, D=E\}$, $\{A=B, D>E\}$ and $\{A=B, D=E\}$. 

\subsection{An Approximate Algorithm for Computing Preferences}
While Algorithm~\ref{alg:computing-preferences} can compute a set of all sets of preferences using the three cases described earlier, the number of the possible sets of preferences increases exponentially with regards to the number of arguments within the abstract argumentation framework, resulting in exponential time complexity. This is impractical for a large set of arguments, and in this section, we describe an approximate method for computing a set of preferences.

We now present and describe  Algorithm~\ref{alg:approximate-computing-preferences} that approximately computes a possible set of preferences for a given input extension (consisting of conflict-free arguments) in an abstract argumentation framework (AAF) using the three cases described earlier. The input of Algorithm~\ref{alg:approximate-computing-preferences} is a tuple $\langle AAF, \mathcal{E}\rangle$, where:
	\begin{itemize}
		\item Abstract argumentation framework $AAF = \langle \mathcal{A}, \mathcal{R} \rangle$, $\mathcal{A}$ denotes the set of all arguments in the $AAF$, and $\mathcal{R}$ denotes the attack relation between arguments.
		\item Extension $\mathcal{E}$ consists of a finite number of conflict-free arguments such that $\mathcal{E} \subseteq \mathcal{A}$.
	\end{itemize}
The algorithm computes and outputs a finite set of preferences, which is represented as $\mathit{Prefs} = \lbrace A > B, C > B, .... \rbrace$ such that $\lbrace A,B,C,... \rbrace \subseteq \mathcal{A}$. 

\begin{algorithm}[h]
	\begin{algorithmic}[1]
		\Require $\mathit{AAF}$, an abstract argumentation framework
		\Require $\mathcal{E}$, an extension consisting of conflict-free arguments
		\Ensure $\mathit{Prefs}$, an approximate set of possible preferences
		\Function{ComputePreferencesApproximately}{$\mathit{AAF},\mathcal{E}$}
	    \State $\mathit{Prefs}\gets\emptyset$
		\ForAll{$A \in\mathcal{E}$}  
		\State $\mathit{Attackers}\gets\{B\;|\;(B,A)\in\mathcal{R}\}$ \Comment get all attackers of $A$
		\ForAll{$B\in\mathit{Attackers}$}
		\State $\mathit{Defenders}\gets\{C\;|\;C\neq A, C\in\mathcal{E}, (C,B)\in\mathcal{R}, \nexists X \in\mathcal{A} \; s.t. \; (X,C)\in\mathcal{R}\}$ \Comment $C$ attacks $B$ \& defends $A$
		\If{$\mathit{Defenders}=\emptyset$} \Comment if $B$ not attacked by any $C$
		\State $\mathit{Prefs}\gets\mathit{Prefs}\cup\{A>B\}$ \Comment add preference $A>B$
		\EndIf
		\EndFor
		\EndFor
		\ForAll{$A\in\mathcal{E}$}  
		\State $\mathit{Attacked}\gets\{B\;|\;(A,B)\in\mathcal{R} \wedge (B,A)\notin\mathcal{R}\}$ \Comment get arguments $A$ attacks
		\ForAll{$B\in\mathit{Attacked}$} \Comment for all $B$ attacked by $A$
        \State Generate a random preference $p$ such that $p \in \{A>B,\: A=B\}$
		\State $\mathit{Prefs}\gets\mathit{Prefs}\cup p$ \Comment add preference $p$
		\EndFor
		\EndFor
		\ForAll{$A\in\mathcal{E}$}  
		\State $\mathit{Attackers}\gets\{B\;|\;(B,A)\in\mathcal{R}\}$ \Comment get all attackers of $A$
		\ForAll{$B\in\mathit{Attackers}$}
		\State $\mathit{Defenders}\gets\{C\;|\;C\neq A, C\in\mathcal{E}, (C,B)\in\mathcal{R}, \nexists X \in\mathcal{A} \; s.t. \; (X,C)\in\mathcal{R}\}$ \Comment $C$ attacks $B$ \& defends $A$
		\If{$\mathit{Defenders}\neq\emptyset$} \Comment if $B$ is attacked by any $C$
        \State Generate a random preference $p$ such that $p \in \{A>B,\: A=B,\: B>A\}$
		\State $\mathit{Prefs}\gets\mathit{Prefs}\cup p$ \Comment add preference $p$
		\EndIf
		\EndFor
		\EndFor
		\State \Return $\mathit{Prefs}$
		\EndFunction
	\end{algorithmic}
	\caption{Approximately compute preferences}
	\label{alg:approximate-computing-preferences}
\end{algorithm}

The following are the main steps in Algorithm~\ref{alg:approximate-computing-preferences}:
\begin{itemize}
\item Lines $3-11$: Compute all Case $1$ preferences.
    \begin{itemize}
        \item Line $3$: Iteratively pick a single argument $A$ from the extension $\mathcal{E}$.
	    \item Line $4$: Find all arguments $B$ that attack $A$.
	    \item Lines $5-10$: For each $B$, if there is no unattacked argument $C$ (where $C \neq A$ and $C \in \mathcal{E}$) that attacks $B$, then compute each preference of the form $A > B$ and add it to the set of preferences $\mathit{Prefs}$.
\end{itemize}
\item Lines $12-18$: Compute Case $2$ preferences and combine them with Case $1$ preferences.
    \begin{itemize}
	   \item Line $12$: Iteratively pick a single argument $A$ from the extension $\mathcal{E}$.
	   \item Line $13$: Find all argument $B$ that $A$ attacks.
	   \item Lines $14-17$: For all arguments $B$ attacked by $A$, generate a random preference $p$ such that $p \in \{A>B,\: A=B\}$, and add it to the set of preferences $\mathit{Prefs}$, as per lines $15-16$. 
    \end{itemize}
\item Lines $19-28$: Compute Case $3$ preferences and combine them with Case $2$ and Case $3$ preferences.
    \begin{itemize}
	   \item Line $19$: Iteratively pick a single argument $A$ from the extension $\mathcal{E}$.
	   \item Line $20$: Find all arguments $B$ that attack $A$.
	   \item Lines $21-27$: For each $B$, if there is an unattacked argument $C$ (where $C \neq A$ and $C \in \mathcal{E}$) that attacks $B$, then generate a random preference $p$ such that $p \in \{A>B,\: A=B,\: B>A\}$, and add it to the set of preferences $\mathit{Prefs}$, as per lines $24-25$.
    \end{itemize}
\item Line $29$: Return the final set of preferences $\mathit{Prefs}$.
\end{itemize}

We establish that our approach is sound (that is, its output is correct).

\begin{theorem}{(Soundness):}
\label{theorem:soundness}
	Algorithm~\ref{alg:approximate-computing-preferences} is sound in that given an abstract argumentation framework  $\mathit{AAF}$ and an extension $\mathcal{E}$ as input, the output preference set $\mathit{Prefs}$, when applied to the $\mathit{AAF}$ results in the input $\mathcal{E}$ (under a given semantics).
\end{theorem}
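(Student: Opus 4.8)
The plan is to mirror the structure of the soundness proof for Algorithm~\ref{alg:computing-preferences} (Theorem~\ref{theorem:soundness}), observing that Algorithm~\ref{alg:approximate-computing-preferences} is essentially a ``single-branch'' restriction of Algorithm~\ref{alg:computing-preferences}: wherever the exhaustive algorithm forks a preference set into several (two in Case~2, three in Case~3), the approximate algorithm picks exactly one of those branches at random. So the key claim is that \emph{every} branch the exhaustive algorithm could take yields a correct preference set, and hence any single random selection among them is also correct. I would first note that the output $\mathit{Prefs}$ of Algorithm~\ref{alg:approximate-computing-preferences} is, by construction, one of the sets $\mathit{Prefs}\in\mathit{PrefSet}$ that Algorithm~\ref{alg:computing-preferences} produces: lines $3$--$11$ generate exactly the Case~1 preferences of Algorithm~\ref{alg:computing-preferences-case1}; lines $12$--$18$ add, for each $A\in\mathcal{E}$ and each $B$ attacked by $A$ with $(B,A)\notin\mathcal{R}$, one of $\{A>B,\,A=B\}$, which is one of the two alternatives added in lines $8$--$9$ of Algorithm~\ref{alg:computing-preferences-case2}; lines $19$--$28$ add, for each $A\in\mathcal{E}$ with an attacker $B$ defended by some unattacked $C\in\mathcal{E}$, one of $\{A>B,\,A=B,\,B>A\}$, which is one of the three alternatives in lines $9$--$11$ of Algorithm~\ref{alg:computing-preferences-case3}. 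Therefore $\mathit{Prefs}$ coincides with some member of the $\mathit{PrefSet}$ output by Algorithm~\ref{alg:computing-preferences}, and soundness then follows directly from Theorem~\ref{theorem:soundness} (soundness of Algorithm~\ref{alg:computing-preferences}).

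If a more self-contained argument is wanted, I would argue case by case exactly as in the proof of Theorem~\ref{theorem:soundness}. For Case~1: each $A>B$ with $A\in\mathcal{E}$, $(B,A)\in\mathcal{R}$, and no unattacked $C\in\mathcal{E}\setminus\{A\}$ attacking $B$ invalidates the attack $(B,A)$, so $A$ survives; conflict-freeness of $\mathcal{E}$ gives $B\notin\mathcal{E}$. For Case~2: $A$ attacks $B$ and $B$ does not attack $A$, so whether we add $A>B$ or $A=B$, the attack $(A,B)$ still succeeds and there is no reverse attack, so nothing changes the status of $A$ or $B$; $A$ remains in and $B$ remains out. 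For Case~3: $A$ is attacked by $B$ but defended by an unattacked $\gamma\in\mathcal{E}$ that attacks $B$; regardless of which of $A>B$, $A=B$, $B>A$ is chosen, $B$ is defeated by $\gamma$ (which is unattacked, hence unconditionally in every extension), so $B\notin\mathcal{E}$ and $A$ is defended. Collecting these observations shows that applying $\mathit{Prefs}$ to $\mathit{AAF}$ leaves exactly the arguments of $\mathcal{E}$ acceptable, i.e.\ yields $\mathcal{E}$ under the given semantics.

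The main obstacle — and the place where care is needed — is the interaction between the three groups of preferences when they are all placed in a \emph{single} set $\mathit{Prefs}$: one must check that the random Case~2 and Case~3 choices cannot ``cancel'' a Case~1 preference or create a new conflict. The argument above handles this by noting that Case~2 preferences only concern pairs $(A,B)$ with $A\in\mathcal{E}$, $B\notin\mathcal{E}$, $A$ attacking $B$ one-directionally, so they never weaken any attack \emph{into} $\mathcal{E}$; and Case~3 preferences concern an attacker $B\notin\mathcal{E}$ that is independently defeated by an unattacked argument of $\mathcal{E}$, so no choice for the $(A,B)$ pair can rescue $B$. One should also remark that the preferences generated may in principle be mutually redundant or, across different $A$'s, assign both $A>B$ and $C>B$ for the same $B$ — but this is consistent, not contradictory, so $\mathit{Prefs}$ remains a well-defined (pre-)ordering once transitively closed, exactly as in the exhaustive case. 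Hence the conclusion follows, completing the proof.
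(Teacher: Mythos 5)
Your proposal is correct and takes essentially the same approach as the paper: the case-by-case argument in your second paragraph (Case 1 preferences invalidating attacks into $\mathcal{E}$, Cases 2 and 3 admitting any of the alternative preferences because $A$'s membership is secured either way) is precisely the paper's proof. Your opening observation that the output of Algorithm~\ref{alg:approximate-computing-preferences} is always a member of the $\mathit{PrefSet}$ produced by Algorithm~\ref{alg:computing-preferences}, so that soundness follows from the earlier soundness theorem, is a clean packaging the paper does not state explicitly but amounts to the same reasoning.
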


\begin{proof}
We prove this by exploring all cases and how these are handled by the algorithm.
The set of preferences computed for each subset of arguments $\alpha, \beta, \gamma \subseteq \mathcal{A}$  is such that $\alpha, \gamma \subseteq \mathcal{E}, \beta\cap\mathcal{E} = \emptyset$. 

\noindent Lines $3-11$ computing each case $1$ preference of the form $A> B, A\in\mathcal{E}, B\in\beta, (B,A)\in \mathcal{R}$ ensure that the following holds:
\begin{enumerate}
	\item There is no $C\in\mathcal{E}, C\neq A$ such that $(C,B)\in\mathcal{R}$.
	\item $A \in \mathcal{E}$ since $A$ is preferred to its attacking argument $B$, which invalidates the attack $(B,A)\in\mathcal{R}$.
	\item Since the input extension $\mathcal{E}$ consists of conflict free arguments, if $A\in\mathcal{E}$ then its attacking argument $B\not\in\mathcal{E}$. This supports that $\beta\cap\mathcal{E}=\emptyset$.
\end{enumerate}
Lines $12-18$ computing a random case $2$ preference $p$ of the form $p\in \{A>B, A=B\}, A\in\mathcal{E}, B\in\beta, (A,B)\in\mathcal{R}, (B,A)\not\in\mathcal{R}$ ensure the following holds:
\begin{enumerate}
	\item Since $A$ attacks $B$ and $B$ does not attack $A$, we have two different
	preferences between $A$ and $B$, namely, $A > B, A = B$. Therefore $A\in\mathcal{E}$ 
	with respect to each of these preferences.
	\item A randomly generated preference $p \in \{A > B, A = B\}$ will be added to the preference set $\mathit{Prefs}$, as per lines 15 and 16. 
	We will have $\mathit{Prefs}\gets\mathit{Prefs}\cup p$, where $\mathit{Prefs}$ consists of preferences of case $1$.
\end{enumerate}
Lines $19-28$ computing a random case $3$ preference of the form $p \in \{A > B, A = B, B > A\}, A\in\alpha, B\in\beta, C\in\gamma, (B,A)\in\mathcal{R}, (C,B)\in\mathcal{R}$ ensure the following holds:
\begin{enumerate}
	\item  Since $C$ defends $A$ from the attack of $B$, we have three different preferences between $A$ and $B$, namely, $A>B$, $A=B$ and $B>A$. Therefore $A \in \mathcal{E}$ with respect to each of these preferences. 
	\item A randomly generated preference $p \in \{A > B, A = B, B > A\}$ will be added to the preference set $\mathit{Prefs}$, as per lines 24 and 25. 
	We will have $\mathit{Prefs}\gets\mathit{Prefs}\cup p$, where $\mathit{Prefs}$ consists of preferences of cases $1$ and $2$.
\end{enumerate}
\end{proof}

After having proved the soundness of Algorithm~\ref{alg:approximate-computing-preferences}, we establish its termination.

\begin{theorem}{(Termination):}
	Given an abstract argumentation framework  $\mathit{AAF}$ and an extension $\mathcal{E}$ as input, Algorithm~\ref{alg:approximate-computing-preferences} always terminates.
\end{theorem}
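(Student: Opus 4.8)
The plan is to mirror the structure of the termination proof already given for Algorithm~\ref{alg:computing-preferences}, since Algorithm~\ref{alg:approximate-computing-preferences} is essentially the concatenation of three loop nests that are structurally the same as (the loop bodies of) Algorithms~\ref{alg:computing-preferences-case1}, \ref{alg:computing-preferences-case2} and \ref{alg:computing-preferences-case3}, but with the inner ``for all sets of preferences'' loops replaced by single random-choice statements. First I would invoke the standing assumption that the input $\mathit{AAF} = \langle \mathcal{A}, \mathcal{R}\rangle$ and the extension $\mathcal{E}$ are finite; hence $\mathcal{A}$, $\mathcal{R}$ and $\mathcal{E}$ are all finite sets. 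This immediately makes every outer \textbf{for all} loop over $\mathcal{E}$ (lines $3$, $12$, $19$) terminate after finitely many iterations.

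Next I would handle each of the three blocks in turn. For lines $3$--$11$ (Case~$1$): for each fixed $A\in\mathcal{E}$ the set $\mathit{Attackers}=\{B\mid (B,A)\in\mathcal{R}\}$ is a subset of $\mathcal{A}$ and therefore finite, so the inner \textbf{for all} loop over $\mathit{Attackers}$ terminates; the body consists only of a set comprehension defining $\mathit{Defenders}$ (a finite computation, since it ranges over the finite sets $\mathcal{E}$ and $\mathcal{A}$) and a conditional assignment, so each iteration terminates. For lines $12$--$18$ (Case~$2$): similarly $\mathit{Attacked}=\{B\mid (A,B)\in\mathcal{R}\wedge (B,A)\notin\mathcal{R}\}\subseteq\mathcal{A}$ is finite, the inner loop terminates, and each body is a single random draw from the two-element set $\{A>B,\,A=B\}$ followed by a union, both of which are elementary terminating operations. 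For lines $19$--$28$ (Case~$3$): again $\mathit{Attackers}$ is finite, the inner loop terminates, and each iteration performs a finite set comprehension for $\mathit{Defenders}$, one random draw from the three-element set $\{A>B,\,A=B,\,B>A\}$, and a union. In particular the replacement of the inner ``for all $\mathit{Prefs}\in\mathit{PrefSet}$'' loops by single statements only removes a loop, so nothing new can diverge.

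Finally I would assemble these observations: the algorithm is a straight-line composition of three doubly nested loops, each with finite index sets and each with a body that terminates, followed by a single \textbf{return} statement on line $29$; a finite composition of terminating computations terminates, so Algorithm~\ref{alg:approximate-computing-preferences} always terminates. The one subtlety worth flagging explicitly is the status of ``generate a random preference $p$'': I would remark that this is assumed to be a primitive operation returning a value in a fixed finite set in bounded (expected) time, so it does not affect termination; there is no rejection sampling or unbounded search hidden in it. That is really the only point where the argument could be questioned, and it is dispatched by the modelling assumption rather than by any combinatorial work — so there is no hard step here, just a careful enumeration of the finitely many loops and a note on the randomisation primitive.
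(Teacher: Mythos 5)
Your proof is correct and follows essentially the same route as the paper's: finiteness of $\mathcal{E}$ bounds the three outer loops, finiteness of $\mathit{Attackers}$ and $\mathit{Attacked}$ (as subsets of $\mathcal{A}$) bounds the inner loops, and the loop bodies are elementary terminating operations. Your additional remark that the random-choice statement is a bounded primitive is a reasonable clarification the paper leaves implicit, but it does not change the argument.
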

\begin{proof}
Algorithm~\ref{alg:approximate-computing-preferences} consists of three \textit{for loops} for computing each of the case $1$ (in lines $3-11$), case $2$ (in lines $12-18$), and case $3$ (in lines $19-28$) preferences for the output set of preferences. To prove Algorithm~\ref{alg:approximate-computing-preferences} terminates we consider the termination of each of the \textit{for loops} for the three cases individually. Since we assume that both the input abstract argumentation framework  $\mathit{AAF}$ and an extension $\mathcal{E}$ are finite, therefore the three \textit{for loops} which iterate over all the elements of the extension in lines $3-11$, lines $12-18$ and lines $19-28$ will always terminate. The rest of the proof explores each case in turn.

Within the \textit{for loop} in lines $3-11$, since the set of $\mathit{Attackers}$, i.e., all the arguments that attack $A$, is finite therefore the \textit{for loop} in lines $5$-$10$ will always terminate. Within the \textit{for loop} in lines $12-18$, since the set of $\mathit{Attacked}$, i.e., all the arguments that are attacked by $A$, is finite therefore the \textit{for loop} in lines $14$-$17$ will always terminate. Within the \textit{for loop} in lines $19-28$, since the set of $\mathit{Attackers}$, i.e., all the arguments that attack $A$ is finite, therefore the \textit{for loop} in lines $21$-$27$ will always terminate. 

Thus, we have proved that Algorithm~\ref{alg:approximate-computing-preferences} always terminates.
\end{proof}

\section{Verifying Preferences}
\label{sec:verify-prefs}

As mentioned in Section~\ref{sec:related-work}, preference-based argumentation frameworks~\cite{Amgoud:1998} ignore or remove the attacks where the attacked argument is stronger than the attacking argument. It was found out later that the resulting extension violates the basic condition imposed on acceptability semantics, which is the conflict-freeness of extensions. This problem was later resolved in a new preference-based argumentation framework that guarantees conflict-free extensions with a symmetric conflict relation~\cite{Amgoud2011,AMGOUD2014585,MODGIL2009901}. The preference relation is then used to determine the direction of the defeat relation between the two arguments. 

Since we assume our input extension to be conflict-free both approaches of attack removal or reversal would work. Therefore, we consider two methods for the application of preferences, and verifying that this results in our desired input extension. A preference-based argumentation framework (PAF) can be transformed into an abstract argumentation framework (AAF) by: \begin{enumerate}
\item applying preferences by attack removal, or, 
\item applying preferences by attack reversal.
\end{enumerate}

We now present the formal definitions for both methods of applying preferences to transform a PAF into an AAF.

\begin{definition}{(Applying preferences by attack removal):}
\label{def:attack_removal}
A preference-based argumentation framework $(\mathcal{A}, \mathit{Def}, \geq)$ can be transformed into an abstract argumentation framework $(\mathcal{A}, \mathcal{R})$ as follows: $\forall A,B \in \mathcal{A}$, it is the case that $(B,A) \in \mathcal{R}$ iff $(B,A)\in \mathit{Def}$ and it is not the case that $A>B$.
\end{definition}

In other words, if $B$ defeats $A$ and $A$ is preferred to $B$ then the attack $(B,A)$ will not appear in the abstract argumentation framework $(\mathcal{A}, \mathcal{R})$.

\begin{definition}{(Applying preferences by attack reversal):}
\label{def:attack_reversal}
A preference-based argumentation framework $(\mathcal{A}, \mathit{Def}, \geq)$ can be transformed into an abstract argumentation framework $(\mathcal{A}, \mathcal{R})$ as follows: 
	\begin{enumerate}[(i)]
	\item $\forall A,B \in \mathcal{A}$, it is the case that $(B,A) \not\in \mathcal{R}$ and $(A,B) \in \mathcal{R}$ iff $(B,A)\in \mathit{Def}$ and it is the case that $A>B$.
	\item $\forall A,B \in \mathcal{A}$, it is the case that $(B,A) \in \mathcal{R}$ iff $(B,A)\in \mathit{Def}$ and it is not the case that $A>B$.	
	\end{enumerate}
\end{definition}

In other words, if $B$ defeats $A$ and $A$ is preferred to $B$ then the attack $(B,A)$ will not appear in the abstract argumentation framework $(\mathcal{A}, \mathcal{R})$, and the reverse attack $(A,B)$ will appear in the abstract argumentation framework $(\mathcal{A}, \mathcal{R})$. Also, if $B$ defeats $A$ and $A$ is not preferred to $B$ then the attack $(B,A)$ will appear in the abstract argumentation framework $(\mathcal{A}, \mathcal{R})$.

	
	
	

\begin{example}
	\label{ex:verification-example}
We refer back to the abstract argumentation framework $(\mathcal{A}, \mathcal{R})$ of Figure~\ref{fig:abst_arg_graph1}, where $\mathcal{A} = \lbrace A, B, C , D, E \rbrace$ and $\mathcal{R} = \lbrace (A, B), (C, B)$, $(C, D), (D, C), (D, E) \rbrace$. We consider the preferred extension $\mathcal{E}_1 = \{A,C,E\}$ to compute the following set of sets of preferences using Algorithm $1$: 

	  \begin{gather*}
	  	\{ \{C>D, A>B, C>B, E>D\}, \\
	  	\{C>D, A>B, C>B, E=D\}, \\
	  	\{C>D, A>B, C>B, D>E\}, \\
	  	\{C>D, A>B, C=B, E>D\}, \\
	  	\{C>D, A>B, C=B, E=D\}, \\
	  	\{C>D, A>B, C=B, D>E\}, \\ 
	    \{C>D, A=B, C>B, E>D\}, \\
	    \{C>D, A=B, C>B, E=D\}, \\
	    \{C>D, A=B, C>B, D>E\}, \\ 
	    \{C>D, A=B, C=B, E>D\}, \\
	    \{C>D, A=B, C=B, E=D\}, \\
	    \{C>D, A=B, C=B, D>E\}, \}
	  \end{gather*}

 We consider the first set of preferences $\mathit{Pref}_1 = \{C>D, A>B, C>B, E>D\}$ to demonstrate the application of preferences. The corresponding preference-based argumentation framework is denoted as $\mathit{PAF} = (\mathcal{A}, \mathit{Def}, \geq)$ which represents the AAF of Figure~\ref{fig:abst_arg_graph1} with a set of preferences $\mathit{Pref}_1 = \{C>D, A>B, C>B, E>D\}$. The preferences can be applied using two different methods, attack removal and attack reversal, defined earlier in Definitions~\ref{def:attack_removal} and~\ref{def:attack_reversal} to transform the PAF to an AAF. 
The application of the first method, i.e., attack removal, results in an abstract argumentation framework shown in Figure~\ref{fig:abst_arg_graph_removal}, where the attacks $(D,C) \in \mathcal{R}$ and $(D,E) \in \mathcal{R}$ have been removed. The application of the second method, i.e., attack reversal, results in an abstract argumentation framework shown in Figure~\ref{fig:abst_arg_graph_reversal}, where the attack $(C,D) \in \mathcal{R}$ represented as a dashed arrow denotes both the normal and reverse attacks and the attack $(E,D) \in \mathcal{R}$ represented as a dotted arrow denotes reverse attack.
 
 \begin{figure}
	\centering
	\begin{tikzpicture}[
	> = stealth, 
	shorten > = 1pt, 
	auto,
	node distance = 2cm, 
	thick 
	]
	
	\tikzstyle{every state}=[
	draw = black,
	thick,
	fill = white,
	minimum size = 4mm
	]
	\node[state] (A) {$A$};
	\node[state] (B) [right of=A] {$B$};
	\node[state] (C) [right of=B] {$C$};
	\node[state] (D) [right of=C] {$D$};
	\node[state] (E) [right of=D] {$E$};
	
	\path[->] (A) edge node {} (B);
	\path[->] (C) edge node {} (B);
	\path[->] (C) edge node {} (D);
	
	\end{tikzpicture}
	\caption{Transformed abstract argumentation framework $\mathit{AAF}_3$}
	\label{fig:abst_arg_graph_removal}
\end{figure}
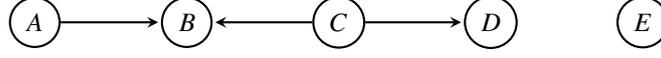

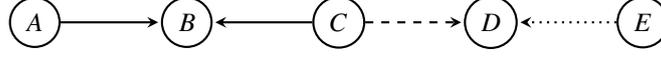
\begin{figure}
	\centering
	\begin{tikzpicture}[
	> = stealth, 
	shorten > = 1pt, 
	auto,
	node distance = 2cm, 
	thick 
	]
	
	\tikzstyle{every state}=[
	draw = black,
	thick,
	fill = white,
	minimum size = 4mm
	]
	\node[state] (A) {$A$};
	\node[state] (B) [right of=A] {$B$};
	\node[state] (C) [right of=B] {$C$};
	\node[state] (D) [right of=C] {$D$};
	\node[state] (E) [right of=D] {$E$};
	
	\path[->] (A) edge node {} (B);
	\path[->] (C) edge node {} (B);
	\path[->] (C) edge[dashed] node {} (D);
    \path[->] (E) edge[dotted] node {} (D);
	
	\end{tikzpicture}
	\caption{Transformed abstract argumentation framework $\mathit{AAF}_4$}
	\label{fig:abst_arg_graph_reversal}
\end{figure}

To verify that the set of preferences $\mathit{Pref}_1 = \{C>D, A>B, C>B, E>D\}$ is correct. We apply the preferences to get the transformed AAFs shown in Figures~\ref{fig:abst_arg_graph_removal} and~\ref{fig:abst_arg_graph_reversal}. The resulting preferred extension for both transformed AAFs is $\{A,C,E\}$. Thus, the set of preferences $\mathit{Pref}_1$ has been verified to be correct.

\end{example}

\subsection{Algorithms for Verifying Preferences}
\label{sec:alg-verify-prefs}

In order to assess and verify whether the computed set of preferences over arguments is correct, we need to test that the set of preferences when applied to a given abstract argumentation framework results in our original input extension under a given semantics. We now state the verification problem precisely as follows. 

\begin{problem}
\label{prob:verifying-preferences}
    Given an abstraction argumentation framework $AAF$, an extension $\mathcal{E}$, a semantics $\sigma \in \{\mathit{grounded, preferred, stable}\}$, and a set of sets of preferences $\mathit{PrefSet}$, each preference set $\mathit{Prefs} \in \mathit{PrefSet}$ when applied to the $AAF$ results in the single input extension $\mathcal{E}$ under a given semantics $\sigma$.
\end{problem}

To solve Problem~\ref{prob:verifying-preferences}, we present and describe Algorithms~\ref{alg:apply-preferences1} and~\ref{alg:apply-preferences2} for the verification of preferences following the two methods described above\footnote{Please note, although an algorithm for preference application by attack removal will suffice for this work, we have presented the algorithm for preference application by attack reversal for comparison of both approaches in our experiments that are presented later.}. The input to both algorithms is a tuple $\langle \mathit{AAF},\mathcal{E},\sigma,\mathit{PrefSet} \rangle$, consisting of:
\begin{itemize}

\item An abstract argumentation framework $\mathit{AAF} = (\mathcal{A}, \mathcal{R})$, where $\mathcal{A}$ is a set of arguments and $\mathcal{R}$ is an attack relation $(\mathcal{R} \subseteq \mathcal{A} \times \mathcal{A})$.
		\item An extension $\mathcal{E}$ consists of a finite number of conflict-free arguments such that $\mathcal{E} \subseteq \mathcal{A}$.
		\item $\sigma$ is a given semantic, where 
		$\sigma \in \{\mathit{grounded, preferred, stable}\}$.
		\item $\mathit{PrefSet}$ is a set of sets of preferences, where each set of preferences $\mathit{Prefs} \in \mathit{PrefSet}$ is represented as $\mathit{Prefs} = \lbrace A > B, B = C, .... \rbrace$ such that $\lbrace A,B,C, .... \rbrace \subseteq \mathcal{A}$. 
\end{itemize}

Both algorithms output a Boolean variable $\mathit{vcheck}$, which is \textit{true} if all $\mathit{Prefs} \in \mathit{PrefSet}$ are correct, otherwise it is \textit{false}.

The main functionalities of Algorithm~\ref{alg:apply-preferences1} are:
\begin{itemize}
    \item \textit{ApplyPreferences1} applies the preferences using the first method of attack removal in the abstract argumentation framework given in Definition~\ref{def:attack_removal}.
    \item \textit{ComputeExtensions} computes and returns all the extensions in the abstract argumentation framework given a particular semantics $\sigma$\footnote{We note that this functionality is not defined as we use an existing function implemented in the Tweety library~\cite{Thimm:2017e} for this purpose.}.
    \item The set of preferences are verified by getting the transformed abstract argumentation framework returned by \textit{ApplyPreferences1} and checking that the resulting extension for a given semantics $\sigma$ returned by \textit{ComputeExtensions} function is the same (and only) as the input extension.
\end{itemize} 
\begin{algorithm}[H]
	\begin{algorithmic}[1]
		\Require $\mathit{AAF}$, an abstract argumentation framework
		\Require $\mathcal{E}$, an extension consisting of conflict-free arguments
		\Require $\sigma$, semantics for computing the extension
		\Require $\mathit{PrefSet}$, a set of sets of preferences
		\Ensure $\mathit{vcheck}$, a boolean value indicating preference verification success or failure
		\Function{VerifyPreferences$_1$}{$\mathit{AAF},\mathcal{E},\sigma,\mathit{PrefSet}$}
		\State $\mathit{count} \gets 0$ \Comment{number of correct sets of preferences}
		\ForAll{$\mathit{Prefs} \in\mathit{PrefSet}$}  
		\State $\mathit{AAF'} \gets$ ApplyPreferences$_1(\mathit{AAF}, \mathit{Prefs})$
		\State $\mathcal{E}_s \gets$ ComputeExtensions$(\mathit{AAF'}, \sigma)$
 		\If{$|\mathcal{E}_s|=1$ \& $\mathcal{E'} \in \mathcal{E}_s$ s.t. $\mathcal{E'} = \mathcal{E}$}
 		\Comment computed extension is equal to $\mathcal{E}$
  		\State $\mathit{count} \gets \mathit{count} + 1$ \Comment increment $\mathit{count}$ 
		\EndIf  
		\EndFor
		\If{$|\mathit{PrefSet}|=|count|$} \Comment all sets of preferences in $\mathit{PrefSet}$ are correct
		\State $\mathit{vcheck} \gets \mathit{true}$
		\Else 
		\State $\mathit{vcheck} \gets \mathit{false}$
		\EndIf
		\State \Return $\mathit{vcheck}$
		\EndFunction
	\end{algorithmic}
	\caption{Verify Preferences (Attack Removal)}
	\label{alg:apply-preferences1}	
\end{algorithm}

\begin{algorithm}[H]
	\begin{algorithmic}[1]
		\Require $\mathit{AAF}$, an abstract argumentation framework
		\Require $\mathit{Prefs}$, a set of preferences
      	\Ensure $\mathit{AAF}'$, an updated abstract argumentation framework
		\Function{ApplyPreferences$_1$}{$\mathit{AAF},\mathit{Prefs}$}
		\ForAll{$(B,A) \in \mathcal{R}$}  
		\If{$A>B \not\in \mathit{Prefs}$}  \Comment $A$ is not preferred to its attacker $B$
		\State $\mathcal{R'} \gets \mathcal{R'} \cup \{ (B,A) \}$ \Comment add attack
		\EndIf
		\EndFor		
		\State $\mathit{AAF'} \gets (\mathcal{A}, \mathcal{R'})$ \Comment argumentation framework $\mathit{AAF'}$ with attack relation $\mathcal{R'}$
		\State \Return $\mathit{AAF'}$
		\EndFunction
	\end{algorithmic}
	\caption{Apply Preferences (Attack Removal)}
	\label{alg:apply-preferences1-algorithm}	
\end{algorithm}

We establish that Algorithm~\ref{alg:apply-preferences1} is sound (that is, its output is correct) and complete (that is, it outputs all solutions). We start with its soundness:

\begin{theorem}{(Soundness):}
\label{theorem:soundness}
	Algorithm~\ref{alg:apply-preferences1} is sound in that given an abstract argumentation framework  $\mathit{AAF}$, an extension $\mathcal{E}$, a set of sets of preferences $\mathit{PrefSet}$, and semantics $\sigma$ as input, each preference set $\mathit{Prefs}\in \mathit{PrefSet}$, when applied to the $\mathit{AAF}$ results in the input $\mathcal{E}$ (under a given semantics $\sigma$).
\end{theorem}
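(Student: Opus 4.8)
The plan is to prove soundness of Algorithm~\ref{alg:apply-preferences1} by reducing it to the soundness of Algorithm~\ref{alg:computing-preferences} (Theorem~\ref{theorem:soundness} for the computation side) together with a correctness argument for the auxiliary procedure \textit{ApplyPreferences$_1$}. First I would observe that Algorithm~\ref{alg:apply-preferences1} sets $\mathit{vcheck}\gets\mathit{true}$ precisely when its counter reaches $|\mathit{PrefSet}|$, i.e.\ when for \emph{every} $\mathit{Prefs}\in\mathit{PrefSet}$ the framework $\mathit{AAF}'=\textit{ApplyPreferences}_1(\mathit{AAF},\mathit{Prefs})$ has a unique $\sigma$-extension equal to $\mathcal{E}$. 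So the soundness claim amounts to: whenever the algorithm reports \textit{true}, it is indeed the case that each such $\mathit{Prefs}$ transforms $\mathit{AAF}$ (via attack removal) into a framework whose unique $\sigma$-extension is $\mathcal{E}$ — which is exactly what the loop body tests explicitly. Hence the real content is that the loop body \emph{faithfully} implements the semantic test, and this splits into two lemmas.

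The first step is to verify that \textit{ApplyPreferences$_1$} correctly realises Definition~\ref{def:attack_removal}: I would show that the output relation $\mathcal{R}'$ built in lines 2--6 satisfies $(B,A)\in\mathcal{R}'$ iff $(B,A)\in\mathcal{R}$ and $A>B\notin\mathit{Prefs}$, which is immediate by inspection of the single conditional. Therefore $\mathit{AAF}'$ is exactly the AAF that the PAF $(\mathcal{A},\mathit{Def},\geq)$ — with $\mathit{Def}=\mathcal{R}$ and $\geq$ induced by $\mathit{Prefs}$ — represents under attack removal. The second step is to note that \textit{ComputeExtensions}$(\mathit{AAF}',\sigma)$ returns, by assumption on the library function, the set of all $\sigma$-extensions of $\mathit{AAF}'$; hence the guard on line 6, $|\mathcal{E}_s|=1$ and the single element equals $\mathcal{E}$, holds iff $\mathcal{E}$ is the \emph{unique} $\sigma$-extension of the transformed framework. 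Combining these, $\mathit{count}$ counts exactly those $\mathit{Prefs}$ that are ``correct'' in the sense of Problem~\ref{prob:verifying-preferences}, so $\mathit{vcheck}=\mathit{true}$ implies all of $\mathit{PrefSet}$ is correct.

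A subtlety I would address explicitly is the role of the input $\mathit{PrefSet}$: soundness of the \emph{verification} algorithm is a statement about what the algorithm certifies, and does not itself require that $\mathit{PrefSet}$ came from Algorithm~\ref{alg:computing-preferences}. However, to connect with the surrounding narrative I would remark that when $\mathit{PrefSet}$ is the output of Algorithm~\ref{alg:computing-preferences}, Theorem~\ref{theorem:soundness} (soundness of computation) already guarantees each $\mathit{Prefs}$ yields $\mathcal{E}$; the verification algorithm then serves as an independent re-check, and its soundness ensures it will not spuriously reject a correct set nor accept an incorrect one.

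The main obstacle I anticipate is pinning down the exact semantic bridge between ``$A$ is preferred to its attacker $B$, so the attack is invalidated'' in the informal cases (Case 1--3) and the formal condition $A>B\notin\mathit{Prefs}$ used in line 3 of \textit{ApplyPreferences$_1$}: one must be careful that $\mathit{Prefs}$ may contain equalities $A=B$ and that, under attack removal, an attack $(B,A)$ survives whenever $A>B$ is \emph{not} explicitly present (covering both $B>A$, $A=B$, and the unspecified case). I would handle this by working directly from Definition~\ref{def:attack_removal} rather than from the intuitive descriptions, treating $\geq$ as the relation whose strict part is exactly the set of ``$>$'' pairs listed in $\mathit{Prefs}$, so that the algorithmic test and the definition coincide by construction. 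The remaining details — the loop terminates and the counter arithmetic is correct — are routine and parallel the termination argument already given for Algorithm~\ref{alg:computing-preferences}.
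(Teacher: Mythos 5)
Your proposal is correct and follows essentially the same route as the paper's proof: first establish that \textit{ApplyPreferences$_1$} faithfully implements the attack-removal transformation of Definition~\ref{def:attack_removal}, then observe that the loop's guard via \textit{ComputeExtensions} tests exactly the required semantic condition and the counter arithmetic propagates this to all of $\mathit{PrefSet}$. Your treatment is in fact somewhat more careful than the paper's — in particular your explicit handling of the $A>B\notin\mathit{Prefs}$ condition versus equalities, and your remark that soundness of a verifier concerns what it certifies rather than the provenance of $\mathit{PrefSet}$ — but the decomposition and key steps coincide.
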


\begin{proof}
We prove this by first proving the auxiliary Algorithm~\ref{alg:apply-preferences1-algorithm} for applying preferences. Algorithm~\ref{alg:apply-preferences1-algorithm} goes through all attacks $(B,A) \in \mathcal{R}$ in the input AAF and adds the attack $(B,A)$ to the attack relation $\mathcal{R}'$ where the attacked argument $A$ is not preferred to the attacking argument $B$ as per lines $3-4$. This ensures that any attacks of the form $(B,A)$ where the attacked argument $A$ is preferred to the attacking argument $B$ are not added to the attack relation $\mathcal{R}'$. This results in the transformed $AAF'= (\mathcal{A}, \mathcal{R}')$.

Verification is then performed by invoking \textit{ComputeExtensions} that computes and returns all the extensions in the transformed abstract argumentation framework $AAF'$ for a given semantics $\sigma$. 
Since, if the output is a single extension which is equal to the input extension as per lines $6-8$, therefore $\mathit{Prefs}$ is verified to be correct. Thus, $\mathit{PrefSet}$ is then verified to be correct if the verification for each preference set $\mathit{Prefs}\in \mathit{PrefSet}$ is correct as per lines $10-14$.

\end{proof}

\begin{theorem}{(Completeness):}
	Algorithm~\ref{alg:apply-preferences1} is complete in that given an abstract argumentation framework  $\mathit{AAF}$, an extension $\mathcal{E}$, a set of sets preferences $\mathit{PrefSet}$, and semantics $\sigma$ as input, each $\mathit{Prefs}\in\mathit{PrefSet}$ is verified to be correct, i.e., each $\mathit{Prefs}\in\mathit{PrefSet}$ when applied to the $\mathit{AAF}$ results in the input $\mathcal{E}$ (under a given semantics $\sigma$). 
\end{theorem}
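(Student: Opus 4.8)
The plan is to prove completeness as the converse of the soundness statement just established for Algorithm~\ref{alg:apply-preferences1}: soundness shows that whenever the algorithm declares a preference set correct it really is, and completeness must show that whenever a preference set really is correct the algorithm declares it so — hence no correct set is overlooked, and a $\mathit{PrefSet}$ all of whose members are correct is accepted. First I would recall, exactly as in the preceding soundness proof, that the auxiliary routine \textit{ApplyPreferences}$_1$ (Algorithm~\ref{alg:apply-preferences1-algorithm}) faithfully realises the attack-removal transformation of Definition~\ref{def:attack_removal}: for every $(B,A)\in\mathcal{R}$ it retains the attack in $\mathcal{R}'$ precisely when $A>B\notin\mathit{Prefs}$ (lines $3$–$4$), so the returned $\mathit{AAF}'=(\mathcal{A},\mathcal{R}')$ is exactly the AAF obtained from the PAF induced by $\mathit{Prefs}$ via attack removal. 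This fact is reused verbatim; no new work is needed for it.

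Next I would run the key equivalence. Since \textit{ComputeExtensions}$(\mathit{AAF}',\sigma)$ returns \emph{all} $\sigma$-extensions of $\mathit{AAF}'$ (taken as given, as it is the Tweety primitive), the test in lines $6$–$8$ — $|\mathcal{E}_s|=1$ together with the existence of $\mathcal{E}'\in\mathcal{E}_s$ with $\mathcal{E}'=\mathcal{E}$ — holds if and only if $\mathit{AAF}'$ has a unique $\sigma$-extension and that extension is $\mathcal{E}$. By the specification of a correct preference set in Problem~\ref{prob:verifying-preferences}, this condition is exactly "$\mathit{Prefs}$ applied to $\mathit{AAF}$ yields the single input extension $\mathcal{E}$ under $\sigma$''. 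Consequently, for every $\mathit{Prefs}\in\mathit{PrefSet}$ that is in fact correct, $\mathit{count}$ is incremented in line $7$; if all members of $\mathit{PrefSet}$ are correct then $\mathit{count}=|\mathit{PrefSet}|$ and lines $10$–$14$ set $\mathit{vcheck}\gets\mathit{true}$. Because the outer loop (line $3$) ranges over the finite set $\mathit{PrefSet}$ and the inner loop of \textit{ApplyPreferences}$_1$ ranges over the finite relation $\mathcal{R}$, every preference set is inspected, so nothing is missed — which is the completeness claim.

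The main obstacle, such as it is, is making the "if and only if'' between the line-$6$ test and the Problem~\ref{prob:verifying-preferences} notion of correctness airtight, and in particular handling the cardinality clause $|\mathcal{E}_s|=1$ carefully. For $\sigma=\mathit{grounded}$ uniqueness is automatic; for $\sigma\in\{\mathit{preferred},\mathit{stable}\}$ uniqueness is an integral part of what "correct'' means here (cf.\ the footnote to Problem~\ref{prob:computing-preferences}), so a $\mathit{Prefs}$ whose transformed framework admits $\mathcal{E}$ alongside some other $\sigma$-extension is correctly \emph{not} counted. Once \textit{ApplyPreferences}$_1$ is known to implement Definition~\ref{def:attack_removal} and \textit{ComputeExtensions} is treated as a correct black box, both the set-equality check and the cardinality check line up exactly with the specification, and no further argumentation-theoretic reasoning is required.
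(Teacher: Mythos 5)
Your proposal is correct and follows essentially the same route as the paper's own proof: both argue that \textit{ApplyPreferences}$_1$ (Algorithm~\ref{alg:apply-preferences1-algorithm}) realises the attack-removal transformation of Definition~\ref{def:attack_removal}, that the loop in lines $3$--$9$ of Algorithm~\ref{alg:apply-preferences1} inspects every $\mathit{Prefs}\in\mathit{PrefSet}$, and that the check in lines $6$--$8$ together with the count comparison in lines $10$--$14$ yields $\mathit{vcheck}=\mathit{true}$ exactly when every preference set passes. Your treatment is in fact somewhat more careful than the paper's, in particular in making explicit the equivalence between the $|\mathcal{E}_s|=1$ test and the notion of correctness in Problem~\ref{prob:verifying-preferences} for single- versus multi-extension semantics.
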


\begin{proof}
Algorithm~\ref{alg:apply-preferences1} goes through each $\mathit{Prefs}\in\mathit{PrefSet}$ to verify its correctness, as per lines $3-9$. It then invokes Algorithm~\ref{alg:apply-preferences1-algorithm} to apply each $\mathit{Prefs}$ to the $AAF$ and get the updated $AAF'$ as per line $4$. \textit{ComputeExtensions} is then invoked to get the extension of the updated $AAF'$ under a given semantics $\sigma$.

Algorithm~\ref{alg:apply-preferences1-algorithm} goes through all attacks $(B,A) \in \mathcal{R}$ in the input AAF and adds the attack $(B,A)$ to the attack relation $\mathcal{R}'$ where the attacked argument $A$ is not preferred to the attacking argument $B$ as per lines $2-6$.

Lines $10-14$ in Algorithm~\ref{alg:apply-preferences1} ensure that all $\mathit{Prefs}\in\mathit{PrefSet}$ are verified to be correct.
\end{proof}

After having proved the soundness and completeness of Algorithm~\ref{alg:apply-preferences1}, we establish its termination.

\begin{theorem}{(Termination):}
	Given an abstract argumentation framework  $\mathit{AAF}$, an extension $\mathcal{E}$, a set of sets preferences $\mathit{PrefSet}$, and semantics $\sigma$ as input, Algorithm~\ref{alg:apply-preferences1} always terminates.
\end{theorem}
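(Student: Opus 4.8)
The plan is to mirror the structure of the termination proofs already given for Algorithms~\ref{alg:computing-preferences} and~\ref{alg:approximate-computing-preferences}: reduce the termination of Algorithm~\ref{alg:apply-preferences1} to the termination of its two subroutines (the auxiliary Algorithm~\ref{alg:apply-preferences1-algorithm}, \textit{ApplyPreferences}$_1$, and the external \textit{ComputeExtensions}) together with the finiteness of every loop in the main routine. First I would invoke the standing assumption that the input abstract argumentation framework $\mathit{AAF} = (\mathcal{A}, \mathcal{R})$ is finite and, since $\mathit{PrefSet}$ is itself a finite set of finite preference sets over $\mathcal{A}$, every collection iterated over is finite.

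Next I would handle the subroutines. For Algorithm~\ref{alg:apply-preferences1-algorithm}, the single \textit{for loop} ranges over $(B,A) \in \mathcal{R}$, which is finite because $\mathcal{R} \subseteq \mathcal{A}\times\mathcal{A}$ and $\mathcal{A}$ is finite; the body performs a constant-time membership test $A>B \not\in \mathit{Prefs}$ and at most one set insertion, so \textit{ApplyPreferences}$_1$ terminates, returning a well-defined $\mathit{AAF}'=(\mathcal{A},\mathcal{R}')$ with $\mathcal{R}'\subseteq\mathcal{A}\times\mathcal{A}$ and hence finite. For \textit{ComputeExtensions}, I would appeal to the fact (noted in the text, via the Tweety library) that computing the grounded, preferred, or stable extensions of a finite AAF is a decidable, terminating procedure; its input $\mathit{AAF}'$ is finite, so the call terminates with a finite set $\mathcal{E}_s$ of extensions.

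Finally I would assemble the argument for the main routine: the outer \textit{for loop} in lines $3$--$9$ iterates over $\mathit{Prefs}\in\mathit{PrefSet}$, a finite set, and each iteration performs finitely many terminating steps — one call to \textit{ApplyPreferences}$_1$, one call to \textit{ComputeExtensions}, the cardinality check $|\mathcal{E}_s|=1$, an equality test $\mathcal{E}'=\mathcal{E}$ between finite sets, and possibly an increment of $\mathit{count}$; after the loop, the conditional in lines $10$--$14$ and the return in line $15$ are clearly terminating. Hence Algorithm~\ref{alg:apply-preferences1} always terminates.

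I do not anticipate any real obstacle here — the statement is essentially a bookkeeping argument, and its only slightly delicate point is the reliance on the termination of the externally supplied \textit{ComputeExtensions} function, which the paper has already licensed us to treat as a terminating black box for finite AAFs under the three semantics considered. The rest is a routine verification that every loop ranges over a finite set and every loop body executes finitely many elementary operations.
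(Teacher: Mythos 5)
Your proof is correct and takes essentially the same route as the paper's own proof: the outer \textit{for loop} terminates because $\mathit{PrefSet}$ is finite, and the loop in \textit{ApplyPreferences}$_1$ terminates because $\mathcal{R}$ is finite. Your explicit justification that \textit{ComputeExtensions} terminates on a finite $\mathit{AAF}'$ is a small point the paper leaves implicit, but it does not change the argument.
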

\begin{proof}
Algorithm~\ref{alg:apply-preferences1} consists of one \textit{for loop} that goes through each $\mathit{Prefs}\in\mathit{PrefSet}$ to verify its correctness, as per lines $3-9$. Since we assume that  the input $\mathit{PrefSet}$ is finite, therefore the \textit{for loop} which iterate over all the elements $\mathit{Prefs}\in\mathit{PrefSet}$ in lines $3-9$ will always terminate. 

Algorithm~\ref{alg:apply-preferences1} invokes Algorithm~\ref{alg:apply-preferences1-algorithm} to apply each $\mathit{Prefs}\in\mathit{PrefSet}$ to the $AAF$ and get the updated $AAF'$. We now prove that Algorithm~\ref{alg:apply-preferences1-algorithm} terminates. Since we assume that the attack relation $\mathcal{R}$ of the input $AAF$ is finite, therefore the \textit{for loop} which iterates over all the elements $(B,A)\in \mathcal{R}$ in lines $2-6$ will always terminate.

Thus, we have proved that Algorithm~\ref{alg:apply-preferences1} always terminates.
\end{proof}

The main functionalities of Algorithm~\ref{alg:apply-preferences2} are:
\begin{itemize}
\item \textit{ApplyPreferences2} applies the preferences using the second method of attack reversal in the abstract argumentation framework given in Definition~\ref{def:attack_reversal}.
   \item \textit{ComputeExtensions} computes and returns all the extensions in the abstract argumentation framework given a particular semantics $\sigma$. 
    \item The set of preferences are verified by getting the transformed abstract argumentation framework returned by \textit{ApplyPreferences2} and checking that the resulting extension for a given semantic $\sigma$ returned by \textit{ComputeExtensions} function  is the same (and only) as the input extension.
\end{itemize}

\begin{algorithm}[H]
	\begin{algorithmic}[1]
		\Require $\mathit{AAF}$, an abstract argumentation framework
		\Require $\mathcal{E}$, an extension consisting of conflict-free arguments
		\Require $\sigma$, semantics for computing the extension
		\Require $\mathit{PrefSet}$, a set of sets of preferences
		\Ensure $\mathit{vcheck}$, a boolean value indicating preference verification success or failure		
		\Function{VerifyPreferences$_2$}{$\mathit{AAF},\mathcal{E},\sigma, \mathit{PrefSet}$}
		\State $\mathit{count} \gets 0$ \Comment{number of correct sets of preferences}
		\ForAll{$\mathit{Prefs} \in\mathit{PrefSet}$}  
		\State $\mathit{AAF'} \gets$ ApplyPreferences$_2(\mathit{AAF}, \mathit{Prefs})$
		\State $\mathcal{E}_s \gets$ ComputeExtensions$(\mathit{AAF'}, \sigma)$
 		\If{$|\mathcal{E}_s|=1$ \& $\mathcal{E'} \in \mathcal{E}_s$ s.t. $\mathcal{E'} = \mathcal{E}$}
 		\Comment computed extension is equal to $\mathcal{E}$
  		\State $\mathit{count} \gets \mathit{count} + 1$ \Comment increment count
		\EndIf  
		\EndFor
		\If{$|\mathit{PrefSet}|=|count|$} \Comment all sets of preferences in $\mathit{PrefSet}$ are correct
		\State $\mathit{vcheck} \gets \mathit{true}$
		\Else 
		\State $\mathit{vcheck} \gets \mathit{false}$
		\EndIf
		\State \Return $\mathit{vcheck}$
		\EndFunction	
	\end{algorithmic}
	\caption{Verify Preferences (Attack Reversal)}
	\label{alg:apply-preferences2}	
\end{algorithm}

\begin{algorithm}[H]
	\begin{algorithmic}[1]
		\Require $\mathit{AAF}$, an abstract argumentation framework
  		\Require $\mathit{Prefs}$, a set of preferences		
    	\Ensure $\mathit{AAF}'$, an updated abstract argumentation framework
		\Function{ApplyPreferences$_2$}{$\mathit{AAF}, \mathit{Prefs}$}
		\ForAll{$(B,A) \in \mathcal{R}$}  
		\If{$A>B \in \mathit{Prefs}$} \Comment $A$ is preferred to its attacker $B$
		\State $\mathcal{R'} \gets \mathcal{R'} \cup \{ (A,B) \}$ \Comment add reverse attack
		\ElsIf{$A>B \not\in \mathit{Prefs}$} \Comment $A$ is not preferred to its attacker $B$
		\State $\mathcal{R'} \gets \mathcal{R'} \cup \{ (B,A) \}$ \Comment add attack
		\EndIf
		\EndFor
		\State $\mathit{AAF'} \gets (\mathcal{A}, \mathcal{R'})$ \Comment argumentation framework $\mathit{AAF'}$ with attack relation $\mathcal{R'}$
        \State \Return $\mathit{AAF'}$      
		\EndFunction		
	\end{algorithmic}
	\caption{Apply Preferences (Attack Reversal)}
	\label{alg:apply-preferences2-algorithm}	
\end{algorithm}

We establish that Algorithm~\ref{alg:apply-preferences2} is sound (that is, its output is correct) and complete (that is, it outputs all solutions). We start with its soundness:

\begin{theorem}{(Soundness):}
\label{theorem:soundness}
	Algorithm~\ref{alg:apply-preferences2} is sound in that given an abstract argumentation framework  $\mathit{AAF}$, an extension $\mathcal{E}$, a set of sets of preferences $\mathit{PrefSet}$, and semantics $\sigma$ as input, each preference set $\mathit{Prefs}\in \mathit{PrefSet}$, when applied to the $\mathit{AAF}$ results in the input $\mathcal{E}$ (under a given semantics $\sigma$).
\end{theorem}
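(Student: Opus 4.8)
The plan is to follow the same structure as the soundness proof of Algorithm~\ref{alg:apply-preferences1}, adapting it to the attack-reversal transformation of Definition~\ref{def:attack_reversal}. First I would argue that the auxiliary Algorithm~\ref{alg:apply-preferences2-algorithm} correctly realizes that transformation: treating the input attack relation $\mathcal{R}$ as the defeat relation $\mathit{Def}$ of the PAF $(\mathcal{A},\mathcal{R},\geq)$ whose strict preorder $\geq$ is induced by $\mathit{Prefs}$, the loop over all $(B,A)\in\mathcal{R}$ in lines $2$--$8$ does exactly two things: when $A>B\in\mathit{Prefs}$ it adds the reversed attack $(A,B)$ to $\mathcal{R}'$ and omits $(B,A)$, which is clause (i) of Definition~\ref{def:attack_reversal}; and when $A>B\notin\mathit{Prefs}$ it retains $(B,A)$, which is clause (ii). Since every pair of $\mathcal{R}$ is visited exactly once and no pair outside $\mathcal{R}$ is ever introduced except as the reverse of such a pair, $\mathcal{R}'$ is precisely the attack relation obtained by applying $\mathit{Prefs}$ to $\mathit{AAF}$ by attack reversal, so $\mathit{AAF}' = (\mathcal{A},\mathcal{R}')$ is the AAF represented by that PAF.

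Next I would handle the verification loop of Algorithm~\ref{alg:apply-preferences2} itself, mirroring the argument used for Algorithm~\ref{alg:apply-preferences1}. For each $\mathit{Prefs}\in\mathit{PrefSet}$, line $4$ produces the transformed $\mathit{AAF}'$ via \textit{ApplyPreferences}$_2$ and line $5$ calls \textit{ComputeExtensions} to obtain the set $\mathcal{E}_s$ of $\sigma$-extensions of $\mathit{AAF}'$; the guard on line $6$ increments $\mathit{count}$ only when $|\mathcal{E}_s|=1$ and the unique extension equals $\mathcal{E}$. Hence $\mathit{count}$ counts exactly those $\mathit{Prefs}$ whose application (by attack reversal) to $\mathit{AAF}$ yields $\mathcal{E}$ as the single $\sigma$-extension, and lines $9$--$13$ set $\mathit{vcheck}$ to \textit{true} iff $\mathit{count}=|\mathit{PrefSet}|$, i.e., iff every $\mathit{Prefs}$ has this property. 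Therefore, whenever the algorithm returns \textit{true}, each $\mathit{Prefs}\in\mathit{PrefSet}$ when applied to $\mathit{AAF}$ results in $\mathcal{E}$ under $\sigma$, which is the claimed soundness. As the paper does for \textit{ComputeExtensions}, I would state explicitly that this function is taken as a correct black box (the Tweety implementation), so its correctness is assumed rather than re-established here.

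I expect the only genuinely delicate point to be the first step, namely checking that Algorithm~\ref{alg:apply-preferences2-algorithm} faithfully implements Definition~\ref{def:attack_reversal} when $\mathcal{R}$ contains symmetric (mutual) attacks: if both $(A,B)$ and $(B,A)$ lie in $\mathcal{R}$ and, say, $A>B\in\mathit{Prefs}$, then the iteration must place $(A,B)$ — and only $(A,B)$ — into $\mathcal{R}'$ regardless of which of the two ordered pairs is being processed, so that $\mathcal{R}'$ still agrees with the definition. This relies on $\mathit{Prefs}$ not containing both $A>B$ and $B>A$, which holds because $\geq$ is a preorder and the preference sets produced by Algorithm~\ref{alg:computing-preferences} assign at most one relation to each unordered pair; I would make this observation explicit. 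The remaining steps are routine bookkeeping over the loops, exactly parallel to the proof for Algorithm~\ref{alg:apply-preferences1}.
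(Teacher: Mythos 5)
Your proposal is correct and follows essentially the same route as the paper's proof: first establish that the auxiliary \textit{ApplyPreferences}$_2$ routine realizes the attack-reversal transformation of Definition~\ref{def:attack_reversal} (reversing $(B,A)$ when $A>B\in\mathit{Prefs}$, retaining it otherwise), then argue that the counting loop and the final check $|\mathit{PrefSet}|=\mathit{count}$ certify that every preference set yields $\mathcal{E}$ as the unique $\sigma$-extension. Your additional remarks --- treating \textit{ComputeExtensions} as a trusted black box and checking the symmetric-attack case where both $(A,B)$ and $(B,A)$ lie in $\mathcal{R}$ --- are sound refinements that the paper's proof leaves implicit, but they do not change the argument's structure.
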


\begin{proof}
We prove this by first proving the auxiliary Algorithm~\ref{alg:apply-preferences2-algorithm} for applying preferences. Algorithm~\ref{alg:apply-preferences2-algorithm} goes through all attacks $(B,A) \in \mathcal{R}$ in the input AAF and adds:
\begin{itemize}
    \item the reverse attack $(A,B)$ to the the attack relation $\mathcal{R}'$ where the attacked argument $A$ is preferred to the attacking argument $B$ as per lines $3-4$.
    \item the attack $(B,A)$ to the the attack relation $\mathcal{R}'$ where the attacked argument $A$ is not preferred to the attacking argument $B$ as per lines $5-6$.
\end{itemize}
This ensures that any attacks of the form $(B,A)$ where the attacked argument $A$ is preferred to the attacking argument $B$ are not added to the attack relation $\mathcal{R}'$ and the reverse attacks of the form $(A,B)$ are added to the attack relation $\mathcal{R}'$. This results in the transformed $AAF'= (\mathcal{A}, \mathcal{R}')$.

Verification is then performed by invoking \textit{ComputeExtensions} that computes and returns all the extensions in the transformed abstract argumentation framework $AAF'$ for a given semantics $\sigma$. 
Since, if the output is a single extension which is equal to the input extension as per lines $6-8$, therefore the $\mathit{Prefs}$ is verified to be correct. Thus, $\mathit{PrefSet}$ is then verified to be correct if the verification for each preference set $\mathit{Prefs}\in \mathit{PrefSet}$ is correct as per lines $10-14$.

\end{proof}

\begin{theorem}{(Completeness):}
	Algorithm~\ref{alg:apply-preferences2} is complete in that given an abstract argumentation framework  $\mathit{AAF}$, an extension $\mathcal{E}$, a set of sets of preferences $\mathit{PrefSet}$, and semantics $\sigma$ as input, each $\mathit{Prefs}\in\mathit{PrefSet}$ is verified to be correct, i.e., each $\mathit{Prefs}\in\mathit{PrefSet}$ when applied to the $\mathit{AAF}$ results in the input $\mathcal{E}$ (under a given semantics $\sigma$). 
\end{theorem}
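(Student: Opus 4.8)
The plan is to mirror the two-level structure already used in the soundness proof of Algorithm~\ref{alg:apply-preferences2}: first establish that the auxiliary Algorithm~\ref{alg:apply-preferences2-algorithm} exhaustively and faithfully constructs the transformed framework, and then establish that the outer verification loop of Algorithm~\ref{alg:apply-preferences2} visits every preference set in $\mathit{PrefSet}$ and records the outcome correctly.

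First I would argue that \textit{ApplyPreferences}$_2$ is exhaustive over the attack relation: its single \textit{for loop} in lines $2$-$8$ ranges over every ordered pair $(B,A)\in\mathcal{R}$ of the input $\mathit{AAF}$, and for each such pair exactly one of the two mutually exclusive and jointly exhaustive conditions $A>B\in\mathit{Prefs}$ (lines $3$-$4$: add the reverse attack $(A,B)$ to $\mathcal{R}'$) or $A>B\notin\mathit{Prefs}$ (lines $5$-$6$: retain $(B,A)$ in $\mathcal{R}'$) applies. Hence every attack of the original framework is accounted for, and the returned $\mathit{AAF}'=(\mathcal{A},\mathcal{R}')$ is precisely the framework obtained by applying $\mathit{Prefs}$ via attack reversal in the sense of Definition~\ref{def:attack_reversal}.

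Next I would show that the outer loop of Algorithm~\ref{alg:apply-preferences2} (lines $3$-$9$) iterates over \emph{every} $\mathit{Prefs}\in\mathit{PrefSet}$, invoking \textit{ApplyPreferences}$_2$ to obtain $\mathit{AAF}'$ and then \textit{ComputeExtensions}$(\mathit{AAF}',\sigma)$ to obtain the set $\mathcal{E}_s$ of $\sigma$-extensions; the guard on line $6$ increments $\mathit{count}$ precisely when $\mathcal{E}_s=\{\mathcal{E}\}$, i.e. when that particular $\mathit{Prefs}$ is correct. The bookkeeping in lines $10$-$14$ then sets $\mathit{vcheck}=\mathit{true}$ iff $|\mathit{PrefSet}|=\mathit{count}$, i.e. iff the per-set check succeeded for all sets of preferences in $\mathit{PrefSet}$. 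Consequently, if every $\mathit{Prefs}\in\mathit{PrefSet}$ is in fact correct, the algorithm reports it, which is exactly the completeness claim.

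I expect the main obstacle to be the bridging step: arguing that the graph $\mathcal{R}'$ assembled edge-by-edge by \textit{ApplyPreferences}$_2$ genuinely coincides with the framework of Definition~\ref{def:attack_reversal}. The delicate points are the treatment of symmetric pairs $(A,B),(B,A)\in\mathcal{R}$ (where a strict preference $A>B$ must turn a mutual attack into the single attack $(A,B)$, and one must check the loop does not re-introduce $(B,A)$ when it later processes the pair $(A,B)$, since $B>A\notin\mathit{Prefs}$) and the observation that an equality preference $A=B\in\mathit{Prefs}$ falls, as intended, into the $A>B\notin\mathit{Prefs}$ branch so the original attack is retained. Once that correspondence is established, the soundness of \textit{ComputeExtensions} (taken as given from the Tweety library) together with the exhaustiveness of the two loops yields completeness.
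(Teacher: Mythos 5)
Your proposal is correct and follows essentially the same route as the paper's own proof: the outer loop of Algorithm~\ref{alg:apply-preferences2} exhaustively visits every $\mathit{Prefs}\in\mathit{PrefSet}$, the auxiliary Algorithm~\ref{alg:apply-preferences2-algorithm} exhaustively processes every attack in $\mathcal{R}$ via the two complementary branches, and lines $10$--$14$ report success exactly when all sets pass. Your additional attention to symmetric attack pairs and to equality preferences falling into the $A>B\notin\mathit{Prefs}$ branch is a useful refinement the paper glosses over, but it does not change the structure of the argument.
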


\begin{proof}
Algorithm~\ref{alg:apply-preferences2} goes through each $\mathit{Prefs}\in\mathit{PrefSet}$ to verify its correctness, as per lines $3-9$. It then invokes Algorithm~\ref{alg:apply-preferences2-algorithm} to apply each $\mathit{Prefs}$ to the $AAF$ and get the updated $AAF'$ as per line $4$. \textit{ComputeExtensions} is then invoked to get the extension of the updated $AAF'$ under a given semantics $\sigma$.

Algorithm~\ref{alg:apply-preferences2-algorithm} goes through all attacks $(B,A) \in \mathcal{R}$ in the input AAF as per lines $2-8$ and adds:
\begin{itemize}
    \item the reverse attack $(A,B)$ to the the attack relation $\mathcal{R}'$ where the attacked argument $A$ is preferred to the attacking argument $B$ as per lines $3-4$.
    \item the attack $(B,A)$ to the the attack relation $\mathcal{R}'$ where the attacked argument $A$ is not preferred to the attacking argument $B$ as per lines $5-6$.
\end{itemize}

Lines $10-14$ in Algorithm~\ref{alg:apply-preferences1} ensure that all $\mathit{Prefs}\in\mathit{PrefSet}$ are verified to be correct.
\end{proof}

After having proved the soundness and completeness of Algorithm~\ref{alg:apply-preferences2}, we establish its termination.

\begin{theorem}{(Termination):}
	Given an abstract argumentation framework  $\mathit{AAF}$, an extension $\mathcal{E}$, a set of sets preferences $\mathit{PrefSet}$, and semantics $\sigma$ as input, Algorithm~\ref{alg:apply-preferences2} always terminates.
\end{theorem}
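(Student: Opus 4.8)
The plan is to mirror exactly the structure of the termination proof for Algorithm~\ref{alg:apply-preferences1}, since Algorithm~\ref{alg:apply-preferences2} has the same control-flow skeleton and differs only in which attacks it adds to $\mathcal{R}'$. First I would observe that \textsc{VerifyPreferences}$_2$ contains a single \emph{for loop} (lines $3$--$9$) iterating over the elements $\mathit{Prefs}\in\mathit{PrefSet}$; since the input $\mathit{PrefSet}$ is assumed finite, this loop performs finitely many iterations, so it suffices to show that each iteration terminates.

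Each iteration does three things: it calls \textsc{ApplyPreferences}$_2(\mathit{AAF},\mathit{Prefs})$, it calls \textsc{ComputeExtensions}$(\mathit{AAF}',\sigma)$, and it performs a constant-time conditional update of $\mathit{count}$ (lines $6$--$8$). For the first, I would prove that Algorithm~\ref{alg:apply-preferences2-algorithm} terminates: it consists of a single \emph{for loop} (lines $2$--$8$) over the pairs $(B,A)\in\mathcal{R}$, and since the attack relation $\mathcal{R}$ of the finite input $\mathit{AAF}$ is finite, this loop runs finitely many times, each iteration being a constant-time membership test on $\mathit{Prefs}$ followed by a single insertion into $\mathcal{R}'$; hence \textsc{ApplyPreferences}$_2$ returns. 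For the second, I would invoke the standing assumption (stated in the footnote accompanying the description of \textsc{ComputeExtensions}) that this routine is the terminating extension-computation procedure from the Tweety library applied to the finite framework $\mathit{AAF}'$ under $\sigma\in\{\mathit{grounded},\mathit{preferred},\mathit{stable}\}$. Finally, the post-loop block (lines $10$--$15$) is a single constant-time conditional returning $\mathit{vcheck}$.

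Combining these: the outer loop has finitely many iterations, each of which invokes two terminating subroutines and does constant work, and the algorithm then does constant work after the loop; therefore Algorithm~\ref{alg:apply-preferences2} always terminates. I do not anticipate a genuine obstacle here — the argument is a routine finiteness bookkeeping exercise — the only point requiring care is being explicit that termination of \textsc{ComputeExtensions} is \emph{assumed} (as it is an external library function) rather than established, exactly as in the earlier termination proofs in this section.
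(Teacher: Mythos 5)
Your proof is correct and follows essentially the same route as the paper's: the outer loop terminates because $\mathit{PrefSet}$ is finite, and \textsc{ApplyPreferences}$_2$ terminates because $\mathcal{R}$ is finite. You are in fact slightly more careful than the paper in explicitly flagging that termination of the external \textsc{ComputeExtensions} routine is assumed rather than proved, a point the paper's own proof leaves implicit.
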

\begin{proof}
Algorithm~\ref{alg:apply-preferences2} consists of one \textit{for loop} that goes through each $\mathit{Prefs}\in\mathit{PrefSet}$ to verify its correctness, as per lines $3-9$. Since we assume that  the input $\mathit{PrefSet}$ is finite, therefore the \textit{for loop} which iterates over all the elements $\mathit{Prefs}\in\mathit{PrefSet}$ in lines $3-9$ will always terminate. 

Algorithm~\ref{alg:apply-preferences2} invokes Algorithm~\ref{alg:apply-preferences2-algorithm} to apply each $\mathit{Prefs}\in\mathit{PrefSet}$ to the $AAF$ and get the updated $AAF'$. We now prove that Algorithm~\ref{alg:apply-preferences2-algorithm} terminates. Since we assume that the attack relation $\mathcal{R}$ of the input $AAF$ is finite, therefore the \textit{for loop} which iterates over all the elements $(B,A)\in \mathcal{R}$ in lines $2-8$ will always terminate.

Thus, we have proved that Algorithm~\ref{alg:apply-preferences2} always terminates.
\end{proof}

\section{Implementation and Evaluation}
\label{sec:imp-eval}

We now discuss the implementation details and evaluation of the algorithms presented in the previous sections.

\subsection{Implementation}
We have implemented our proposed algorithms\footnote{The source code of the implementation of all algorithms is available at \url{https://github.com/Quratul-ain/AAF_Preferences}} for evaluation purposes, in Java and using the Tweety library~\cite{Thimm:2017e}. Figure~\ref{fig:preferences_controlflow} presents an overview of the system functionalities and information flow of the original approach. 

\begin{figure}[h]
	\centering
	\includegraphics[width=1.00\textwidth]{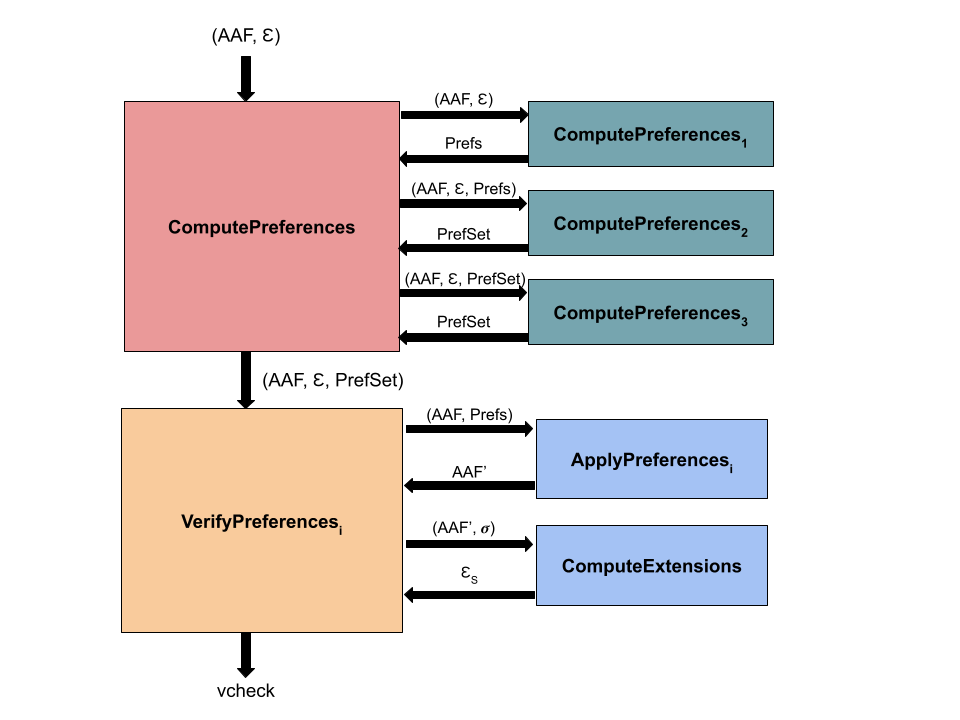}
	\caption{An overview of the system functionalities and information flow (Original Approach)}
	\label{fig:preferences_controlflow}
\end{figure}

The steps are as follows:
\begin{enumerate}
	\item The \textbf{input} to the system is an abstract argumentation graph $\mathit{AAF}$ (set of arguments $\mathcal{A}$ and attack relation $\mathcal{R}$ between arguments) and an extension $\mathcal{E}$ consisting of acceptable arguments. This is denoted by $(\mathit{AAF}, \mathcal{E})$.
	\item The \textbf{ComputePreferences} function:
	\begin{enumerate}[(i)]
		\item invokes the \textbf{ComputePreferences\textsubscript{1}} function (with input $(\mathit{AAF}, \mathcal{E})$) as given in Algorithm~\ref{alg:computing-preferences-case1} that computes the set of case $1$ preferences $\mathit{Prefs}$.
		\item invokes the \textbf{ComputePreferences\textsubscript{2}} function (with input $(\mathit{AAF}, \mathcal{E}, \mathit{Prefs})$) as given in Algorithm~\ref{alg:computing-preferences-case2} that computes case $2$ preferences and combines them with case $1$ preferences. This results in $\mathit{PrefSet}$ which is a set of sets of preferences.
		\item invokes the \textbf{ComputePreferences\textsubscript{3}} function (with input $(\mathit{AAF}, \mathcal{E}, \mathit{PrefSet})$) as given in Algorithm~\ref{alg:computing-preferences-case3} that computes case $3$ preferences and combines them with case $1$ and case $2$ preferences, i.e., $\mathit{Prefs}$ and $\mathit{PrefSet}$. This results in an updated final $\mathit{PrefSet}$ which is a set of sets of preferences containing all three cases of preferences combined together.
	\end{enumerate}
	\item The \textbf{VerifyPreferences\textsubscript{i}} function (with input $(\mathit{AAF}, \mathcal{E}, \mathit{PrefSet})$):
	\begin{enumerate}[(i)]
		\item invokes the \textbf{ApplyPreferences\textsubscript{i}} function (with input $(\mathit{AAF}, \mathit{Prefs})$, where $i=1$ or $i=2$ referring to the two methods of attack removal or reversal respectively) that applies the preferences to the abstract argumentation framework $\mathit{AAF}$ and returns an updated abstract argumentation framework $\mathit{AAF}'$. 
		\item invokes the \textbf{ComputeExtensions} function that computes the set of extensions $\mathcal{E}_s$ and checks the conditions that $ |\mathcal{E}_s=1| \: \& \: \mathcal{E}' \in \mathcal{E}_s \: s.t. \: \mathcal{E}' = \mathcal{E}$. It returns a Boolean variable vcheck, which is \textit{true} if the conditions are true for the preference sets; else it is \textit{false}.
	\end{enumerate}
\end{enumerate}

\begin{figure}[h]
	\centering
	\includegraphics[width=1.00\textwidth]{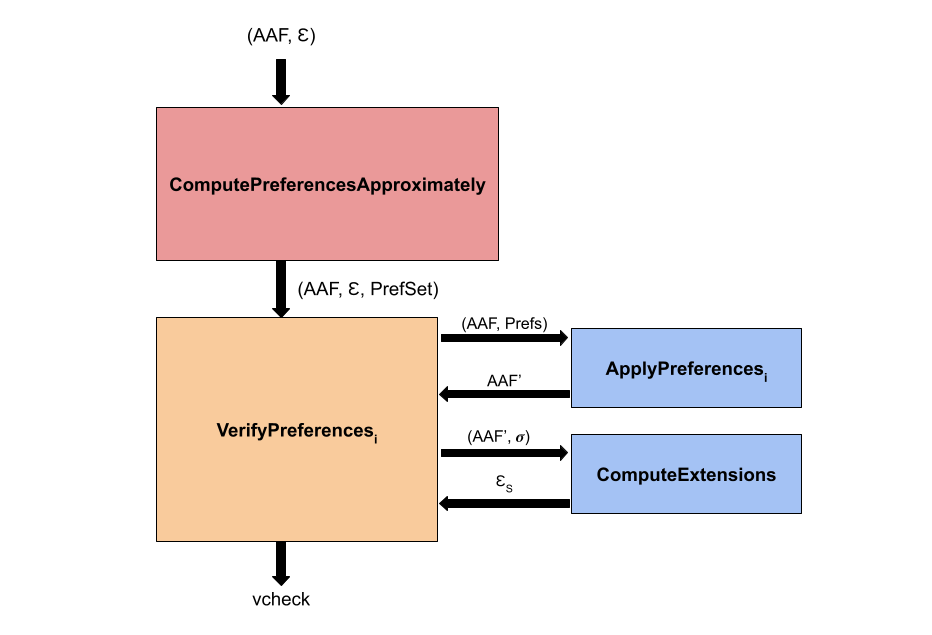}
	\caption{An overview of the system functionalities and information flow (Approximate Approach)}
	\label{fig:preferences_controlflow_approximate}
\end{figure}

Figure~\ref{fig:preferences_controlflow_approximate} presents an overview of the system functionalities and information flow of the approximate approach. 
The steps are as follows:
\begin{enumerate}
	\item The \textbf{input} to the system is an abstract argumentation graph $\mathit{AAF}$ (set of arguments $\mathcal{A}$ and attack relation $\mathcal{R}$ between arguments) and an extension $\mathcal{E}$ consisting of acceptable arguments. This is denoted by $(\mathit{AAF}, \mathcal{E})$.
	\item The \textbf{ComputePreferencesApproximately} function (with input $(\mathit{AAF}, \mathcal{E})$) as given in Algorithm~\ref{alg:approximate-computing-preferences} computes case $1$, case $2$ preferences randomly, and case $3$ preferences randomly and combines them, which results in a set of preferences $\mathit{Prefs}$ containing all three cases of preferences combined together.
	\item $\mathit{PrefSet}$ is created with a set of preferences $\mathit{Prefs}$ from the previous step.
	\item The \textbf{VerifyPreferences\textsubscript{i}} function (with input $(\mathit{AAF}, \mathcal{E}, \mathit{PrefSet})$):
	\begin{enumerate}[(i)]
		\item invokes the \textbf{ApplyPreferences\textsubscript{i}} function (with input $(\mathit{AAF}, \mathit{Prefs})$, where $i=1$ or $i=2$ referring to the two methods of attack removal or reversal respectively) that applies the preferences to the abstract argumentation framework $\mathit{AAF}$ and returns an updated abstract argumentation framework $\mathit{AAF}'$. 
		\item invokes the \textbf{ComputeExtensions} function that computes the set of extensions $\mathcal{E}_s$ and checks the conditions that $ |\mathcal{E}_s=1| \: \& \: \mathcal{E}' \in \mathcal{E}_s \: s.t. \: \mathcal{E}' = \mathcal{E}$. It returns a Boolean variable vcheck, which is \textit{true} if the conditions are true for the preference sets; else it is \textit{false}.
	\end{enumerate}
\end{enumerate}

\subsection{Evaluation}
\label{sec:experimental_evaluation}
To evaluate how our algorithms perform, we carried out several experiments\footnote{All data sets are included in the appendix.} to analyse various metrics of performance. In this section, we present the methodology for our experimental set-up, experiments performed and the analysis of their results.

\subsubsection{Methodology}
We now present the experimental methodology that we have adopted in order to evaluate the algorithms used in our system as follows:
\begin{enumerate}
\item We generated abstract argumentation frameworks of increasing size, for which, we used an existing benchmark abstract argumentation framework generator from the Tweety library~\cite{Thimm:2017e}. 
\item We input the number of arguments $\mathit{AAF_{size}}$ of the abstract argumentation framework and attack probability $\mathit{Pr}$ to the generator in order to randomly generate the abstract argumentation frameworks. 
\item The attack probability $\mathit{Pr}$ is taken as $0.25, 0.50$ and $0.75$.
\item We then compute the grounded, preferred and stable extensions respectively. For each type of extension, a different set of experiments is performed.
\item For the grounded extension the value of $\mathit{AAF_{size}}$ starts at $4$ and ends at $12$\footnote{Please note, the original algorithm (i.e., Algorithm $1$) becomes impractical after the maximum $\mathit{AAF_{size}}$ value $12$.}. We only select the generated $\mathit{AAF}$ where the grounded extension is non-empty and at least of size $1$. Additionally, we evaluated the approximate algorithm (with attack probability $\mathit{Pr}=0.25$ in the AAF\footnote{We were not able to generate AAFs of larger sizes with $\mathit{Pr}=0.50$ and $\mathit{Pr}=0.75$ that have at least one non-empty grounded extension using the  existing benchmark abstract argumentation framework generator from the Tweety library~\cite{Thimm:2017e}}) for the grounded extension with larger value of $\mathit{AAF_{size}}$ that starts at $5$ and ends at $60$.
\item For the preferred and stable extensions the value of $\mathit{AAF_{size}}$ starts at $4$ and ends at $16$\footnote{Please note, the original algorithm (i.e., Algorithm $1$) becomes impractical after the maximum $\mathit{AAF_{size}}$ value $16$.}. We select the extension of the largest size if there are more than two preferred extensions for the $AAF$ and similarly if there are more than two stable extensions for the $AAF$. Additionally, we evaluated the approximate algorithm for the preferred and stable extensions with larger value of $\mathit{AAF_{size}}$ that starts at $5$ and ends at $60$.
\end{enumerate}

\subsubsection{Experiments}
The purpose of our experimental analysis is not only to present the scalability of the algorithms but also to analyse the effect of the attack probability on the computation and verification of preferences on the grounded, preferred and stable extensions. Furthermore, we experimentally check that our implementation reflects the soundness property (i.e., Theorem~\ref{theorem:soundness}) proven before for the grounded, preferred and stable extensions, i.e., the correctness of all the computed preference sets. Full detail of the data sets generated for the experiments is given in Tables $4$-$28$ in the Appendix. The experiments were run on an Apple Mac book Pro machine, with 16GB of memory and a 3.22 GHz, Apple M1 Pro 8-Core.

\noindent We investigate the following hypotheses for the original algorithm (i.e., Algorithm $1$).
\begin{itemize}
\item \textit{Hypothesis $1$}. The computation of set of sets of preferences is low/high for the grounded, preferred and stable extensions with lower/higher probability of attacks in the AAF.
\item \textit{Hypothesis $2$}. The number of sets of preferences grow exponentially with the increasing size of AAFs.
\end{itemize}

\noindent We investigate the following hypothesis for the approximate algorithm (i.e., Algorithm $7$).
\begin{itemize}
    \item \textit{Hypothesis $3$}. Algorithm $7$ is scalable for increasing size of AAFs.
\end{itemize}

\noindent We investigate the following hypotheses for both original algorithm (i.e., Algorithm $1$) and approximate algorithm (i.e., Algorithm $7$).
\begin{itemize}
    \item \textit{Hypothesis $4$}. The number of preferences in each preference set does not grow exponentially with the increasing size of AAFs.
    \item \textit{Hypothesis $5$}. The verification of preferences by \textit{attack removal} approach has lower/higher computation run time for the grounded, preferred and stable extensions compared to the verification of preferences by \textit{attack reversal} approach.
    \item \textit{Hypothesis $6$}. Algorithm $1$ and Algorithm $7$ hold the soundness property (i.e., Theorem $4$) for the grounded, preferred and stable semantics.
\end{itemize}

For the \textit{first} set of experiments we measured \textbf{the average execution time in milliseconds} (for 10 instances) between inputting an abstract argumentation framework (where attack probability $\mathit{Pr}$ is taken as $0.25, 0.50$ and $0.75$) and extension under the grounded, preferred and stable semantics, and computing all set of sets of preferences by the Original Algorithm (i.e. Algorithm~\ref{alg:computing-preferences}) as shown in Figure~\ref{fig:graph1_grounded_computing_original}, Figure~\ref{fig:graph1_preferred_computing_original} and Figure~\ref{fig:graph1_stable_computing_original} respectively, and by the Approximate Algorithm (i.e. Algorithm~\ref{alg:approximate-computing-preferences}) as shown in Figure~\ref{fig:graph1_grounded_computing_approx}, Figure~\ref{fig:graph1_preferred_computing_approx} and Figure~\ref{fig:graph1_stable_computing_approx} respectively. We performed the following analysis:
\begin{enumerate}
\item For the \textbf{grounded extension},
    \begin{itemize}
        \item Original Algorithm: In Figure~\ref{fig:graph1_grounded_computing_original}, the line graph of the AAF with $\mathit{Pr}=0.50$ shows a high computation time (worst-performance) at size $11$ and falls down at size $12$, where as with $\mathit{Pr}=0.25$ computation time is a steady low (best-performance) and starts to increase at size $12$, and for $\mathit{Pr}=0.75$ shows an increase in computation at size $10$ which starts to fall down at size $12$.
        \item Approximate Algorithm: In Figure~\ref{fig:graph1_grounded_computing_approx}, the line graphs of the AAF with $\mathit{Pr}=0.25, \mathit{Pr}=0.50,$ and $\mathit{Pr}=0.75$ show a very low computation time that is not more than $1.0$ ms for all AAF sizes.
    \end{itemize}
\item For the \textbf{preferred extension}, 
    \begin{itemize}
        \item Original Algorithm: In Figure~\ref{fig:graph1_preferred_computing_original}, the line graph of the AAF with $\mathit{Pr}=0.25$ shows a high computation time (worst-performance) which starts to increase at size $15$, where as with $\mathit{Pr}=0.75$ shows a steady low computation time (best-performance) till the largest size $16$, and for $\mathit{Pr}=0.50$ shows an increase in computation time at size $16$.
        \item Approximate Algorithm: In Figure~\ref{fig:graph1_preferred_computing_approx}, the line graphs of the AAF with $\mathit{Pr}=0.25, \mathit{Pr}=0.50,$ and $\mathit{Pr}=0.75$ show a very low computation time that is not more than $1.0$ ms for all AAF sizes.
    \end{itemize}

\item For the \textbf{stable extension}, 
    \begin{itemize}
        \item Original Algorithm: In Figure~\ref{fig:graph1_stable_computing_original}, the line graph of the AAF with $\mathit{Pr}=0.25$ shows a high computation time (worst-performance) which starts to increase at size $15$ and decreases slightly at size $16$, where as with $\mathit{Pr}=0.50$ and $\mathit{Pr}=0.75$ shows a steady low computation time (best-performance) till the largest size $16$.
        \item Approximate Algorithm: In Figure~\ref{fig:graph1_stable_computing_approx}, the line graphs of the AAF with $\mathit{Pr}=0.25, \mathit{Pr}=0.50,$ and $\mathit{Pr}=0.75$ show a very low computation time that is not more than $1.0$ ms for all AAF sizes.
    \end{itemize}
\end{enumerate}

\textit{Hypothesis $1$}. For the original algorithm (i.e., Algorithm $1$), we conclude that when the number of attacks is less (i.e., $\mathit{Pr}=0.25$) in the increasing sizes of AAF, the run time for computation of preferences for the grounded extension is lower compared to AAFs with higher number of attacks. On the other hand, we conclude that when the number of attacks is higher (i.e., $\mathit{Pr}=0.75$) in the increasing sizes of AAF, the run time for computation of preferences for the preferred and stable extensions is lower compared to AAFs with lower number of attacks.

For the \textit{second} set of experiments we measured \textbf{the average number of all the set of sets of preferences} (for 10 instances) computed for an input extension under the grounded, preferred and stable semantics for an abstract argumentation framework (where attack probability $\mathit{Pr}$ is taken as $0.25, 0.50$ and $0.75$) by the Original Algorithm (i.e. Algorithm~\ref{alg:computing-preferences}) as shown in Figure~\ref{fig:graph2_grounded_computing_original}, Figure~\ref{fig:graph2_preferred_computing_original} and Figure~\ref{fig:graph2_stable_computing_original} respectively. We performed the following analysis:
\begin{enumerate}
\item For the \textbf{grounded extension}, the line graph of the AAF (as shown in Figure~\ref{fig:graph2_grounded_computing_original}) with $\mathit{Pr}=0.75$ shows a high number of preference sets at size $11$ and falls down at size $12$, where as with $\mathit{Pr}=0.25$ the number of preference sets is a steady low and starts to increase at size $12$, and for $\mathit{Pr}=0.50$ the number of preference sets increases at size $9$ and falls down at size $12$.
\item For the \textbf{preferred extension}, the line graph of the AAF (as shown in Figure~\ref{fig:graph2_preferred_computing_original}) with $\mathit{Pr}=0.50$ shows that the number of preference sets starts to increase at size $14$, the line graph of the AAF with $\mathit{Pr}=0.25$ shows that the number of preference sets starts to gradually increase at size $8$ and grows rapidly at size $15$, and the the line graph of the AAF with $\mathit{Pr}=0.75$ shows a steady low number of preferences sets till the largest size $16$.
\item For the \textbf{stable extension}, the line graph of the AAF (as shown in Figure~\ref{fig:graph2_stable_computing_original}) with $\mathit{Pr}=0.25$ shows that the number of preference sets starts to increase rapidly at size $15$ and falls down slightly at size $16$, the line graph of the AAF with $\mathit{Pr}=0.50$ shows that the number of preference sets starts to increase at size $15$, and the the line graph of the AAF with $\mathit{Pr}=0.75$ shows a steady low number of preferences sets till the largest size $16$.
\end{enumerate}

\textit{Hypothesis $2$}. For the original algorithm (i.e., Algorithm $1$), we conclude that the number of preference sets computed for the grounded extension is less when the AAFs of increasing sizes have lower number of attacks (i.e., $\mathit{Pr}=0.25$) compared to AAFs with higher number of attacks. On the other hand, we conclude that the number of preference sets computed for the preferred and stable extensions is less when the AAFs of increasing sizes have higher number of attacks (i.e., $\mathit{Pr}=0.75$) compared to AAFs with lower number of attacks. For all extensions, the growth of the number of sets of preferences is exponential with the increasing size of AAFs.

For the \textit{third} set of experiments we measured \textbf{the average number of preferences in each set of preferences} (for 10 instances) computed for an input extension under the grounded, preferred and stable semantics for an abstract argumentation framework (where attack probability $\mathit{Pr}$ is taken as $0.25, 0.50$ and $0.75$) by the Original Algorithm (i.e. Algorithm~\ref{alg:computing-preferences}) as shown in Figure~\ref{fig:graph3_grounded_computing_original}, Figure~\ref{fig:graph3_preferred_computing_original} and Figure~\ref{fig:graph3_stable_computing_original} respectively, and by the Approximate Algorithm (i.e. Algorithm~\ref{alg:approximate-computing-preferences}) as shown in Figure~\ref{fig:graph3_grounded_computing_approx}, Figure~\ref{fig:graph3_preferred_computing_approx} and Figure~\ref{fig:graph3_stable_computing_approx} respectively. We performed the following analysis:
\begin{enumerate}
\item For the \textbf{grounded extension}, as shown in Figure~\ref{fig:graph3_grounded_computing_original} and Figure~\ref{fig:graph3_grounded_computing_approx}, the average number of preferences in each preference set does not grow more than $12$ till AAF size $12$. There is a steady increase in the number of preferences in each preference set with the increasing size of the AAF.
\item For the \textbf{preferred extension}, as shown in Figure~\ref{fig:graph3_preferred_computing_original} and Figure~\ref{fig:graph3_preferred_computing_approx}, the average number of preferences in each preference set remains under $30$ till AAF size $16$. Similar to the grounded extension, there is a steady increase in the number of preferences in each preference set with the increasing size of the AAF.
\item For the \textbf{stable extension}, as shown in Figure~\ref{fig:graph3_stable_computing_original} and Figure~\ref{fig:graph3_stable_computing_approx}, the average number of preferences in each preference set remains under $30$ till AAF size $16$. Similar to the grounded and preferred extensions, there is a steady increase in the number of preferences in each preference set with the increasing size of the AAF.
\end{enumerate}

\textit{Hypothesis $4$}. For both the original algorithm (i.e., Algorithm $1$) and approximate algorithm (i.e., Algorithm $7$), we conclude that the number of preferences for all extensions increases steadily with the increasing size of the AAFs which is mostly similar for all attack probabilities. The growth in the number of preferences in each preference set for increasing size of the AAFs is not exponential.

Finally, for the \textit{fourth} set of experiments we measured \textbf{the average elapsed time in milliseconds} (for 10 instances) between inputting an abstract argumentation framework (where attack probability $\mathit{Pr}$ is taken as $0.25, 0.50$ and $0.75$), extension under the grounded, preferred and stable semantics and the set of sets of preferences computed by the Original Algorithm (i.e. Algorithm~\ref{alg:computing-preferences}) or the Approximate Algorithm (i.e. Algorithm~\ref{alg:approximate-computing-preferences}), and verifying all the set of sets of preferences to be correct using the two verification methods\footnote{This is done separately for both the Original and Approximate algorithms.}: 
\begin{enumerate}[(i)] 
\item \textit{Attack removal}, as shown in Figure~\ref{fig:graph_grounded_verification1_original}, Figure~\ref{fig:graph_preferred_verification1_original} and Figure~\ref{fig:graph_stable_verification1_original} for the Original Algorithm, and Figure~\ref{fig:graph_grounded_verification1_approx}, Figure~\ref{fig:graph_preferred_verification1_approx} and Figure~\ref{fig:graph_stable_verification1_approx} for the Approximate Algorithm.
\item \textit{Attack reversal}, as shown in Figure~\ref{fig:graph_grounded_verification2_original}, Figure~\ref{fig:graph_preferred_verification2_original} and Figure~\ref{fig:graph_stable_verification2_original} for the Original Algorithm, and Figure~\ref{fig:graph_grounded_verification2_approx}, Figure~\ref{fig:graph_preferred_verification2_approx} and Figure~\ref{fig:graph_stable_verification2_approx} for the Approximate Algorithm. 
\end{enumerate} 
We performed the following analysis:
\begin{enumerate}
\item For the \textbf{grounded extension}, 
\begin{itemize}
    \item For the preferences computed by the Original Algorithm: In Figure~\ref{fig:graph_grounded_verification1_original} and Figure~\ref{fig:graph_grounded_verification2_original}, the line graphs of the AAF with $\mathit{Pr}=0.75$ for both verification approaches show a high computation time (worst-performance) at size $11$ and this falls down at size $12$, where as with $\mathit{Pr}=0.25$ computation time is a steady low (best-performance) and starts to increase at size $12$, and for $\mathit{Pr}=0.50$ computation time increases at size $9$ and falls down at size $12$. Overall, although the line graph pattern of both verification methods is similar, the computation time of verification by attack removal is lower compared to the attack reversal verification.
    \item For the preferences computed by the Approximate Algorithm: In Figure~\ref{fig:graph_grounded_verification1_approx} and Figure~\ref{fig:graph_grounded_verification2_approx}, the line graphs of the AAF with $\mathit{Pr}=0.25, \mathit{Pr}=0.50,$ and $\mathit{Pr}=0.75$ for both verification approaches show a very low computation time that is not more than $1.0$ ms for all AAF sizes.
\end{itemize}
\item For the \textbf{preferred extension}, 
\begin{itemize}
    \item For the preferences computed by the Original Algorithm: In Figure~\ref{fig:graph_preferred_verification1_original} and Figure~\ref{fig:graph_preferred_verification2_original}, the line graphs of the AAF with $\mathit{Pr}=0.25$ for both verification approaches show a high computation time (worst-performance) which starts to increase rapidly at size $15$, where as with $\mathit{Pr}=0.75$ shows a steady low computation time (best-performance) till the largest size $16$, and with $\mathit{Pr}=0.50$ the computation time goes up at size $14$ and again at size $16$ for both verification approaches. Overall, although the line graph pattern of both verification methods is similar, the computation time of verification by attack removal is lower compared to the attack reversal verification.
    \item For the preferences computed by the Approximate Algorithm: In Figure~\ref{fig:graph_preferred_verification1_approx} and Figure~\ref{fig:graph_preferred_verification2_approx}, the line graph of the AAF with $\mathit{Pr}=0.25, \mathit{Pr}=0.50,$ and $\mathit{Pr}=0.75$ for attack removal verification method shows a very low computation time that is not more than $5.0$ ms for all AAF sizes, and for attack reversal verification method shows a very low computation time that is not more than $3.0$ ms for all AAF sizes.
\end{itemize}
\item For the \textbf{stable extension}, 
\begin{itemize}
    \item For the preferences computed by the Original Algorithm: In Figure~\ref{fig:graph_stable_verification1_original} and Figure~\ref{fig:graph_stable_verification2_original}, the line graphs of the AAF with $\mathit{Pr}=0.25$ for both verification approaches show a high computation time (worst-performance) which starts to increase at size $15$ and for the verification method of attack reversal goes down slightly at size $16$, where as with $\mathit{Pr}=0.75$ for both verification approaches show a steady low computation time (best-performance) till the largest size $16$, and with $\mathit{Pr}=0.50$ for both verification approaches show a small increase at size $15$. Overall, although the line graph pattern of both verification methods is almost similar, the computation time of verification by attack reversal is lower compared to the attack removal verification.
    \item For the preferences computed by the Approximate Algorithm: In Figure~\ref{fig:graph_stable_verification1_approx} and Figure~\ref{fig:graph_stable_verification2_approx}, the line graphs of the AAF with $\mathit{Pr}=0.25, \mathit{Pr}=0.50,$ and $\mathit{Pr}=0.75$ for both verification approaches show a very low computation time that is not more than $4.0$ ms for all AAF sizes.
\end{itemize}
\end{enumerate}

\textit{Hypothesis $5$}. Thus, for the original algorithm, we conclude that the verification of preferences by \textit{attack removal} approach has lower run time for grounded and preferred extensions, whereas, the verification of preferences by \textit{attack reversal} approach has lower run time for the stable extension. The line graph patterns are similar for both verification approaches, since this is dependent on the number of sets of preferences computed previously. For the approximate algorithm, we conclude that the run time for verification of preferences using both approaches is very low (only a few milliseconds).

\noindent \textit{Hypothesis $6$}. For both original algorithm (Algorithm~\ref{alg:computing-preferences}) and approximate algorithm (Algorithm~\ref{alg:approximate-computing-preferences}), both verification methods resulted in the original input extension for all semantics (i.e., grounded, preferred and stable). Thus, we conclude that Algorithm~\ref{alg:computing-preferences} and Algorithm~\ref{alg:approximate-computing-preferences} hold the soundness property for the grounded, preferred and stable semantics in our experimental evaluation.

From the above experimental analysis it is clear that the original Algorithm~\ref{alg:computing-preferences} is exponential in complexity due to the exponential growth of sets of preferences for larger AAF sizes, however, it is useful to identify the maximum AAF size at which point this becomes impractical, which is AAF size $12$ for the grounded extension and AAF size $16$ for the preferred and stable extensions as shown in the experiments. The approximate Algorithm~\ref{alg:approximate-computing-preferences} has a very low run time (i.e., just a few milliseconds) and will be suited for real world settings where scalability of AAF size is important. 

\noindent \textit{Hypothesis $3$}. We carried out further experiments to evaluate the scalability of the approximate Algorithm~\ref{alg:approximate-computing-preferences} for computing preferences and Algorithms~\ref{alg:apply-preferences1} and~\ref{alg:apply-preferences2} for verifying preferences on larger AAF sizes (i.e., sizes between $5$ to $60$) as shown in Figures~\ref{fig:graph_grounded_computing_approx_scaled},~\ref{fig:graph_grounded_verification_approx_scaled},~\ref{fig:graph_preferred_computing_approx_scalable},~\ref{fig:graph_preferred_verification_approx_scaled},~\ref{fig:graph_stable_computing_approx_scaled} and~\ref{fig:graph_stable_verification_approx_scaled}. It is evident from the figures that the approximate Algorithm~\ref{alg:approximate-computing-preferences}, and Algorithms~\ref{alg:apply-preferences1} and~\ref{alg:apply-preferences2} for verifying preferences have very low run time (i.e., just a few milliseconds) even for larger AAF sizes.

\begin{figure}[H]
	\centering
    \caption{Computing Preferences of the Grounded Extension (Original Algorithm)}
\begin{subfigure}[b]{0.60\textwidth}
		\includegraphics[width=\linewidth]{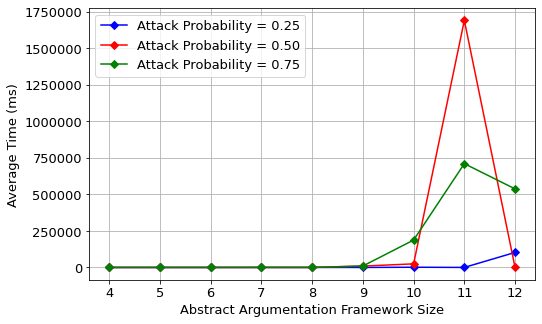}	
    	\caption{Average time in milliseconds for computing all preference sets}
	\label{fig:graph1_grounded_computing_original}
	\end{subfigure}
	\hfill
\begin{subfigure}[b]{0.60\textwidth}
	\centering
	\includegraphics[width=\linewidth]{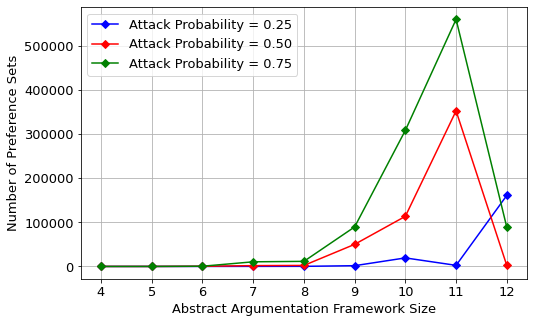}
	\caption{Average number of preference sets}
	\label{fig:graph2_grounded_computing_original}
\end{subfigure}
\hfill 
\begin{subfigure}[b]{0.60\textwidth}
	\centering
	\includegraphics[width=\textwidth]{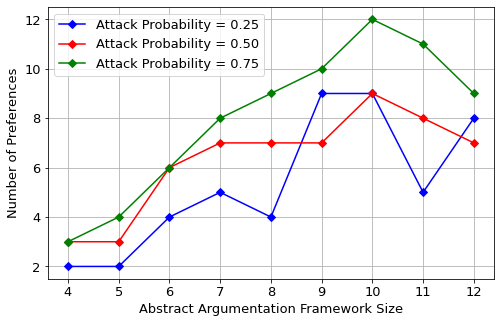}
	\caption{Average number of preferences in each preference set}
	\label{fig:graph3_grounded_computing_original}
\end{subfigure}
    \label{fig:graph_grounded_computing_original}
\end{figure}

\begin{figure}[H]
	\centering
    \caption{Computing Preferences of the Grounded Extension (Approximate Algorithm)}
\begin{subfigure}[b]{0.60\textwidth}
		\includegraphics[width=\linewidth]{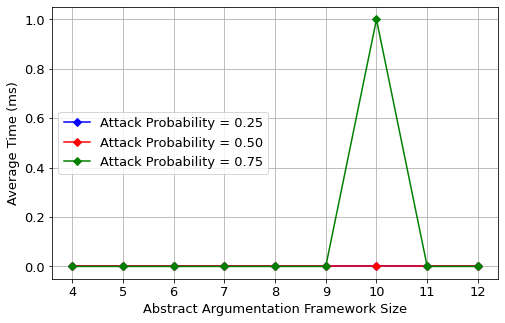}	
    	\caption{Average time in milliseconds for computing all preference sets}
	\label{fig:graph1_grounded_computing_approx}
	\end{subfigure}
\hfill 
\begin{subfigure}[b]{0.60\textwidth}
	\centering
	\includegraphics[width=\textwidth]{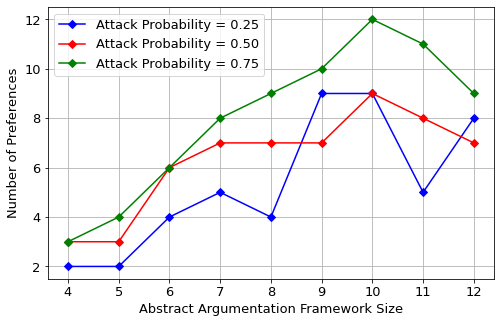}
	\caption{Average number of preferences in each preference set}
	\label{fig:graph3_grounded_computing_approx}
\end{subfigure}
     \label{fig:graph_grounded_computing_approx}
\end{figure}

\begin{figure}[H]
	\centering
    \caption{Verifying Preferences of the Grounded Extension (Original Algorithm)}
    \begin{subfigure}[b]{0.60\textwidth}
	\centering
	\includegraphics[width=\textwidth]{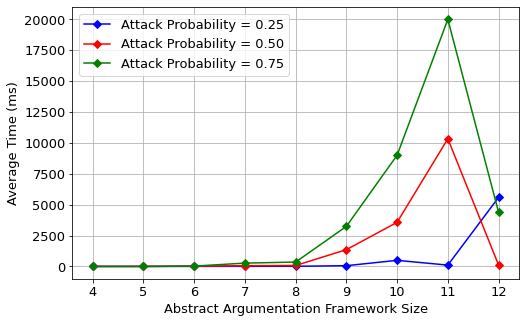}
	\caption{Average time in milliseconds for verifying all preference sets (attack removal)}
	\label{fig:graph_grounded_verification1_original}
    \end{subfigure}
\hfill  
    \begin{subfigure}[b]{0.60\textwidth}
	\centering
	\includegraphics[width=\textwidth]{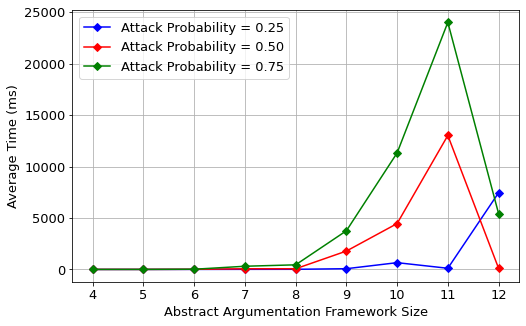}
	\caption{Average time in milliseconds for verifying all preference sets (attack reversal)}
	\label{fig:graph_grounded_verification2_original}
    \end{subfigure}	
    \label{fig:graph_grounded_verification_original}
\end{figure}

\begin{figure}[H]
	\centering
    \caption{Verifying Preferences of the Grounded Extension (Approximate Algorithm)}
    \begin{subfigure}[b]{0.60\textwidth}
	\centering
	\includegraphics[width=\textwidth]{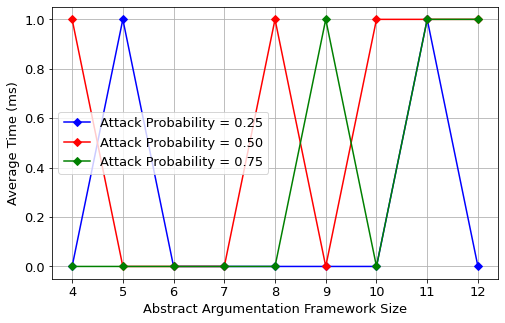}
	\caption{Average time in milliseconds for verifying all preference sets (attack removal)}
	\label{fig:graph_grounded_verification1_approx}
    \end{subfigure}
\hfill  
    \begin{subfigure}[b]{0.60\textwidth}
	\centering
	\includegraphics[width=\textwidth]{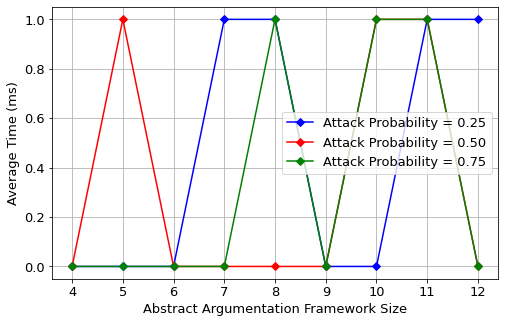}
	\caption{Average time in milliseconds for verifying all preference sets (attack reversal)}
	\label{fig:graph_grounded_verification2_approx}
    \end{subfigure}	
    \label{fig:graph_grounded_verification_approx}
\end{figure}

\begin{figure}[H]
	\centering
    \caption{Computing Preferences of the Preferred Extension (Original Algorithm)}	
    \begin{subfigure}[b]{.6\textwidth}
	\centering	
	\includegraphics[width=\textwidth]{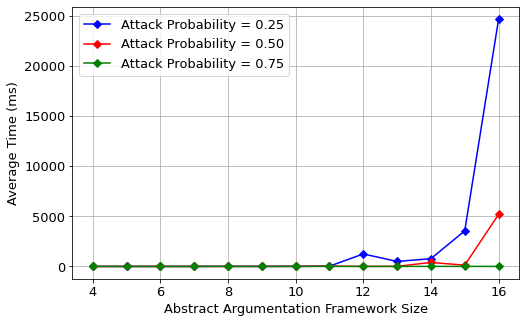}
	\caption{Average time in milliseconds for computing all preference sets}
	\label{fig:graph1_preferred_computing_original}
	\end{subfigure}
\hfill 
    \begin{subfigure}[b]{.6\textwidth}
	\centering
	\includegraphics[width=\textwidth]{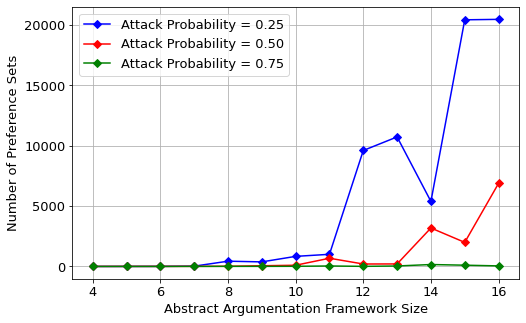} 
	\caption{Average number of preference sets}
	\label{fig:graph2_preferred_computing_original}
    \end{subfigure}	
\hfill
    \begin{subfigure}[b]{.6\textwidth}
	\centering
	\includegraphics[width=\textwidth]{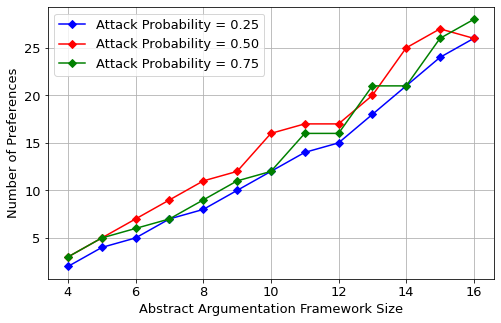}
	\caption{Average number of preferences in each preference set}
	\label{fig:graph3_preferred_computing_original}
    \end{subfigure}
    \label{fig:graph_preferred_computing_original}
\end{figure}

\begin{figure}[H]
	\centering
    \caption{Computing Preferences of the Preferred Extension (Approximate Algorithm)}	
    \begin{subfigure}[b]{.6\textwidth}
	\centering	
	\includegraphics[width=\textwidth]{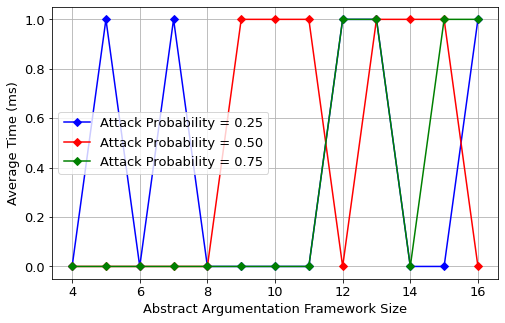}
	\caption{Average time in milliseconds for computing all preference sets}
	\label{fig:graph1_preferred_computing_approx}
	\end{subfigure}
 \hfill
    \begin{subfigure}[b]{.6\textwidth}
	\centering
	\includegraphics[width=\textwidth]{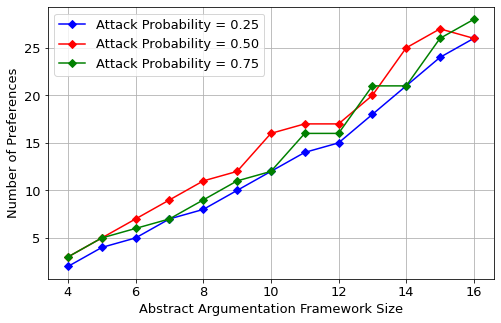}
	\caption{Average number of preferences in each preference set}
	\label{fig:graph3_preferred_computing_approx}
    \end{subfigure}
    \label{fig:graph_preferred_computing_approx}
\end{figure}

\begin{figure}[H]
	\centering
    \caption{Verifying Preferences of the Preferred Extension (Original Algorithm)}
    \begin{subfigure}[b]{.6\textwidth}
	\centering
	\includegraphics[width=\textwidth]{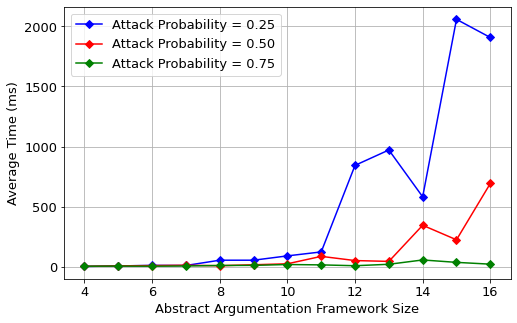}
	\caption{Average time in milliseconds for verifying all preference sets (attack removal)}
	\label{fig:graph_preferred_verification1_original}
    \end{subfigure}
\hfill
    \begin{subfigure}[b]{.6\textwidth}
	\centering
	\includegraphics[width=\textwidth]{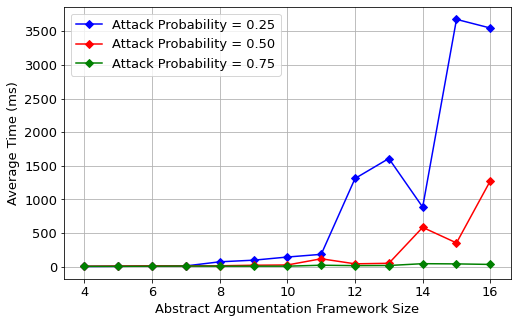}
	\caption{Average time in milliseconds for verifying all preference sets (attack reversal)}
	\label{fig:graph_preferred_verification2_original}
    \end{subfigure}    
    \label{fig:graph_preferred_verification_original}	
\end{figure}

\begin{figure}[H]
	\centering
    \caption{Verifying Preferences of the Preferred Extension (Approximate Algorithm)}
    \begin{subfigure}[b]{.6\textwidth}
	\centering
	\includegraphics[width=\textwidth]{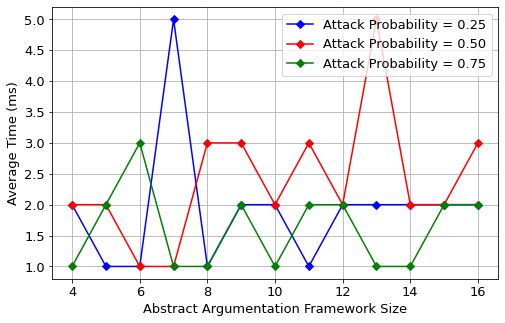}
	\caption{Average time in milliseconds for verifying all preference sets (attack removal)}
	\label{fig:graph_preferred_verification1_approx}
    \end{subfigure}
\hfill
    \begin{subfigure}[b]{.6\textwidth}
	\centering
	\includegraphics[width=\textwidth]{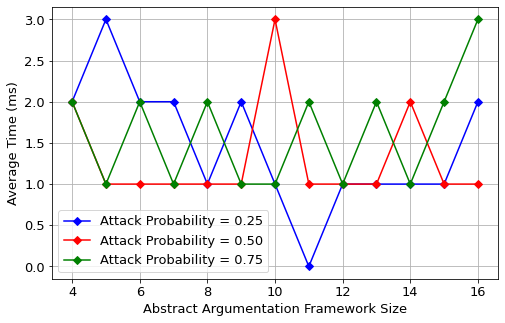}
	\caption{Average time in milliseconds for verifying all preference sets (attack reversal)}
	\label{fig:graph_preferred_verification2_approx}
    \end{subfigure}    
    \label{fig:graph_preferred_verification_approx}	
\end{figure}

\begin{figure}[H]
	\centering
    \caption{Computing Preferences of the Stable Extension (Original Algorithm)}
\begin{subfigure}[b]{.6\textwidth}
		\includegraphics[width=\linewidth]{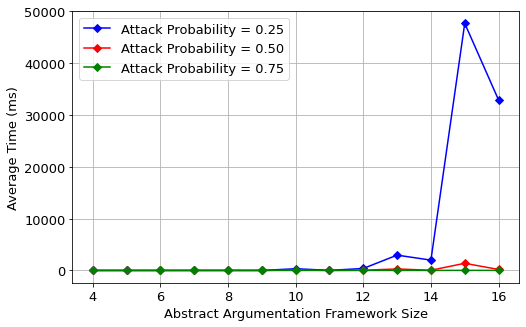}	  
    	\caption{Average time in milliseconds for computing all preference sets}
	\label{fig:graph1_stable_computing_original}
	\end{subfigure}
	\hfill
\begin{subfigure}[b]{.6\textwidth}
	\centering
	\includegraphics[width=\linewidth]{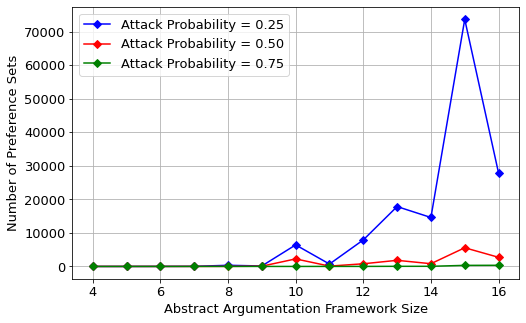}
	\caption{Average number of preference sets}
	\label{fig:graph2_stable_computing_original}
\end{subfigure}
\hfill 
\begin{subfigure}[b]{.6\textwidth}
	\centering
	\includegraphics[width=\textwidth]{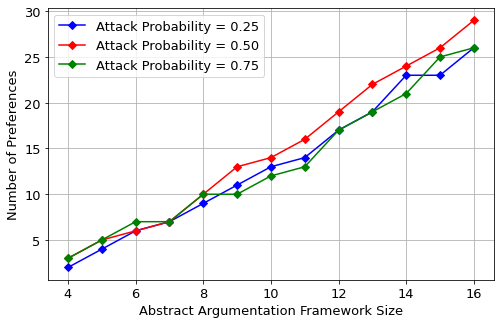}
	\caption{Average number of preferences in each preference set}
	\label{fig:graph3_stable_computing_original}
\end{subfigure}
    \label{fig:graph_stable_computing_original}
\end{figure}

\begin{figure}[H]
	\centering
    \caption{Computing Preferences of the Stable Extension (Approximate Algorithm)}
\begin{subfigure}[b]{.6\textwidth}
		\includegraphics[width=\linewidth]{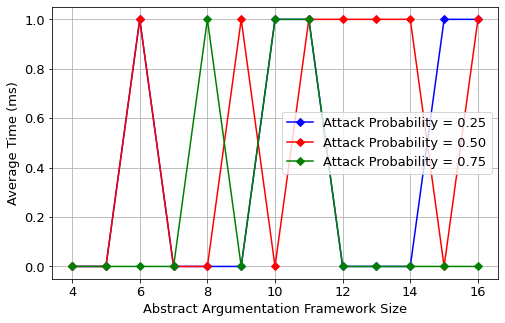}	  
    	\caption{Average time in milliseconds for computing all preference sets}
	\label{fig:graph1_stable_computing_approx}
	\end{subfigure}
\hfill 
\begin{subfigure}[b]{.6\textwidth}
	\centering
	\includegraphics[width=\textwidth]{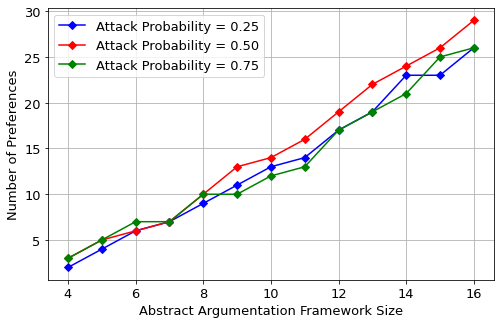}
	\caption{Average number of preferences in each preference set}
	\label{fig:graph3_stable_computing_approx}
\end{subfigure}
    \label{fig:graph_stable_computing_approx}
\end{figure}

\begin{figure}[H]
	\centering
    \caption{Verifying Preferences of the Stable Extension (Original Algorithm)}
    \begin{subfigure}[b]{.6\textwidth}
	\centering
	\includegraphics[width=\textwidth]{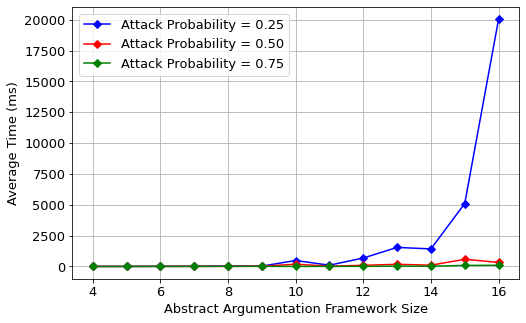}
	\caption{Average time in milliseconds for verifying all preference sets (attack removal)}
	\label{fig:graph_stable_verification1_original}
    \end{subfigure}
\hfill  
    \begin{subfigure}[b]{.6\textwidth}
	\centering
	\includegraphics[width=\textwidth]{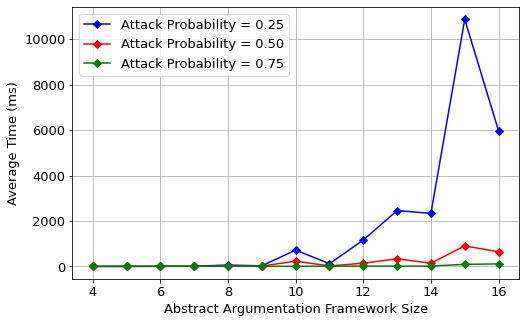}
	\caption{Average time in milliseconds for verifying all preference sets (attack reversal)}
	\label{fig:graph_stable_verification2_original}
    \end{subfigure}	
    \label{fig:graph_stable_verification_original}
\end{figure}

\begin{figure}[H]
	\centering
    \caption{Verifying Preferences of the Stable Extension (Approximate Algorithm)}
    \begin{subfigure}[b]{.6\textwidth}
	\centering
	\includegraphics[width=\textwidth]{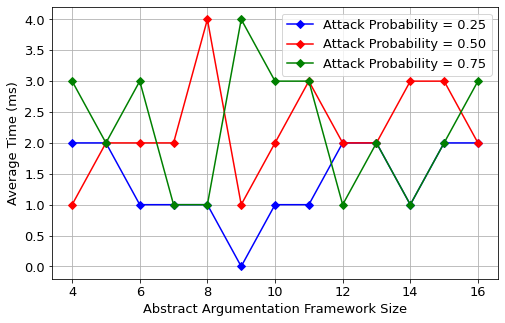}
	\caption{Average time in milliseconds for verifying all preference sets (attack removal)}
	\label{fig:graph_stable_verification1_approx}
    \end{subfigure}
\hfill  
    \begin{subfigure}[b]{.6\textwidth}
	\centering
	\includegraphics[width=\textwidth]{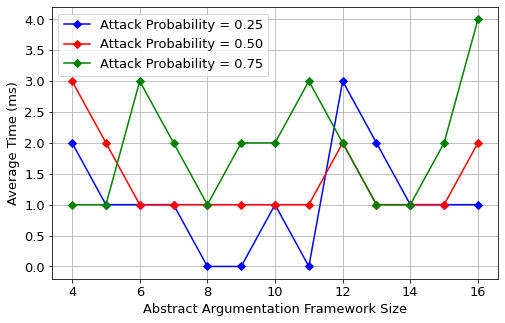}
	\caption{Average time in milliseconds for verifying all preference sets (attack reversal)}
	\label{fig:graph_stable_verification2_approx}
    \end{subfigure}	
    \label{fig:graph_stable_verification_approx}
\end{figure}

\begin{figure}[H]
	\centering
    \caption{Computing Preferences of the Grounded Extension for larger AAF Sizes (Approximate Algorithm)}
\begin{subfigure}[b]{0.60\textwidth}
		\includegraphics[width=\linewidth]{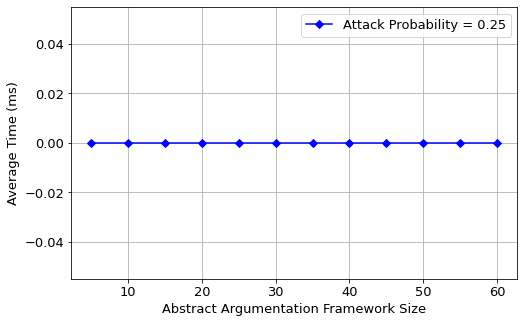}	
    	\caption{Average time in milliseconds for computing all preference sets}
	\label{fig:graph1_grounded_computing_approx_scaled}
	\end{subfigure}
\hfill 
\begin{subfigure}[b]{0.60\textwidth}
	\centering
	\includegraphics[width=\textwidth]{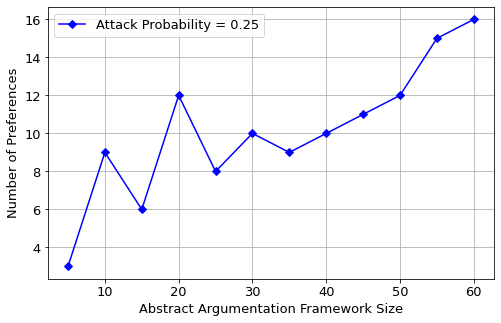}
	\caption{Average number of preferences in each preference set}
	\label{fig:graph3_grounded_computing_approx_scaled}
\end{subfigure}
     \label{fig:graph_grounded_computing_approx_scaled}
\end{figure}

\begin{figure}[H]
	\centering
    \caption{Verifying Preferences of the Grounded Extension for larger AAF Sizes (Approximate Algorithm)}
    \begin{subfigure}[b]{0.60\textwidth}
	\centering
	\includegraphics[width=\textwidth]{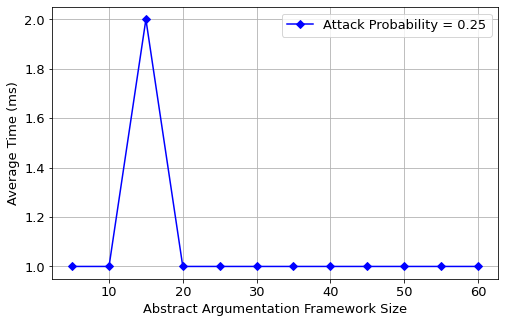}
	\caption{Average time in milliseconds for verifying all preference sets (attack removal)}
	\label{fig:graph_grounded_verification1_approx_scaled}
    \end{subfigure}
\hfill  
    \begin{subfigure}[b]{0.60\textwidth}
	\centering
	\includegraphics[width=\textwidth]{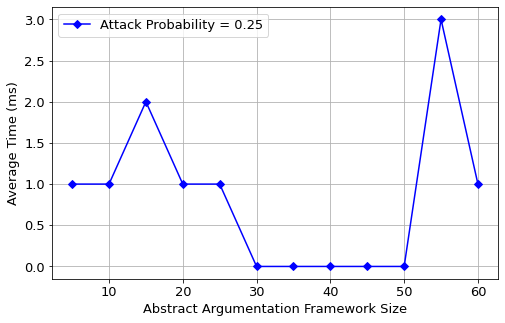}
	\caption{Average time in milliseconds for verifying all preference sets (attack reversal)}
	\label{fig:graph_grounded_verification2_approx_scaled}
    \end{subfigure}	
    \label{fig:graph_grounded_verification_approx_scaled}
\end{figure}

\begin{figure}[H]
	\centering
    \caption{Computing Preferences of the Preferred Extension for larger AAF Sizes (Approximate Algorithm)}	
    \begin{subfigure}[b]{.6\textwidth}
	\centering	
	\includegraphics[width=\textwidth]{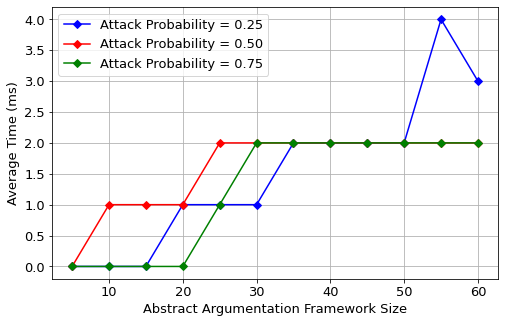}
	\caption{Average time in milliseconds for computing all preference sets}
	\label{fig:graph1_preferred_computing_approx_scalable}
	\end{subfigure}
 \hfill
    \begin{subfigure}[b]{.6\textwidth}
	\centering
	\includegraphics[width=\textwidth]{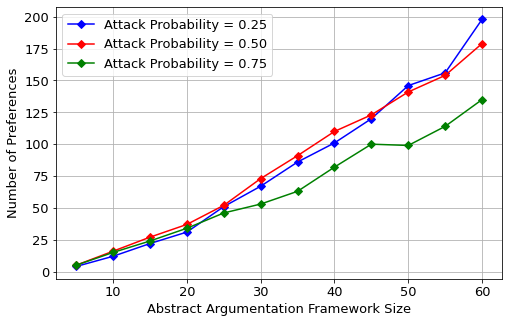}
	\caption{Average number of preferences in each preference set}
	\label{fig:graph3_preferred_computing_approx_scalable}
    \end{subfigure}
    \label{fig:graph_preferred_computing_approx_scalable}
\end{figure}

\begin{figure}[H]
	\centering
    \caption{Verifying Preferences of the Preferred Extension for larger AAF Sizes (Approximate Algorithm)}
    \begin{subfigure}[b]{.6\textwidth}
	\centering
	\includegraphics[width=\textwidth]{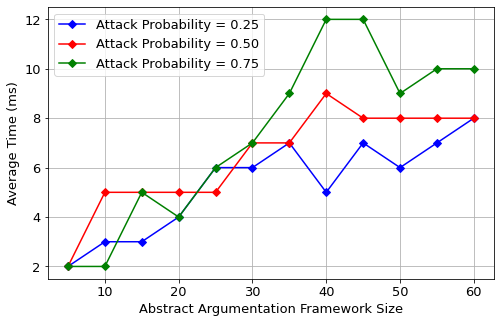}
	\caption{Average time in milliseconds for verifying all preference sets (attack removal)}
	\label{fig:graph_preferred_verification1_approx_scaled}
    \end{subfigure}
\hfill
    \begin{subfigure}[b]{.6\textwidth}
	\centering
	\includegraphics[width=\textwidth]{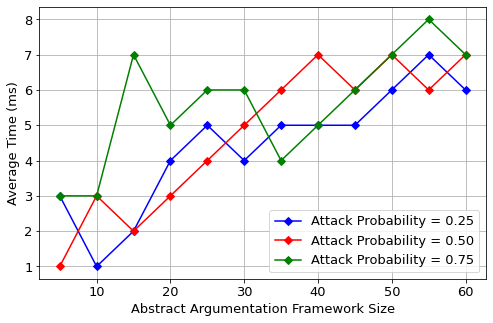}
	\caption{Average time in milliseconds for verifying all preference sets (attack reversal)}
	\label{fig:graph_preferred_verification2_approx_scaled}
    \end{subfigure}    
    \label{fig:graph_preferred_verification_approx_scaled}	
\end{figure}

\begin{figure}[H]
	\centering
    \caption{Computing Preferences of the Stable Extension for larger AAF Sizes (Approximate Algorithm)}
\begin{subfigure}[b]{.6\textwidth}
		\includegraphics[width=\linewidth]{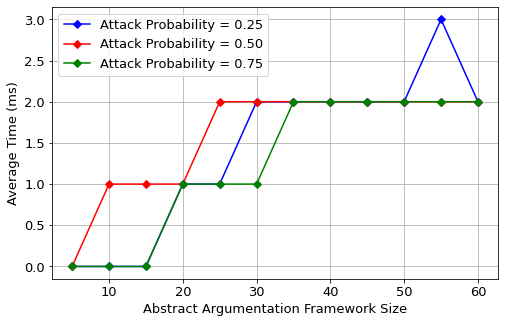}	  
    	\caption{Average time in milliseconds for computing all preference sets}
	\label{fig:graph1_stable_computing_approx_scaled}
	\end{subfigure}
\hfill 
\begin{subfigure}[b]{.6\textwidth}
	\centering
	\includegraphics[width=\textwidth]{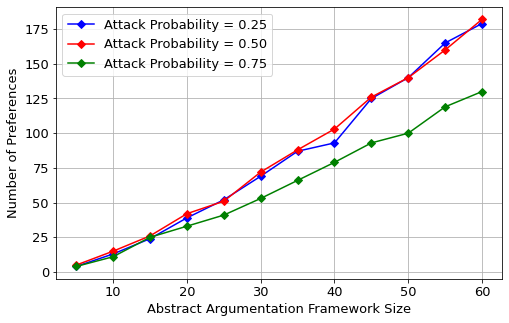}
	\caption{Average number of preferences in each preference set}
	\label{fig:graph3_stable_computing_approx_scaled}
\end{subfigure}
    \label{fig:graph_stable_computing_approx_scaled}
\end{figure}

\begin{figure}[H]
	\centering
    \caption{Verifying Preferences of the Stable Extension for larger AAF Sizes (Approximate Algorithm)}
    \begin{subfigure}[b]{.6\textwidth}
	\centering
	\includegraphics[width=\textwidth]{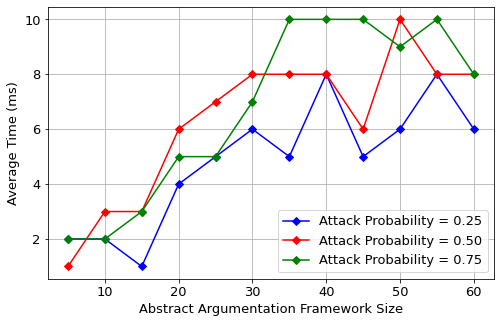}
	\caption{Average time in milliseconds for verifying all preference sets (attack removal)}
	\label{fig:graph_stable_verification1_approx_scaled}
    \end{subfigure}
\hfill  
    \begin{subfigure}[b]{.6\textwidth}
	\centering
	\includegraphics[width=\textwidth]{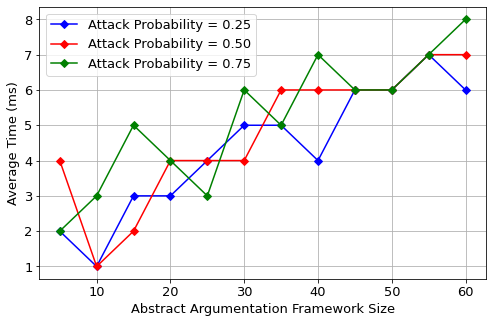}
	\caption{Average time in milliseconds for verifying all preference sets (attack reversal)}
	\label{fig:graph_stable_verification2_approx_scaled}
    \end{subfigure}	
    \label{fig:graph_stable_verification_approx_scaled}
\end{figure}

\section{Conclusions and Future Work}
\label{sec:conclusion}
In this paper we have described a novel extension-based approach to compute and verify (i.e., assess) abstract argument preferences. We have presented a novel algorithm that takes an abstract argumentation framework and a set of conflict-free arguments (extension) as input and computes all possible sets of preferences (restricted to three identified cases) that are valid for the acceptability of the arguments in the input extension. We have shown that the complexity of computing sets of preferences is exponential in the number of arguments, and thus, describe a novel approximate approach and algorithm to compute the preferences that is scalable. Furthermore, we have presented novel algorithms for verifying the computed sets of preferences, ensuring their validity. We have experimentally shown that both approaches, i.e., attack removal and reversal for the verification, output the desired input extension. We have implemented all the algorithms and have build a complete system for computing and verifying preferences. We have performed various experiments for the grounded, preferred and stable semantics with different attack probabilities in the abstract argumentation frameworks to evaluate the algorithms and performed an analysis of the results obtained.

This work has applications in decision support systems~\cite{Sprague:1993:DSS:167095,Mahesar17,Wei20-eswa} and recommender systems~\cite{Ricci:2010:RSH:1941884}, where the resulting decision(s) or recommendation(s) can be justified by the preference set(s). Another application would be to explore our work in dialogue strategies~\cite{ki-Thimm14,ijcai-RienstraTO13}, for instance in computational persuasion~\cite{argcom-Hunter18,HadouxHP23} or negotiation~\cite{RahwanRJMPS03} -- where an agent may have the capability of inferring preferences and reach her goal if (s)he enforces at least one of several desired sets of arguments with the application of preferences. The inferred preferences, in particular the unique and common preferences can be exploited in optimizing the choice of move in persuasion dialogues for behaviour change as well as in negotiation dialogues to reach agreement.

As future work, we plan to investigate different ways to aggregate and assess the sets of preferences. Additionally, we also plan to do an empirical evaluation of our proposed work on concrete examples. This will allow us to filter sets of preferences, i.e., to accept or reject them; or to rank the sets of preferences by human participants. Furthermore, we plan to extend our approach to value-based argumentation frameworks~\cite{bench-capon03-persuasion}, where it would be interesting to elicit values that the arguments promote or support to determine preferences over arguments. An extension of our work~\cite{prima-Mahesar18} to assumption-based argumentation frameworks ABA~\cite{aba_Dung2009,aba_Toni14} and ABA\textsuperscript{+}~\cite{CyrasT16} has been presented in~\cite{prima-Mahesar20}, where we were interested in eliciting preferences at the assumption level, however in the future, we aim to go beyond this by exploring other structured argumentation frameworks~\cite{Besnard2014-BESITS,ModgilP14}, in particular ASPIC\textsuperscript{+}~\cite{ModgilP14}. 
Furthermore, we intend to investigate the relationship between extension enforcement~\cite{ecai-Baumann12,CMKMM2015} and our work. 

\section*{Acknowledgements}
\label{acks}
\smallskip
\noindent This work was partially supported by the EPSRC, under grant EP/P011829/1, \emph{Supporting Security Policy with Effective Digital Intervention (SSPEDI)}.

%
%
\bibliographystyle{elsarticle-num}
\bibliography{elsarticle-main-num}

\newpage 

%

\appendix

\section{Preference Sets}
\label{sec:appendix-a}

\begin{table}[!h]
	\caption{Preference sets for all conflict-free extensions}
	\label{table:conflictfree-preferences}
	\centering\renewcommand\arraystretch{1.2}
	\small
	\begin{tabular}{|c|p{70mm}|}
		\hline
		Conflict-free Extensions & Preference Sets  \\ \hline
		$\{A,C,E\}$ &  
		$\{ \{C>D, A>B, C>B, E>D\}$ \newline $\{C>D, A>B, C>B, E=D\}$ \newline $\{C>D, A>B, C>B, D>E\}$ 
		\newline
		$\{C>D, A>B, C=B, E>D\}$ \newline $\{C>D, A>B, C=B, E=D\}$ \newline $\{C>D, A>B, C=B, D>E\}$ 
		\newline
		$\{C>D, A=B, C>B, E>D\}$ \newline $\{C>D, A=B, C>B, E=D\}$ \newline $\{C>D, A=B, C>B, D>E\}$ 
		\newline
		$\{C>D, A=B, C=B, E>D\}$ \newline $\{C>D, A=B, C=B, E=D\}$ \newline $\{C>D, A=B, C=B, D>E\} \}$ 		
		\\
		\hline
		$\{A,D\}$ & 	 $\{ \{ D>C, A>B, D>E \}$
		\newline $\{ D>C, A>B, D=E \}$
		\newline $\{ D>C, A=B, D>E \}$
		\newline $\{ D>C, A=B, D=E \} \}$ 
		\\
		\hline
		$\{B,D\}$ & $\{ \{B>A, B>C, D>C, D>E\}$
		\newline $\{B>A, B>C, D>C, D=E\} \}$		
		\\
		\hline
		$\{A,C\}$ & 
		$\{ \{C>D, A>B, C>B\}$
		\newline 
		$\{C>D, A>B, C=B\}$
		\newline 		
		$\{C>D, A=B, C>B\}$			
		\newline 		
		$\{C>D, A=B, C=B\} \}$			
		\\
		\hline
		$\{A,E\}$ & $\{ \{E>D, A>B\}$
		\newline $\{E>D, A=B\} \}$		
		\\
		\hline
		$\{B,E\}$ & $\{ \{B>A, B>C, E>D\} \}$	
		\\
		\hline
		$\{C,E\}$ & $\{ \{C>D, C>B, E>D\}$
		\newline $\{C>D, C>B, E=D\}$	
		\newline $\{C>D, C>B, D>E\}$
		\newline $\{C>D, C=B, E>D\}$
		\newline $\{C>D, C=B, E=D\}$		
		\newline $\{C>D, C=B, D>E\} \}$
		\\
		\hline	
		$\{A\}$ & $\{ \{A>B\}$
		\newline $\{A=B\} \}$
		\\
		\hline	
		$\{B\}$ & $\{ \{B>A, B>C\} \}$
		\\
		\hline				
		$\{C\}$ & $\{ \{C>D, C>B\}$
		\newline $\{C>D, C=B\} \}$
		\\
		\hline			
		$\{D\}$ & $\{ \{D>C, D>E\}$
		\newline $\{D>C, D=E\} \}$
		\\
		\hline			
		$\{E\}$ & $\{ \{E>D\} \}$
		\\
		\hline			
		$\emptyset$ & $\emptyset$ 
		\\
		\hline
	\end{tabular}
\end{table}

\newpage 

\section{Data sets generated for the Experimental Analysis}
\label{sec:appendix-b}
 
We present below all the data sets in the form of tables that were generated to conduct the experimental analysis presented in Section~\ref{sec:experimental_evaluation}.
The abbreviations used in the titles of columns in the following tables are given as follows:
\begin{itemize}
    \item \textbf{AAF Size:} Abstract Argumentation Framework Size.
    \item \textbf{Ext Size:} Extension Size.
    \item \textbf{Attacks:} Number of Attacks.
    \item \textbf{Preference Sets:} Number of Preference Sets.
    \item \textbf{Preferences:} Number of Preferences.
    \item \textbf{CTime:} Time for Computing all Preference Sets in milliseconds.
    \item \textbf{VTime1:} Time for Verifying all Preference Sets in milliseconds, using Attack Removal Method.
    \item \textbf{VTime2:} Time for Verifying all Preference Sets in milliseconds, using Attack Reversal Method.
\end{itemize}

\begin{center}
    \large{\textbf{Data sets for the Original Algorithm}}
\end{center}

\begin{table}[H]
    \caption{Grounded Extension with Attack Probability $0.25$}
	\label{table:grounded_25}
	\centering\renewcommand\arraystretch{1.2}
\begin{tabular}{ |p{1cm}|p{1cm}|p{1.5cm}|p{2cm}| p{2cm}|p{1.5cm}|p{1.5cm}|p{1.5cm}| }
 \hline
AAF Size & Ext Size & Attacks & Preference Sets & Preferences & CTime (ms) & VTime$1$ (ms) & VTime$2$ (ms)\\
 \hline
4 & 2 & 3 &	8 &	2 &	0 &	3 &	4\\
\hline
5 &	3 &	3 &	12 &	2 &	1 &	4 &	5 \\
\hline 
6 &	3 &	6 &	51 &	4 &	3 &	7 &	8 \\
\hline 
7 &	2 &	10 &	159 &	5 &	4 &	11 &	14\\
\hline 
8 &	2 &	12 &	242 &	4 &	6 &	17 &	18 \\
\hline 
9 &	4 &	16 &	1726 &	9 &	27 &	65 &	76\\
\hline 
10 &	3 &	22 &	19354 &	9 &	1452 &	497 &	663\\
\hline 
11 &	2 &	27 &	2500 &	5 &	130 &	111 &	118\\
\hline 
12 &	3 &	30 &	160976 &	8 &	102678 &	5589 &	7450\\
\hline 
\end{tabular}
\end{table}

\vspace*{1 cm}

\begin{table}[H]
\caption{Grounded Extension with Attack Probability $0.50$}
	\label{table:grounded_50}
	\centering\renewcommand\arraystretch{1.2}
\begin{tabular}{ |p{1cm}|p{1cm}|p{1.5cm}|p{2cm}| p{2cm}|p{1.5cm}|p{1.5cm}|p{1.5cm}| }
 \hline
AAF Size & Ext Size & Attacks & Preference Sets & Preferences & CTime (ms) & VTime$1$ (ms) & VTime$2$ (ms)\\
 \hline
4 &	2 &	5 &	18 &	3 &	1 &	4 &	7\\
\hline 
5 &	2 &	8 &	21 &	3 &	1 &	4 &	8\\
\hline 
6 &	2 &	13 &	408 &	6 &	7 &	27 &	19\\
\hline 
7 &	2 &	19 &	1869 &	7 &	40 &	61 &	69\\
\hline 
8 &	2 &	25 &	2279 &	7 &	40 &	80 &	71\\
\hline 
9 &	2 &	29 &	50088 &	7 &	9163 &	1361 &	1789\\
\hline 
10 &	2 &	41 &	113597 &	9 &	24540 &	3574 &	4461\\
\hline 
11 &	2 &	52 &	352546 &	8 &	1689643 &	10334 &	13022\\
\hline 
12 &	1 &	60 &	2190 &	7 &	59 &	120 &	124\\

\hline 
\end{tabular}
\end{table}

\vspace*{1 cm}

\begin{table}[H]
    \caption{Grounded Extension with Attack Probability $0.75$}
	\label{table:grounded_75}
	\centering\renewcommand\arraystretch{1.2}
\begin{tabular}{ |p{1cm}|p{1cm}|p{1.5cm}|p{2cm}| p{2cm}|p{1.5cm}|p{1.5cm}|p{1.5cm}| }
 \hline
AAF Size & Ext Size & Attacks & Preference Sets & Preferences & CTime (ms) & VTime$1$ (ms) & VTime$2$ (ms)\\
 \hline
4 &	2 &	7 &	19 &	3 &	1 &	5 &	7\\
\hline 
5 &	1 &	13 &	52 &	4 &	2 &	7 &	9\\
\hline 
6 &	2 &	19 &	509 &	6 &	9 &	32 &	24\\
\hline 
7 &	2 &	28 &	10506 &	8 &	481 &	273 &	315\\
\hline 
8 &	2 &	36 &	11539 &	9 &	336 &	358 &	452\\
\hline 
9 &	2 &	49 &	89555 &	10 &	9906 &	3245 &	3760\\
\hline 
10 &	2 &	61 &	308365 &	12 &	187235 &	9002 &	11324\\
\hline 
11 &	1 &	75 &	559968 &	11 &	709698 &	19996 &	24010\\
\hline 
12 &	1 &	87 &	88733 &	9 &	537216 &	4400 &	5438\\
\hline 
\end{tabular}
\end{table}

\begin{table}[H]
    \caption{Preferred Extension with Attack Probability $0.25$}
	\label{table:preferred_25}
	\centering\renewcommand\arraystretch{1.2}
\begin{tabular}{ |p{1cm}|p{1cm}|p{1.5cm}|p{2cm}| p{2cm}|p{1.5cm}|p{1.5cm}|p{1.5cm}| }
 \hline
AAF Size & Ext Size & Attacks & Preference Sets & Preferences & CTime (ms) & VTime$1$ (ms) & VTime$2$ (ms)\\
\hline
4 &	2 &	4 &	2 &	2 &	0 &	1 &	3\\
\hline 
5 &	2 &	7 &	4 &	4 &	1 &	5 &	5\\
\hline 
6 &	3 &	8 &	8 &	5 &	1 &	10 &	8\\
\hline 
7 &	3 &	12 &	25 &	7 &	3 &	9 &	11\\
\hline 
8 &	3 &	15 &	437 &	8 &	12 &	53 &	72\\
\hline 
9 &	3 &	20 &	380 &	10 &	7 &	53 &	96\\
\hline 
10 &	4 &	21 &	837 &	12 &	13 &	89 &	143\\
\hline 
11 &	4 &	27 &	1008 &	14 &	28 &	122 &	183\\
\hline 
12 &	4 &	32 &	9613 &	15 &	1246 &	843 &	1311\\
\hline 
13 &	4 &	41 &	10736 &	18 &	501 &	971 &	1608\\
\hline 
14 &	4 &	49 &	5389 &	21 &	774 &	581 &	885\\
\hline 
15 &	4 &	52 &	20442 &	24 &	3554 &	2059 &	3677\\
\hline 
16 &	5 &	60 &	20480 &	26 &	24654 &	1908 &	3548\\
\hline 
\end{tabular}
\end{table}

\vspace*{1 cm}

\begin{table}[H]
    \caption{Preferred Extension with Attack Probability $0.50$}
	\label{table:preferred_50}
	\centering\renewcommand\arraystretch{1.2}
\begin{tabular}{ |p{1cm}|p{1cm}|p{1.5cm}|p{2cm}| p{2cm}|p{1.5cm}|p{1.5cm}|p{1.5cm}| }
 \hline
AAF Size & Ext Size & Attacks & Preference Sets & Preferences & CTime (ms) & VTime$1$ (ms) & VTime$2$ (ms)\\
\hline
4 &	2 &	7 &	3 &	3 &	0 &	2 &	6\\
\hline
5 &	2 &	10 &	6 &	5 &	1 &	6 &	9\\
\hline
6 &	2 &	15 &	7 &	7 &	1 &	8 &	12\\
\hline
7 &	2 &	22 &	10 &	9 &	1 &	11 &	10\\
\hline
8 &	2 &	28 &	17 &	11 &	1 &	7 &	13\\
\hline
9 &	2 &	38 &	60 &	12 &	3 &	15 &	21\\
\hline
10 &	3 &	45 &	102 &	16 &	5 &	23 &	25\\
\hline
11 &	2 &	56 &	682 &	17 &	32 &	85 &	117\\
\hline
12 &	2 &	66 &	204 &	17 &	8 &	50 &	42\\
\hline
13 &	2 &	81 &	210 &	20 &	9 &	43 &	50\\
\hline
14 &	3 &	90 &	3189 &	25 &	399 &	344 &	584\\
\hline
15 &	3 &	105 &	1997 &	27 &	129 &	223 &	353\\
\hline
16 &	2 &	122 &	6886 &	26 &	5190 &	692 &	1276\\
\hline
\end{tabular}
\end{table}

\qquad

\begin{table}[H]
    \caption{Preferred Extension with Attack Probability $0.75$}
	\label{table:preferred_75}
	\centering\renewcommand\arraystretch{1.2}
\begin{tabular}{ |p{1cm}|p{1cm}|p{1.5cm}|p{2cm}| p{2cm}|p{1.5cm}|p{1.5cm}|p{1.5cm}| }
 \hline
AAF Size & Ext Size & Attacks & Preference Sets & Preferences & CTime (ms) & VTime$1$ (ms) & VTime$2$ (ms)\\
\hline
4 &	1 &	9 &	2 &	3 &	1 &	3 &	5\\
\hline 
5 &	1 &	15 &	3 &	5 &	0 &	4 &	5\\
\hline 
6 &	1 &	23 &	3 &	6 &	0 &	3 &	7\\
\hline 
7 &	1 &	32 &	7 &	7 &	1 &	5 &	9\\
\hline 
8 &	1 &	42 &	9 &	9 &	1 &	9 &	7\\
\hline 
9 &	2 &	54 &	9 &	11 &	1 &	10 &	7\\
\hline 
10 &	2 &	67 &	14 &	12 &	2 &	17 &	7\\
\hline 
11 &	2 &	81 &	41 &	16 &	3 &	14 &	21\\
\hline 
12 &	2 &	99 &	9 &	16 &	1 &	7 &	15\\
\hline 
13 &	2 &	115 &	40 &	21 &	2 &	19 &	17\\
\hline 
14 &	2 &	134 &	161 &	21 &	6 &	55 &	44\\
\hline 
15 &	2 &	157 &	106 &	26 &	5 &	35 &	41\\
\hline 
16 &	2 &	184 &	48 &	28 &	3 &	20 &	33\\
\hline 
\end{tabular}
\end{table}

\qquad

\begin{table}[H]
    \caption{Stable Extension with Attack Probability $0.25$}
	\label{table:stable_25}
	\centering\renewcommand\arraystretch{1.2}
\begin{tabular}{ |p{1cm}|p{1cm}|p{1.5cm}|p{2cm}| p{2cm}|p{1.5cm}|p{1.5cm}|p{1.5cm}| }
 \hline
AAF Size & Ext Size & Attacks & Preference Sets & Preferences & CTime (ms) & VTime$1$ (ms) & VTime$2$ (ms)\\
\hline
4 &	2 &	4 &	2 &	2 &	0 &	3 &	1\\
\hline
5 &	3 &	6 &	4 &	4 &	1 &	4 &	4\\
\hline
6 &	3 &	10 &	17 &	6 &	1 &	12 &	10\\
\hline
7 &	3 &	11 &	40 &	7 &	2 &	15 &	14\\
\hline
8 &	4 &	14 &	346 &	9 &	7 &	42 &	62\\
\hline
9 &	4 &	19 &	144 &	11 &	5 &	30 &	29\\
\hline
10 &	4 &	22 &	6426 &	13 &	330 &	483 &	720\\
\hline
11 &	4 &	29 &	715 &	14 &	18 &	104 &	122\\
\hline
12 &	4 &	37 &	7910 &	17 &	382 &	686 &	1161\\
\hline
13 &	4 &	39 &	17856 &	19 &	2952 &	1545 &	2463\\
\hline
14 &	5 &	48 &	14598 &	23 &	1992 &	1425 &	2334\\
\hline
15 &	5 &	52 &	73677 &	23 &	47656 &	5104 &	10876\\
\hline
16 &	5 &	64 &	27853 &	26 &	32881 &	20035 &	5970\\
\hline
\end{tabular}
\end{table}

\qquad

\begin{table}[H]
    \caption{Stable Extension with Attack Probability $0.50$}
	\label{table:stable_50}
	\centering\renewcommand\arraystretch{1.2}
\begin{tabular}{ |p{1cm}|p{1cm}|p{1.5cm}|p{2cm}| p{2cm}|p{1.5cm}|p{1.5cm}|p{1.5cm}| }
 \hline
AAF Size & Ext Size & Attacks & Preference Sets & Preferences & CTime (ms) & VTime$1$ (ms) & VTime$2$ (ms)\\
\hline
4 &	2 &	7 &	3 &	3 &	0 &	3 &	7\\
\hline
5 &	2 &	10 &	3 &	5 &	0 &	3 &	5\\
\hline
6 &	2 &	15 &	5 &	6 &	1 &	6 &	11\\
\hline
7 &	2 &	21 &	13 &	7 &	2 &	10 &	10\\
\hline
8 &	2 &	30 &	17 &	10 &	1 &	9 &	12\\
\hline
9 &	2 &	37 &	75 &	13 &	4 &	22	18\\
\hline
10 &	2 &	46 &	2251 &	14 &	75 &	181 &	233\\
\hline
11 &	2 &	54 &	99 &	16 &	5 &	28 &	24\\
\hline
12 &	3 &	62 &	797 &	19 &	39 &	93 &	144\\
\hline
13 &	3 &	77 &	1825 &	22 &	291 &	176 &	337\\
\hline
14 &	3 &	92 &	819 &	24 &	37 &	102 &	142\\
\hline
15 &	3 &	104 &	5574 &	26 &	1382 &	584	 & 905\\
\hline
16 &	3 &	117 &	2726 &	29 &	189 &	328 &	645\\
\hline
\end{tabular}
\end{table}

\qquad

\begin{table}[H]
    \caption{Stable Extension with Attack Probability $0.75$}
	\label{table:stable_75}
	\centering\renewcommand\arraystretch{1.2}
\begin{tabular}{ |p{1cm}|p{1cm}|p{1.5cm}|p{2cm}| p{2cm}|p{1.5cm}|p{1.5cm}|p{1.5cm}| }
 \hline
AAF Size & Ext Size & Attacks & Preference Sets & Preferences & CTime (ms) & VTime$1$ (ms) & VTime$2$ (ms)\\
\hline
4 &	1 &	10 &	1 &	3 &	0 &	2 &	2\\
\hline
5 &	1 &	16 &	3 &	5 &	0 &	4 &	4\\
\hline
6 &	2 &	22 &	5 &	7 &	1 &	6 &	4\\
\hline
7 &	1 &	32 &	4 &	7 &	1 &	5 &	4\\
\hline
8 &	2 &	41 &	15 &	10 &	1 &	11 &	10\\
\hline
9 &	1 &	57 &	4 &	10 &	1 &	8 &	6\\
\hline
10 &	2 &	68 &	6 &	12 &	1 &	4 &	9\\
\hline
11 &	1 &	85 &	12 &	13 &	2 &	6 &	12\\
\hline
12 &	2 &	99 &	14 &	17 &	2 &	11 &	18\\
\hline
13 &	2 &	117 &	39 &	19 &	3 &	23 &	12\\
\hline
14 &	2 &	134 &	43 &	21 &	3 &	22 &	20\\
\hline
15 &	2 &	156 &	324 &	25 &	15 &	86 &	93\\
\hline
16 &	2 &	177 &	346 &	26 &	15 &	96 &	112\\
\hline
\end{tabular}
\end{table}

\newpage 

\begin{center}
    \large{\textbf{Data sets for the Approximate Algorithm}}
\end{center}

\begin{table}[H]
    \caption{Grounded Extension with Attack Probability $0.25$}
	\label{table:grounded_approx_25}
	\centering\renewcommand\arraystretch{1.2}
\begin{tabular}{ |p{1cm}|p{1cm}|p{1.5cm}|p{2cm}| p{2cm}|p{1.5cm}|p{1.5cm}|p{1.5cm}| }
 \hline
AAF Size & Ext Size & Attacks & Preference Sets & Preferences & CTime (ms) & VTime$1$ (ms) & VTime$2$ (ms)\\
 \hline
 4 &	2 &	3 &	1 &	2 &	0 &	0 &	0\\
\hline 
5 &	3 &	3 &	1 &	2 &	0 &	1 &	0\\
\hline 
6 &	3 &	6 &	1 &	4 &	0 &	0 &	0\\
\hline 
7 &	2 &	10 &	1 &	5 &	0 &	0 &	1\\
\hline 
8 &	2 &	12  &	1 &	4 &	0 &	0 &	1\\
\hline 
9 &	4 &	16 &	1 &	9 &	0 &	0 &	0\\
\hline 
10 &	3 &	22 &	1 &	9 &	0 &	0 &	0\\
\hline 
11 &	2 &	27 &	1 &	5 &	0 &	1 &	1\\
\hline 
12 &	3 &	30 &	1 &	8 &	0 &	0 &	1\\
\hline 
 
\end{tabular}
\end{table}

\qquad

\begin{table}[H]
    \caption{Grounded Extension with Attack Probability $0.50$}
	\label{table:grounded_approx_50}
	\centering\renewcommand\arraystretch{1.2}
\begin{tabular}{ |p{1cm}|p{1cm}|p{1.5cm}|p{2cm}| p{2cm}|p{1.5cm}|p{1.5cm}|p{1.5cm}| }
 \hline
AAF Size & Ext Size & Attacks & Preference Sets & Preferences & CTime (ms) & VTime$1$ (ms) & VTime$2$ (ms)\\
 \hline
4 &	2 &	5 &	1 &	3 &	0 &	1 &	0
\\
\hline 
5 &	2 &	8 &	1 &	3 &	0 &	0 &	1\\
\hline 
6 &	2 &	13 &	1 &	6 &	0 &	0 &	0\\
\hline 
7 &	2 &	19 &	1 &	7 &	0 &	0 &	0\\
\hline 
8 &	2 &	25 &	1 &	7 &	0 &	1 &	0\\
\hline 
9 &	2 &	29 &	1 &	7 &	0 &	0 &	0\\
\hline 
10 &	2 &	41 &	1 &	9 &	0 &	1 &	1\\
\hline 
11 &	2 &	52 &	1 &	8 &	0 &	1 &	1\\
\hline 
12 &	1 &	60 &	1 &	7 &	0 &	1 &	0\\
\hline 
\end{tabular}
\end{table}

\qquad

\begin{table}[H]
    \caption{Grounded Extension with Attack Probability $0.75$}
	\label{table:grounded_approx_75}
	\centering\renewcommand\arraystretch{1.2}
\begin{tabular}{ |p{1cm}|p{1cm}|p{1.5cm}|p{2cm}| p{2cm}|p{1.5cm}|p{1.5cm}|p{1.5cm}| }
 \hline
AAF Size & Ext Size & Attacks & Preference Sets & Preferences & CTime (ms) & VTime$1$ (ms) & VTime$2$ (ms)\\
 \hline
4 &	2 &	7 &	1 &	3 &	0 &	0 &	0\\
\hline 
5 &	1 &	13 &	1 &	4 &	0 &	0 &	0\\
\hline 
6 &	2 &	19 &	1 &	6 &	0 &	0 &	0\\
\hline 
7 &	2 &	28 &	1 &	8 &	0 &	0 &	0\\
\hline 
8 &	2 &	36 &	1 &	9 &	0 &	0 &	1\\
\hline 
9 &	2 &	49 &	1 &	10 &	0 &	1 &	0\\
\hline 
10 &	2 &	61 &	1 &	12 &	1 &	0 &	1\\
\hline 
11 &	1 &	75 &	1 &	11 &	0 &	1 &	1\\
\hline 
12 &	1 &	87 &	1 &	9 &	0 &	1 &	0
\\
\hline 
\end{tabular}
\end{table}

\qquad

\begin{table}[H]
    \caption{Preferred Extension with Attack Probability $0.25$}
	\label{table:preferred_approx_25}
	\centering\renewcommand\arraystretch{1.2}
\begin{tabular}{ |p{1cm}|p{1cm}|p{1.5cm}|p{2cm}| p{2cm}|p{1.5cm}|p{1.5cm}|p{1.5cm}| }
 \hline
AAF Size & Ext Size & Attacks & Preference Sets & Preferences & CTime (ms) & VTime$1$ (ms) & VTime$2$ (ms)\\
 \hline
4 &	2 &	4 &	1 &	2 &	0 &	2 &	2\\
\hline 
5 &	2 &	7 &	1 &	4 &	1 &	1 &	3\\
\hline 
6 &	3 &	8 &	1 &	5 &	0 &	1 &	2\\
\hline 
7 &	3 &	12 &	1 &	7 &	1 &	5 &	2\\
\hline 
8 &	3 &	15 &	1 &	8 &	0 &	1 &	1\\
\hline 
9 &	3 &	20 &	1 &	10 &	0 &	2 &	2\\
\hline 
10 &	4 &	21 &	1 &	12 &	0 &	2 &	1\\
\hline 
11 &	4 &	27 &	1 &	14 &	0 &	1 &	0\\
\hline 
12 &	4 &	32 &	1 &	15 &	1 &	2 &	1\\
\hline 
13 &	4 &	41 &	1 &	18 &	1 &	2 &	1\\
\hline 
14 &	4 &	49 &	1 &	21 &	0 &	2 &	1\\
\hline 
15 &	4 &	52 &	1 &	24 &	0 &	2 &	1\\
\hline 
16 &	5 &	60 &	1 &	26 &	1 &	2 &	2 \\
\hline 
\end{tabular}
\end{table}

\qquad

\begin{table}[H]
    \caption{Preferred Extension with Attack Probability $0.50$}
	\label{table:preferred_approx_50}
	\centering\renewcommand\arraystretch{1.2}
\begin{tabular}{ |p{1cm}|p{1cm}|p{1.5cm}|p{2cm}| p{2cm}|p{1.5cm}|p{1.5cm}|p{1.5cm}| }
 \hline
AAF Size & Ext Size & Attacks & Preference Sets & Preferences & CTime (ms) & VTime$1$ (ms) & VTime$2$ (ms)\\
 \hline
4 &	2 &	7 &	1 &	3 &	0 &	2 &	2\\
\hline 
5 &	2 &	10 &	1 &	5 &	0 &	2 &	1\\
\hline 
6 &	2 &	15 &	1 &	7 &	0 &	1 &	1\\
\hline 
7 &	2 &	22 &	1 &	9 &	0 &	1 &	1\\
\hline 
8 &	2 &	28 &	1 &	11 &	0 &	3 &	1\\
\hline 
9 &	2 &	38 &	1 &	12 &	1 &	3 &	1\\
\hline 
10 &	3 &	45 &	1 &	16 &	1 &	2 &	3\\
\hline 
11 &	2 &	56 &	1 &	17 &	1 &	3 &	1\\
\hline 
12 &	2 &	66 &	1 &	17 &	0 &	2 &	1\\
\hline 
13 &	2 &	81 &	1 &	20 &	1 &	5 &	1\\
\hline 
14 &	3 &	90 &	1 &	25 &	1 &	2 &	2\\
\hline 
15 &	3 &	105 &	1 &	27 &	1 &	2 &	1\\
\hline 
16 &	2 &	122 &	1 &	26 &	0 &	3 &	1 
\\
\hline 
\end{tabular}
\end{table}

\qquad

\begin{table}[H]
    \caption{Preferred Extension with Attack Probability $0.75$}
	\label{table:preferred_approx_75}
	\centering\renewcommand\arraystretch{1.2}
\begin{tabular}{ |p{1cm}|p{1cm}|p{1.5cm}|p{2cm}| p{2cm}|p{1.5cm}|p{1.5cm}|p{1.5cm}| }
 \hline
AAF Size & Ext Size & Attacks & Preference Sets & Preferences & CTime (ms) & VTime$1$ (ms) & VTime$2$ (ms)\\
 \hline
4 &	1 &	9 &	1 &	3 &	0 &	1 &	2\\
\hline 
5 &	1 &	15 &	1 &	5 &	0 &	2 &	1\\
\hline 
6 &	1 &	23 &	1 &	6 &	0 &	3 &	2\\
\hline 
7 &	1 &	32 &	1 &	7 &	0 &	1 &	1\\
\hline 
8 &	1 &	42 &	1 &	9 &	0 &	1 &	2\\
\hline 
9 &	2 &	54 &	1 &	11 &	0 &	2 &	1\\
\hline 
10 &	2 &	67 &	1 &	12 &	0 &	1 &	1\\
\hline 
11 &	2 &	81 &	1 &	16 &	0 &	2 &	2\\
\hline 
12 &	2 &	99 &	1 &	16 &	1 &	2 &	1\\
\hline 
13 &	2 &	115 &	1 &	21 &	1 &	1 &	2\\
\hline 
14 &	2 &	134 &	1 &	21 &	0 &	1 &	1\\
\hline 
15 &	2 &	157 &	1 &	26 &	1 &	2 &	2\\
\hline 
16 &	2 &	184 &	1 &	28 &	1 &	2 &	3 
\\
\hline 
\end{tabular}
\end{table}

\qquad

\begin{table}[H]
    \caption{Stable Extension with Attack Probability $0.25$}
	\label{table:stable_25}
	\centering\renewcommand\arraystretch{1.2}
\begin{tabular}{ |p{1cm}|p{1cm}|p{1.5cm}|p{2cm}| p{2cm}|p{1.5cm}|p{1.5cm}|p{1.5cm}| }
 \hline
AAF Size & Ext Size & Attacks & Preference Sets & Preferences & CTime (ms) & VTime$1$ (ms) & VTime$2$ (ms)\\
 \hline
4 &	2 &	4 &	1 &	2 &	0 &	2 &	2\\
\hline 
5 &	3 &	6 &	1 &	4 &	0 &	2 &	1\\
\hline 
6 &	3 &	10 &	1 &	6 &	1 &	1 &	1\\
\hline 
7 &	3 &	11 &	1 &	7 &	0 &	1 &	1\\
\hline 
8 &	4 &	14 &	1 &	9 &	0 &	1 &	0\\
\hline 
9 &	4 &	19 &	1 &	11 &	0 &	0 &	0\\
\hline 
10 &	4 &	22 &	1 &	13 &	1 &	1 &	1\\
\hline 
11 &	4 &	29 &	1 &	14 &	1 &	1 &	0\\
\hline 
12 &	4 &	37 &	1 &	17 &	0 &	2 &	3\\
\hline 
13 &	4 &	39 &	1 &	19 &	0 &	2 &	2\\
\hline 
14 &	5 &	48 &	1 &	23 &	0 &	1 &	1\\
\hline 
15 &	5 &	52 &	1 &	23 &	1 &	2 &	1\\
\hline 
16 &	5 &	64 &	1 &	26 &	1 &	2 &	1 
\\
\hline 
\end{tabular}
\end{table}

\qquad

\begin{table}[H]
    \caption{Stable Extension with Attack Probability $0.50$}
	\label{table:stable_50}
	\centering\renewcommand\arraystretch{1.2}
\begin{tabular}{ |p{1cm}|p{1cm}|p{1.5cm}|p{2cm}| p{2cm}|p{1.5cm}|p{1.5cm}|p{1.5cm}| }
 \hline
AAF Size & Ext Size & Attacks & Preference Sets & Preferences & CTime (ms) & VTime$1$ (ms) & VTime$2$ (ms)\\
 \hline
4 &	2 &	7 &	1 &	3 &	0 &	1 &	3\\
\hline 
5 &	2 &	10 &	1 &	5 &	0 &	2 &	2\\
\hline 
6 &	2 &	15 &	1 &	6 &	1 &	2 &	1\\
\hline 
7 &	2 &	21 &	1 &	7 &	0 &	2 &	1\\
\hline 
8 &	2 &	30 &	1 &	10 &	0 &	4 &	1\\
\hline 
9 &	2 &	37 &	1 &	13 &	1 &	1 &	1\\
\hline 
10 &	2 &	46 &	1 &	14 &	0 &	2 &	1\\
\hline 
11 &	2 &	54 &	1 &	16 &	1 &	3 &	1\\
\hline 
12 &	3 &	62 &	1 &	19 &	1 &	2 &	2\\
\hline 
13 &	3 &	77 &	1 &	22 &	1 &	2 &	1\\
\hline 
14 &	3 &	92 &	1 &	24 &	1 &	3 &	1\\
\hline 
15 &	3 &	104 &	1 &	26 &	0 &	3 &	1\\
\hline 
16 &	3 &	117 &	1 &	29 &	1 &	2 &	2 
\\
\hline 
\end{tabular}
\end{table}

\qquad

\begin{table}[H]
    \caption{Stable Extension with Attack Probability $0.75$}
	\label{table:stable_75}
	\centering\renewcommand\arraystretch{1.2}
\begin{tabular}{ |p{1cm}|p{1cm}|p{1.5cm}|p{2cm}| p{2cm}|p{1.5cm}|p{1.5cm}|p{1.5cm}| }
 \hline
AAF Size & Ext Size & Attacks & Preference Sets & Preferences & CTime (ms) & VTime$1$ (ms) & VTime$2$ (ms)\\
 \hline
4 &	1 &	10 &	1 &	3 &	0 &	3 &	1\\
\hline 
5 &	1 &	16 &	1 &	5 &	0 &	2 &	1\\
\hline 
6 &	2 &	22 &	1 &	7 &	0 &	3 &	3\\
\hline 
7 &	1 &	32 &	1 &	7 &	0 &	1 &	2\\
\hline 
8 &	2 &	41 &	1 &	10 &	1 &	1 &	1\\
\hline 
9 &	1 &	57 &	1 &	10 &	0 &	4 &	2\\
\hline 
10 &	2 &	68 &	1 &	12 &	1 &	3 &	2\\
\hline 
11 &	1 &	85 &	1 &	13 &	1 &	3 &	3\\
\hline 
12 &	2 &	99 &	1 &	17 &	0 &	1 &	2\\
\hline 
13 &	2 &	117 &	1 &	19 &	0 &	2 &	1\\
\hline 
14 &	2 &	134 &	1 &	21 &	0 &	1 &	1\\
\hline 
15 &	2 &	156 &	1 &	25 &	0 &	2 &	2\\
\hline 
16 &	2 &	177 &	1 &	26 &	0 &	3 &	4 
\\
\hline 
\end{tabular}
\end{table}

\newpage 

\begin{center}
 \large{\textbf{Data sets for the Approximate Algorithm for larger AAF Sizes}}    
\end{center}

\begin{table}[H]
    \caption{Grounded Extension with Attack Probability $0.25$ for larger AAF Sizes}
	\label{table:grounded_scaled_approx_25}
	\centering\renewcommand\arraystretch{1.2}
\begin{tabular}{ |p{1cm}|p{1cm}|p{1.5cm}|p{2cm}| p{2cm}|p{1.5cm}|p{1.5cm}|p{1.5cm}| }
 \hline
AAF Size & Ext Size & Attacks & Preference Sets & Preferences & CTime (ms) & VTime$1$ (ms) & VTime$2$ (ms)\\
 \hline
5 &	2 &	4 &	1 &	3 &	0 &	1 &	1\\
\hline 
10 &	4 &	19 &	1 &	9 &	0 &	1 &	1\\
\hline 
15 &	2 &	49 &	1 &	6 &	0 &	2 &	2\\
\hline 
20 &	2 &	89 &	1 &	12 &	0 &	1 &	1\\
\hline 
25 &	1 &	149 &	1 &	8 &	0 &	1 &	1\\
\hline 
30 &	1 &	207 &	1 &	10 &	0 &	1 &	0\\
\hline 
35 &	1 &	293 &	1 &	9 &	0 &	1 &	0\\
\hline 
40 &	1 &	380 &	1 &	10 &	0 &	1 &	0\\
\hline 
45 &	1 &	492 &	1 &	11 &	0 &	1 &	0\\
\hline 
50 &	1 &	602 &	1 &	12 &	0 &	1 &	0\\
\hline 
55 &	1 &	747 &	1 &	15 &	0 &	1 &	3\\
\hline 
60 &	1 &	867 &	1 &	16 &	0 &	1 &	1\\
\hline 
\end{tabular}
\end{table}

\qquad

\begin{table}[H]
    \caption{Preferred Extension with Attack Probability $0.25$ for larger AAF Sizes}
	\label{table:preferred_scaled_approx_25}
	\centering\renewcommand\arraystretch{1.2}
\begin{tabular}{ |p{1cm}|p{1cm}|p{1.5cm}|p{2cm}| p{2cm}|p{1.5cm}|p{1.5cm}|p{1.5cm}| }
 \hline
AAF Size & Ext Size & Attacks & Preference Sets & Preferences & CTime (ms) & VTime$1$ (ms) & VTime$2$ (ms)\\
 \hline
5 &	2 &	5 &	1 &	4 &	0 &	2 &	3\\
\hline 
10 &	4 &	23 &	1 &	12 &	0 &	3 &	1\\
\hline 
15 &	4 &	55 &	1 &	22 &	0 &	3 &	2\\
\hline 
20 &	4 &	92 &	1 &	31 &	1 &	4 &	4\\
\hline 
25 &	6 &	153 &	1 &	51 &	1 &	6 &	5\\
\hline 
30 &	6 &	225 &	1 &	67 &	1 &	6 &	4\\
\hline 
35 &	6 &	298 &	1 &	86 &	2 &	7 &	5\\
\hline 
40 &	7 &	394 &	1 &	101 &	2 &	5 &	5\\
\hline 
45 &	7 &	500 &	1 &	120 &	2 &	7 &	5\\
\hline 
50 &	8 &	606 &	1 &	146 &	2 &	6 &	6\\
\hline 
55 &	7 &	733 &	1 &	156 &	4 &	7 &	7\\
\hline 
60 &	8 &	879 &	1 &	198 &	3 &	8 &	6\\
\hline  
\end{tabular}
\end{table}

\qquad

\begin{table}[H]
    \caption{Preferred Extension with Attack Probability $0.50$ for larger AAF Sizes}
	\label{table:preferred_scaled_approx_50}
	\centering\renewcommand\arraystretch{1.2}
\begin{tabular}{ |p{1cm}|p{1cm}|p{1.5cm}|p{2cm}| p{2cm}|p{1.5cm}|p{1.5cm}|p{1.5cm}| }
 \hline
AAF Size & Ext Size & Attacks & Preference Sets & Preferences & CTime (ms) & VTime$1$ (ms) & VTime$2$ (ms)\\
 \hline
5 &	2 &	11 &	1 &	5 &	0 &	2 &	1\\
\hline 
10 &	3 &	45 &	1 &	16 &	1 &	5 &	3\\
\hline 
15 &	3 &	104 &	1 &	27 &	1 &	5 &	2\\
\hline 
20 &	3 &	190 &	1 &	37 &	1 &	5 &	3\\
\hline 
25 &	3 &	302 &	1 &	52 &	2 &	5 &	4\\
\hline 
30 &	4 &	438 &	1 &	73 &	2 &	7 &	5\\
\hline 
35 &	4 &	592 &	1 &	91 &	2 &	7 &	6\\
\hline 
40 &	4 &	785 &	1 &	110 &	2 &	9 &	7\\
\hline 
45 &	4 &	984 &	1 &	123 &	2 &	8 &	6\\
\hline 
50 &	4 &	1229 &	1 &	141 &	2 &	8 &	7\\
\hline 
55 &	4 &	1488 &	1 &	154 &	2 &	8 &	6\\
\hline 
60 &	4 &	1774 &	1 &	179 &	2 &	8 &	7\\
\hline 
\end{tabular}
\end{table}

\qquad

\begin{table}[H]
    \caption{Preferred Extension with Attack Probability $0.75$ for larger AAF Sizes}
	\label{table:preferred_scaled_approx_75}
	\centering\renewcommand\arraystretch{1.2}
\begin{tabular}{ |p{1cm}|p{1cm}|p{1.5cm}|p{2cm}| p{2cm}|p{1.5cm}|p{1.5cm}|p{1.5cm}| }
 \hline
AAF Size & Ext Size & Attacks & Preference Sets & Preferences & CTime (ms) & VTime$1$ (ms) & VTime$2$ (ms)\\
 \hline
5 &	2 &	15 &	1 &	5 &	0 &	2 &	3\\
\hline 
10 &	2 &	70 &	1 &	15 &	0 &	2 &	3\\
\hline 
15 &	2 &	157 &	1 &	24 &	0 &	5 &	7\\
\hline 
20 &	2 &	288 &	1 &	34 &	0 &	4 &	5\\
\hline 
25 &	2 &	453 &	1 &	46 &	1 &	6 &	6\\
\hline 
30 &	2 &	651 &	1 &	53 &	2 &	7 &	6\\
\hline 
35 &	2 &	893 &	1 &	63 &	2 &	9 &	4\\
\hline 
40 &	2 &	1165 &	1 &	82 &	2 &	12 &	5\\
\hline 
45 &	3 &	1478 &	1 &	100 &	2 &	12 &	6\\
\hline 
50 &	2 &	1847 &	1 &	99 &	2 &	9 &	7\\
\hline 
55 &	2 &	2227 &	1 &	114 &	2 &	10 &	8\\
\hline 
60 &	3 &	2640 &	1 &	135 &	2 &	10 &	7\\
\hline 
\end{tabular}
\end{table}

\qquad

\begin{table}[H]
    \caption{Stable Extension with Attack Probability $0.25$ for larger AAF Sizes}
	\label{table:stable_scaled_approx_25}
	\centering\renewcommand\arraystretch{1.2}
\begin{tabular}{ |p{1cm}|p{1cm}|p{1.5cm}|p{2cm}| p{2cm}|p{1.5cm}|p{1.5cm}|p{1.5cm}| }
 \hline
AAF Size & Ext Size & Attacks & Preference Sets & Preferences & CTime (ms) & VTime$1$ (ms) & VTime$2$ (ms)\\
 \hline
5 &	2 &	6 &	1 &	4 &	0 &	2 &	2\\
\hline 
10 &	4 &	24 &	1 &	13 &	0 &	2 &	1\\
\hline 
15 &	5 &	50 &	1 &	24 &	0 &	1 &	3\\
\hline 
20 &	5 &	93 &	1 &	39 &	1 &	4 &	3\\
\hline 
25 &	6 &	148 &	1 &	52 &	1 &	5 &	4\\
\hline 
30 &	6 &	215 &	1 &	69 &	2 &	6 &	5\\
\hline 
35 &	6 &	300 &	1 &	87 &	2 &	5 &	5\\
\hline 
40 &	6 &	388 &	1 &	93 &	2 &	8 &	4\\
\hline 
45 &	7 &	497 &	1 &	125 &	2 &	5 &	6\\
\hline 
50 &	7 &	627 &	1 &	140 &	2 &	6 &	6\\
\hline 
55 &	8 &	740 &	1 &	165 &	3 &	8 &	7\\
\hline 
60 &	8 &	873 &	1 &	179 &	2 &	6 &	6\\
\hline 
\end{tabular}
\end{table}

\qquad

\begin{table}[H]
    \caption{Stable Extension with Attack Probability $0.50$ for larger AAF Sizes}
	\label{table:stable_scaled_approx_50}
	\centering\renewcommand\arraystretch{1.2}
\begin{tabular}{ |p{1cm}|p{1cm}|p{1.5cm}|p{2cm}| p{2cm}|p{1.5cm}|p{1.5cm}|p{1.5cm}| }
 \hline
AAF Size & Ext Size & Attacks & Preference Sets & Preferences & CTime (ms) & VTime$1$ (ms) & VTime$2$ (ms)\\
 \hline
5 &	2 &	10 &	1 &	5 &	0 &	1 &	4\\
\hline 
10 &	2 &	46 &	1 &	15 &	1 &	3 &	1\\
\hline 
15 &	3 &	105 &	1 &	26 &	1 &	3 &	2\\
\hline 
20 &	3 &	193 &	1 &	42 &	1 &	6 &	4\\
\hline 
25 &	3 &	302 &	1 &	51 &	2 &	7 &	4\\
\hline 
30 &	4 &	442 &	1 &	72 &	2 &	8 &	4\\
\hline 
35 &	4 &	595 &	1 &	88 &	2 &	8 &	6\\
\hline 
40 &	4 &	777 &	1 &	103 &	2 &	8 &	6\\
\hline 
45 &	4 &	996 &	1 &	126 &	2 &	6 &	6\\
\hline 
50 &	4 &	1231 &	1 &	140 &	2 &	10 &	6\\
\hline 
55 &	4 &	1498 &	1 &	160 &	2 &	8 &	7\\
\hline 
60 &	4 &	1762 &	1 &	182 &	2 &	8 &	7\\
\hline 
\end{tabular}
\end{table}

\qquad

\begin{table}[H]
    \caption{Stable Extension with Attack Probability $0.75$ for larger AAF Sizes}
	\label{table:stable_scaled_approx_50}
	\centering\renewcommand\arraystretch{1.2}
\begin{tabular}{ |p{1cm}|p{1cm}|p{1.5cm}|p{2cm}| p{2cm}|p{1.5cm}|p{1.5cm}|p{1.5cm}| }
 \hline
AAF Size & Ext Size & Attacks & Preference Sets & Preferences & CTime (ms) & VTime$1$ (ms) & VTime$2$ (ms)\\
 \hline
 5 &	1 &	16 &	1 &	4 &	0 &	2 &	2\\
\hline 
10 &	1 &	71 &	1 &	11 &	0 &	2 &	3\\
\hline 
15 &	2 &	156 &	1 &	25 &	0 &	3 &	5\\
\hline 
20 &	2 &	289 &	1 &	33 &	1 &	5 &	4\\
\hline 
25 &	2 &	445 &	1 &	41 &	1 &	5 &	3\\
\hline 
30 &	2 &	651 &	1 &	53 &	1 &	7 &	6\\
\hline 
35 &	2 &	890 &	1 &	66 &	2 &	10 &	5\\
\hline 
40 &	2 &	1166 &	1 &	79 &	2 &	10 &	7\\
\hline 
45 &	2 &	1480 &	1 &	93 &	2 &	10 &	6\\
\hline 
50 &	2 &	1832 &	1 &	100 &	2 &	9 &	6\\
\hline 
55 &	2 &	2228 &	1 &	119 &	2 &	10 &	7\\
\hline 
60 &	2 &	2656 &	1 &	130 &	2 &	8 &	8\\
\hline 
\end{tabular}
\end{table}








\end{document}